\newtheorem{definition}{Definition}
\newtheorem{theorem}{Theorem}[section]
\newtheorem{lemma}[theorem]{Lemma}
\theoremstyle{remark}
\newtheorem{remark}[theorem]{Remark}
\newcommand{\RR}{\mathbb{R}}
\newcommand{\OO}{\mathcal{O}}
\newcommand{\T}{\mathcal{T}}
\newcommand{\D}{\mathcal{D}}
\newcommand{\eps}{\varepsilon}
\newcommand{\wtilde}[1]{\widetilde{#1}}
\DeclareMathOperator*{\argmin}{\arg\!\min}
\title{Spectral Top-Down Recovery of Latent Tree Models}
\date{}
\author[1]{Yariv Aizenbud \footnote{YA and AJ contributed equally to this work}}
\author[1]{Ariel Jaffe$^*$}
\author[5]{Meng Wang}
\author[1]{Amber Hu}
\author[1]{Noah Amsel}
\author[2]{Boaz Nadler}
\author[3]{Joseph T. Chang}
\author[1,4,5]{Yuval Kluger}
\affil[1]{\small Program in Applied Mathematics, Yale University, New Haven, CT 06511}
\affil[2]{Department of Computer Science, Weizmann Institute of Science, Rehovot, 76100, Israel}
\affil[3]{Department of Statistics, Yale University, New Haven, CT 06520, USA}
\affil[4]{Interdepartmental Program in Computational Biology and Bioinformatics, Yale University, New Haven, CT 06511}
\affil[5]{Department of Pathology, Yale University New Haven, CT 06511}
\begin{document}

\maketitle
\begin{abstract}
Modeling the distribution of high dimensional data by a latent tree graphical model is a prevalent approach in multiple scientific domains. 
A common task is to infer the underlying tree structure, given only observations of its terminal nodes. 
Many algorithms for tree recovery are computationally intensive, which limits their applicability to trees of moderate size.
For large trees, a common approach, termed \textit{divide-and-conquer}, is to recover the tree structure in two steps. First, recover the structure separately of multiple, possibly random subsets of the terminal nodes. Second, merge the resulting subtrees to form a full tree. Here, we 
develop Spectral Top-Down Recovery (STDR), a deterministic divide-and-conquer approach to infer large latent tree models. 
Unlike previous methods, STDR partitions the terminal nodes in a non random way, based on the Fiedler vector of a suitable Laplacian matrix related to the observed nodes. We prove that under certain conditions, this partitioning is consistent with the tree structure. This, in turn, leads to a significantly simpler merging procedure of the small subtrees.
We prove that STDR is statistically consistent and bound the number of samples required to accurately recover the tree with high probability. Using simulated data from several common tree models in phylogenetics,
we demonstrate that STDR has a significant advantage in terms of runtime, with improved or similar accuracy.
\end{abstract}


\section{Introduction}\label{sec:introduction}
Learning the structure of latent tree graphical models is a common task in machine learning 
\cite{anandkumar2012learning, choi2011learning,harmeling2010greedy, mourad2013survey,zhang2008latent} and computational biology \cite{jones2020inference,jones2013evolutionary}.
A canonical application is phylogenetics, where the task is to infer the evolutionary tree that describes the relationship between a group of biological species based on their nucleotide or protein sequences \cite{felsenstein2004inferring,nei2000molecular,semple2003phylogenetics}.
Depending on the  application, 
the number of observed nodes ranges from a dozen and up to tens of thousands. 

In latent tree graphical models, every node is associated with a random variable. 
A key assumption is that the given data corresponds to the terminal nodes of a tree, while the set of unobserved internal nodes determines its distribution.
In phylogenetics, the terminal nodes are existing organisms, while the non-terminal nodes correspond to their extinct ancestors. 
Given a set of nucleotide or amino acid sequences as in Figure \ref{fig:model_tree}, the task is to recover the structure of the tree, which
describes how the observed organisms evolved from their ancestors.

Many algorithms have been developed for recovering latent trees. Distance-based methods, including the classic neighbor joining (NJ) \cite{saitou1987neighbor} and UPGMA \cite{sokal1958statistical}, recover the tree based on a distance measure between all pairs of terminal nodes. 
These methods are computationally efficient and thus applicable to large trees \cite{tamura2004prospects}. They also have 
statistical guarantees for accurate recovery \cite{atteson1999performance,mihaescu2009neighbor}. 
Since the distance measure does not encapsulate all the information available from the sequences, distance-based methods may perform poorly when the amount of data is limited \cite{yang2012molecular}.
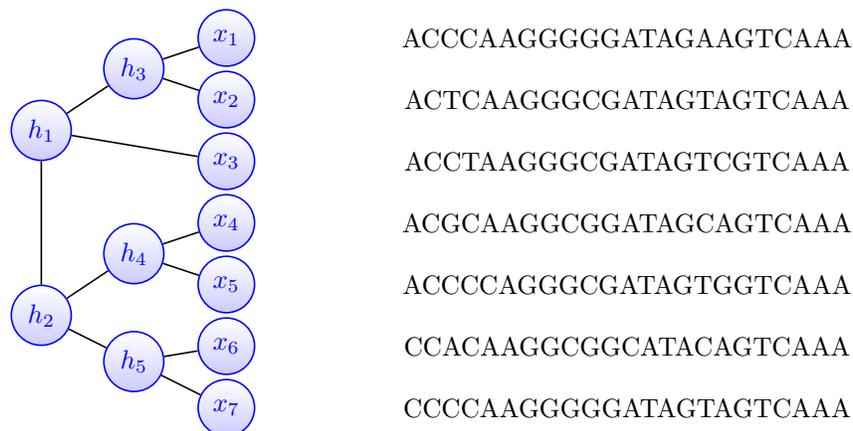
\begin{figure}[t]
    \centering
		\begin{tikzpicture}[-latex ,auto ,node distance =4 cm and 5cm ,on grid ,
		semithick ,  scale = 0.82,
		state/.style ={ circle ,top color =white , bottom color = blue!20 ,
			draw,blue , text=blue , minimum width = 0.75 cm}]	
		
		\node[state] (h2) at (1.5,4.5) {$h_1$};
		\node[state] (h3) at (1.5,1.5) {$h_2$};
		\node[state] (h4) at (3,5.5) {$h_3$};
		\node[state] (h5) at (3,2.5) {$h_4$};
		\node[state] (h6) at (3,0.75) {$h_5$};
		\node[state] (x1) at (4.5,6) {$x_1$};
		\node[state] (x2) at (4.5,5) {$x_2$};
		\node[state] (x3) at (4.5,4) {$x_3$};
		\node[state] (x4) at (4.5,3) {$x_4$};
		\node[state] (x5) at (4.5,2) {$x_5$};
		\node[state] (x6) at (4.5,1) {$x_6$};
		\node[state] (x7) at (4.5,0) {$x_7$};
		\path[-] (h2) edge node [above =0.15 cm,left = 0.15cm] {}(h3);
		\path[-] (h2) edge node [above =0.15 cm,left = 0.15cm] {}(h4);
		\path[-] (h3) edge node [above =0.15 cm,left = 0.15cm] {}(h5);
		\path[-] (h3) edge node [above =0.15 cm,left = 0.15cm] {}(h6);
		\path[-] (h4) edge node [above =0.15 cm,left = 0.15cm] {}(x1);
		\path[-] (h4) edge node [above =0.15 cm,left = 0.15cm] {}(x2);		
		\path[-] (h2) edge node [above =0.15 cm,left = 0.15cm] {}(x3);		
		\path[-] (h5) edge node [above =0.15 cm,left = 0.15cm] {}(x4);		
		\path[-] (h5) edge node [above =0.15 cm,left = 0.15cm] {}(x5);			
		\path[-] (h6) edge node [above =0.15 cm,left = 0.15cm] {}(x6);
		\path[-] (h6) edge node [above =0.15 cm,left = 0.15cm] {}(x7);				
		\node at (11,0) {CCCCAAGGGGGATAGTAGTCAAA};
		\node at (11,1) {CCACAAGGCGGCATACAGTCAAA};
		\node at (11,2) {ACCCCAGGGCGATAGTGGTCAAA};
		\node at (11,3) {ACGCAAGGCGGATAGCAGTCAAA};
		\node at (11,4) {ACCTAAGGGCGATAGTCGTCAAA};
		\node at (11,5) {ACTCAAGGGCGATAGTAGTCAAA};
		\node at (11,6) {ACCCAAGGGGGATAGAAGTCAAA};
    					
	\end{tikzpicture}
	
	\caption{A tree with $m=7$ observed nodes. The data consists of a sequence of characters at every terminal node.} 
	\label{fig:model_tree}
\end{figure}

A different approach for  tree recovery is based on spectral properties of the input data  \cite{allman2007molecular,eriksson2005tree}. Several methods work top-down, repeatedly applying spectral partitioning to the terminal nodes until each partition contains a single node \cite{matsui2020graph,zhang2011phylogeny}. However, there is no theoretical guarantee that the partitions match the structure of the tree. 
Of direct relevance to this manuscript is the recently proposed spectral neighbor joining (SNJ) \cite{jaffe2020spectral}, which consistently recovers the tree based on a spectral criterion.
Similarly to NJ, SNJ is a bottom-up method, which iteratively merges subsets of nodes to recover the tree.

Perhaps one of the most accurate approaches for tree recovery is to search for the topology that maximizes the likelihood of the observed data \cite{felsenstein2004inferring}.
Since computing the likelihood for every possible topology is intractable, many methods apply a local search to iteratively increase the likelihood function ~\cite{guindon2003simple,price2010fasttree,stamatakis2014raxml,zhou2018evaluating}. Though there is no guarantee that such a process will converge to the global maximum of the likelihood function, in many settings the resulting tree is more accurate than the one obtained by distance-based methods. The main disadvantage of likelihood-based algorithms is their slow runtime, which limits their applicability to trees of moderate size.  

With the dramatic increase in the sizes of measured datasets, there is a pressing need to develop fast tree recovery algorithms, able to handle trees with tens of thousands of nodes~\cite{tamura2004prospects,sanderson2003challenge}.
For example, the recently developed GESTALT method combines scRNA-seq readouts with CRISPR/Cas9 induced mutations to perform lineage tracing on tens of thousands of cells. \cite{quinn2021single,simeonov2021single}. For the multispecies coalescent model, recent works recover multiple gene trees, where each tree is composed of thousands of genes \cite{mirarab2015astral}. Recently, many works recovered the evolutionary history of the SARS-COV-2 virus, with over ten thousand variants \cite{morel2021phylogenetic}.   

Tree recovery problems with thousands of terminal nodes pose a significant computational challenge, as even distance-based methods may prove to be too slow.
To improve the scalability of slow but accurate methods such as maximum likelihood, a common framework known as \textit{divide-and-conquer} is to recover the tree by a two-step process \cite{molloy2019treemerge,warnow2018supertree}: 
(i) infer the tree structure independently for a large number of small possibly random subsets of terminal nodes;
(ii) compute the full tree by merging the small trees obtained in step (i). 
In \textit{supertree} methods, the small subsets of terminal nodes in step (i) overlap. Their merging step requires optimizing  a non-convex objective, which is computationally hard \cite{jiang2001polynomial,hillis1996molecular}. Thus, most supertree methods circumvent global optimization problems by iterative approaches for step (ii) \cite{strimmer1996quartet,warnow2018supertree}.
Recently, several methods were derived to merge subtrees with disjoint terminal nodes \cite{molloy2019treemerge, molloy2019statistically}. 
To apply these algorithms in a divide-and-conquer pipeline, 
the terminal nodes are partitioned according to an initial tree estimate computed by NJ.  
Despite these works, the problem of reconstructing large trees from limited amount of data is not yet fully resolved. In particular, there is still a need for fast and scalable approaches that also have strong recovery guarantees.

\paragraph{Contributions and outline} 
In this work we develop 
Spectral Top-Down Recovery (STDR), a scalable divide-and-conquer  approach backed by theoretical guarantees to recover large trees.
In contrast to previous methods, the partitioning of the terminal nodes in step (i) is deterministic.  Importantly, we prove that under mild assumptions the partitions are consistent with the unobserved tree structure. The importance of this consistency is that it simplifies considerably the merging process in step (ii) of the algorithm. 
Since  STDR is recursive, it is  instructive to replace the standard divide-and-conquer two step outline, with the following recursive description. 

\begin{itemize}
    \item[(i)] Partitioning: split the terminal nodes into two subsets.
    
    \item[(ii)] Recursive reconstruction: infer the latent tree of each subset. 
    When the partition size falls below a given threshold $\tau$, the tree is recovered by a  user-specified algorithm. Above this threshold, the reconstruction is done by recursively applying STDR to each subset. 
    
    \item[(iii)] Merging: reconstruct the full tree by merging the two small trees.
\end{itemize}
Each of the above three steps is explained in detail in Section~\ref{sec:algorithm}.
In step (i) we apply spectral partitioning to  
a weighted complete graph, with nodes that correspond to the terminal nodes of the tree and  weights based on a similarity measure described in Section \ref{sec:similarity_matrix}. 
In Section \ref{sec:consistency_partition} we prove that given an accurate estimate of these similarities, step (i) is consistent in the sense that the resulting subsets belong to two disjoint subtrees. For this proof, we derive a novel relation between latent tree models and a classic result from spectral graph theory known as Fiedler's theorem of nodal domains \cite{fiedler1975property}. This theorem is important in various learning tasks such as clustering data \cite{von2007tutorial}, graph partitioning \cite{ding2001min}, and low dimensional embeddings \cite{jaffe2020spectralb}.
To the best of our knowledge, this is the first guarantee for spectral partitioning in the setting of latent tree models. 

The output of step (ii) is the inner structure of two disjoint subtrees.
The task in step (iii) is to merge them into the full tree. 
 In Section \ref{sec:tree_merging}, we show that this task is equivalent to finding the root of an unrooted tree, given a reference set of one or more sequences, also known as an \textit{outgroup}. We derive a novel spectral-based method to find the root and prove its statistical consistency in Section \ref{sec:consistency_merging}. This approach is of independent interest, as finding the root of a tree is a common challenge in phylogenetics \cite{barriel1998rooting,boykin2010comparison,kinene2016rooting}.
Finite sample guarantees for the Jukes-Cantor model of evolution are derived in Section \ref{sec:finite_sample}. 

In Section \ref{sec:experiments} we compare the accuracy and runtime of various methods when applied to recover the full tree directly versus when used as subroutines in step (ii) of STDR. 
For example, Figure \ref{fig:kingman_performance} shows the results of recovering simulated trees with $2000$ terminal nodes generated according to the coalescent model \cite{wakeley2009coalescent}. 
As one baseline, we applied RAxML \cite{stamatakis2014raxml}, one of the most popular maximum likelihood software packages in phylogenetics.
With 8,000 samples, RAxML took over $5 \frac12$ hours to complete. 
In contrast, STDR with RAxML as subroutine and a threshold $\tau=128$ 
took approximately $21$ minutes, more than an order of magnitude faster. Importantly, in this setting, the trees recovered via STDR have similar accuracy to those obtained by applying RAxML directly. 
These and other simulation results illustrate the potential benefit of STDR in recovering large trees.

\section{Problem setup}\label{sec:setup}
	Let $\T$ be an unrooted binary tree with $m$ terminal nodes.
	We assume that each node of the tree has an associated discrete random variable
	over the alphabet $\{1,\ldots,\ell\}$. We denote by 
	$\bm x = (x_1,\ldots,x_m)$ the vector of the random variables at the $m$ observed terminal nodes of the tree, and by $\bm h = (h_1,\ldots,h_{m-2})$ the random variables at the non-terminal nodes. 
	We assume that these random variables form a Markov random field on $\T$.
    This means that given the values of its neighbors, the random variable at a node is statistically independent of the rest of the tree~\cite{chang1996full}. 
    An edge $e(h_i,h_j)$ connecting a pair of adjacent nodes $(h_i,h_j)$ is equipped with two
    transition matrices of size $\ell \times \ell$,
    \begin{equation}\label{eq:transition_matrix}
	P(h_i|h_j)_{ba} = \Pr[h_i=b | h_j=a], \qquad P(h_j|h_i)_{ba} = \Pr[h_j=b | h_i=a].
	\end{equation}
	Note that every pair of adjacent nodes may in general have different transition matrices. 
	
	Our observed data is a matrix $X=[\bm x^{(1)},\ldots,\bm x^{(n)}] \in \{1,\ldots ,\ell\}^{m \times n}$, where $\bm x^{(j)}$ are random i.i.d. realizations of $\bm x = (x_1,\ldots,x_m)$. Each row in the matrix is a sequence of length $n$ that corresponds to a terminal node in the tree, see illustration in Figure \ref{fig:model_tree}. 
	For example, in phylogenetics, each row in the matrix corresponds to a different species, while each column corresponds to a different location in a DNA sequence, see \cite{durbin1998biological} and references therein. 
	Figure \ref{fig:model_tree} shows an example with $m=7$ terminal nodes and $n=23$ observations. The support of each node is the DNA alphabet $A,C,G,T$, so $\ell=4$. 

	Given the matrix $X$,
	the task at hand is to recover the structure of the hidden tree $\T$. We assume that for every pair of adjacent nodes $(h_i,h_j)$, the corresponding $\ell \times \ell$ stochastic matrices $P(h_i|h_j)$ and $P(h_j|h_i)$ defined in \eqref{eq:transition_matrix}  are full rank, with determinants that satisfy
	\begin{equation} 
		\label{eq:assumption_1}
		0 < \delta < \det(P(h_i|h_j)),\det(P(h_j|h_i))< \xi < 1 .
	\end{equation}
	Eq. \eqref{eq:assumption_1} implies that the transition matrices are invertible and are not permutation matrices. This assumption is necessary for the tree's topology to be identifiable, see Proposition 3.1 in \cite{chang1996full} and  \cite{mossel2005learning}.
Next, to describe our approach we present several definitions related to unrooted trees, following the terminology of \cite{wilkinson2007clades}. 
\begin{definition}[clan]\label{def:clan}
    A clan is a subset of nodes in $\T$ that is connected to the rest of the tree by a single edge. 
\end{definition}
\begin{definition}[the root of a clan]\label{def:clan_root} A non-terminal node $h$ is termed the root of a clan $C$ if $h \in C$ and it is connected to the edge that separates $C$ from the rest of the tree.
\end{definition}
For example, in Figure \ref{fig:model_tree} $h_4$ and $h_5$ are the root nodes of the clans $C_1 = \{x_6,x_7,h_5\}$ and $C_2 = \{x_4,x_5,x_6,x_7,h_2,h_4,h_5\}$, respectively.
In our work, we will sometimes refer to the clans by their terminal nodes only (e.g. $\{x_6,x_7\}$ and $\{x_4,x_5,x_6,x_7\}$ for $C_1$ and $C_2$). 
\begin{definition}[adjacent clans]\label{def:adj_clans}
Let $C_1$ and $C_2$ be two disjoint subsets of terminal nodes that form two clans. If the union $C_1 \cup C_2$ forms a clan, then $C_1$ and $C_2$ are adjacent clans.
\end{definition}
Two disjoint clans whose respective root nodes share a common neighboring node are adjacent clans. For example, in Figure \ref{fig:model_tree} the clans $C_1 = \{x_4,x_5\}$ and $C_2 = \{x_6,x_7\}$ are adjacent. Their respective root nodes $h_4$ and $h_5$ are adjacent to $h_2$. This observation is important for the merging step of STDR.

\section{A spectral top-down approach for tree reconstruction}\label{sec:algorithm}
Here we present the three steps of the Spectral Top-Down Recovery (STDR) algorithm, as outlined in the introduction. Pseudocode for the method appears in Algorithm~\ref{alg:description}. We begin with the definition and properties of the similarity matrix and similarity graph.


\begin{algorithm}[t]
	\caption{STDR: Spectral Top-Down  Recovery}
	\label{alg:description}
	\begin{algorithmic}[1]
		\Statex {\bfseries Input:}\begin{tabular}[t]{ll}
            $X \in \{1,\ldots, \ell\}^{m \times n}$ & A matrix containing sequences from $m$ terminal nodes \\
            $\tau \in \mathbb{N}$ & Partition size threshold \\
            Alg & An algorithm for recovering small tree structures
		\end{tabular}
		\Statex {\bfseries Output:}\begin{tabular}[t]{ll}
				$\T$ & Estimated tree\\
		\end{tabular}
		\If {number of terminal nodes $m \leq \tau$  }
		    \State \Return Alg(X) \Comment{Recover small tree structures by a user defined algorithm}
		\EndIf
		\State Compute the similarity matrix $S$ from $X$ via Eq. \eqref{eq:similarity}
		\State Compute the Fiedler vector $v$ of $S$ 
		\Statex \(\triangleright\) \underline{Partitioning step}\medskip
		\State Partition the terminal nodes into two subsets $C_1$ and $C_2$ by thresholding $v$ via Eq. \eqref{eq:spectral_partition}
		\label{alg:partition_step}
		\Statex \(\triangleright\) \underline{Recursive reconstruction step}\medskip
		\State \label{algstep:stdr1}$\T_1 = \mathrm{STDR}(X(C_1, :), \tau, \mathrm{Alg})$ 
		\State  \label{algstep:stdr2} $\T_2 = \mathrm{STDR}(X(C_2, :), \tau, \mathrm{Alg})$ 
		\Statex \(\triangleright\) \underline{Merging step}\medskip
		\State Compute $u$, the first left singular vector of $S(C_1,C_2)$
		\For {all edges $e$ in $\T_1$}
		    \State Compute the edge score $d(e)$ from $u$ via Eq. \eqref{eq:edge_score}
		\EndFor
		\State Insert a root node for $\T_1$ into the edge $e_1 =\argmin_{e \in \T_1} d(e) $
		\State Compute $v$, the first right singular vector of $S(C_1, C_2)$
		\For {all edges $e$ in $\T_2$}
		    \State Compute the edge score $d(e)$ from $v$ via Eq. \eqref{eq:edge_score}
		\EndFor
		\State Insert a root node for $\T_2$ into the edge $e_2 =\argmin_{e \in \T_2} d(e) $
		\State Connect the roots of $\T_1$ and $\T_2$ to construct the merged tree $\T$
		\State \Return $\T$
	\end{algorithmic}
\end{algorithm}

\subsection{The pairwise similarity matrix and similarity graph}\label{sec:similarity_matrix}
	
	Similar to Eq. \eqref{eq:transition_matrix}, we define the $\ell \times \ell$ transition matrix for every pair $h_i,h_j$ of (not necessarily adjacent) nodes by
	\[
	P(h_i|h_j)_{ba} = \Pr[h_i=b|h_j=a].
	\]
	Note that due to the Markov assumption, the transition matrix is multiplicative along the edges of the tree. For example in Figure \ref{fig:model_tree},
	$P(x_1|x_2) = P(x_1|h_3)P(h_3|x_2)$.
	In \cite{jaffe2020spectral}, a similarity function between a pair of nodes $h_i$ and $h_j$ was defined as follows:
	\begin{equation}\label{eq:adjacent_similarity}
	S(h_i,h_j) = \sqrt{\det(P(h_i|h_j))\det(P(h_j|h_i))}.
	\end{equation}
	Similar to the transition matrix, the similarity is multiplicative along the edges of the tree and is bounded by $ \delta \leq S(h_i,h_j) \leq \xi$. Thus, it exhibits an exponential decay along the tree.
	For any two ordered sets of terminal or non-terminal nodes $A = \{a_1,\ldots a_r\}$ and $B= \{b_1,\ldots b_s\}$, we denote by $S(A,B)$ a matrix of size $r\times s$, where 
	\[
	S(A,B)_{ij} = S(a_i,b_j)~~~~\mbox{ for all}~1\leq i \leq r \mbox{ and } 1\leq j\leq s.
	\]
	To simplify notation, for the case where $A$ and $B$ are both equal to the full set of terminal nodes, we denote the similarity matrix by $S$:
	\begin{equation}\label{eq:similarity}
	S = S(\bm x,\bm x) ~~~~\mbox{where}~ \bm x = \{x_1,\ldots, x_m\}.
	\end{equation}
	where by definition, $S_{ii}=1$ $\forall(i)$. 
	The matrix $S$ is the adjacency matrix of the following graph.
\begin{definition}[Similarity graph]\label{def:similarity_graph}
The similarity graph $G$ is a complete graph whose vertices are the terminal nodes of $\T$. The weight assigned to every edge $e(x_i,x_j)$ is the similarity $S(x_i,x_j)$. 
\end{definition}
The relation between the spectral properties of $G$ and the topology of $\T$ forms the theoretical basis of our approach. 
	The following result from \cite[Lemma 3.1]{jaffe2020spectral} shows how the spectral structure of the similarity matrix $S$ relates to the structure of the underlying tree. 
	
	\begin{lemma}\label{lem:affinity_spectral}
        Let $A$ and $B$ be 
        a partition of the terminal nodes of an unrooted binary tree $\T$.
        The matrix $S(A,B)$ is rank-one if and only if $A$ and $B$ are clans of $\T$.
	\end{lemma}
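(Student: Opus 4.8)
The plan is to prove both directions using the multiplicativity of the similarity function along the edges of the tree, which follows from the Markov property as noted after Eq.~\eqref{eq:adjacent_similarity}. First I would handle the ``if'' direction. Suppose $A$ and $B$ are both clans; since they partition the terminal nodes, they must be adjacent clans separated by a single edge $e(h_A, h_B)$, where $h_A$ is the root of $A$ and $h_B$ is the root of $B$. For any $x_i \in A$ and $x_j \in B$, the unique path in $\T$ from $x_i$ to $x_j$ passes through $h_A$ and then $h_B$, so multiplicativity gives $S(x_i, x_j) = S(x_i, h_A)\, S(h_A, h_B)\, S(h_B, x_j)$. Hence $S(A,B)_{ij} = a_i b_j$ with $a_i = S(x_i, h_A)\, S(h_A, h_B)$ and $b_j = S(h_B, x_j)$, so $S(A,B) = \bm a \bm b^\top$ is rank one. (One should note $S(h_A,h_B) \ge \delta > 0$ and all factors are positive, so the vectors are genuinely nonzero and the rank is exactly one, not zero.)

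For the ``only if'' direction I would argue by contrapositive: assume that $A$ and $B$ is a partition of the terminal nodes but at least one of them, say $A$, is not a clan, and show $S(A,B)$ has rank at least two. The key structural fact is that if a partition $(A,B)$ of the terminal nodes does not correspond to a single edge cut, then one can find two terminal nodes $x_1, x_2 \in A$ and two terminal nodes $x_3, x_4 \in B$ whose induced quartet topology in $\T$ is $x_1 x_3 | x_2 x_4$ or $x_1 x_4 | x_2 x_3$ — i.e., the split $A|B$ is ``crossed'' by some quartet. Concretely, pick the edge on the path structure that is most ``balanced'' with respect to $A$ and $B$; if no single edge separates $A$ from $B$, there is an internal node with the $A$-nodes and $B$-nodes distributed among at least two of its three incident subtrees on each side, which produces such a crossing quartet. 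Then I would restrict $S(A,B)$ to the $2\times 2$ submatrix indexed by $\{x_1,x_2\}\times\{x_3,x_4\}$ and compute its determinant using multiplicativity; because the four paths $x_i \leadsto x_j$ do not all factor through a common pair of ``gateway'' nodes, the determinant is a product of similarity values along the path differences and is bounded away from zero (each edge similarity lies in $[\delta,\xi] \subset (0,1)$), so the $2\times 2$ minor is nonzero and $S(A,B)$ has rank $\ge 2$.

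The main obstacle is making the combinatorial claim in the ``only if'' direction precise: showing that whenever $(A,B)$ is not a clan bipartition, a crossing quartet exists, and then writing the $2\times 2$ determinant cleanly as a nonzero product. The cleanest route is probably to root $\T$ on an edge incident to a degree-three internal node and do induction on $|A\cup B|$, or alternatively to invoke the standard phylogenetic fact that the set of splits displayed by $\T$ is exactly the set of edge cuts, together with a four-point style argument on similarities (analogous to the four-point condition for additive distances, using $-\log S$ as an additive edge metric since $S$ is multiplicative and bounded in $(0,1)$). Passing to $-\log S$ turns multiplicativity into additivity and the rank-one condition into the statement that a certain matrix of path lengths is ``degenerate,'' at which point the classical four-point argument applies directly; I would likely present the proof in that logarithmic coordinate to keep the algebra transparent. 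Everything else — positivity, nonvanishing of the rank-one vectors, the bound $S \in [\delta,\xi]$ — is routine given Eq.~\eqref{eq:assumption_1}.
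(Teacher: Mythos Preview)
The paper does not actually supply a proof of this lemma: it is quoted verbatim as Lemma~3.1 of \cite{jaffe2020spectral} and used as a black box, so there is no in-paper argument to compare your proposal against.

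On its own merits, your plan is sound. The ``if'' direction is exactly the standard factorization $S(A,B)=S(A,h_A)\,S(h_A,h_B)\,S(h_B,B)^{\top}$ via multiplicativity, and your remark that all entries lie in $[\delta,\xi]\subset(0,1)$ correctly rules out the rank being zero. For the ``only if'' direction, the crossing-quartet strategy works: once you have $x_1,x_2\in A$, $x_3,x_4\in B$ with induced quartet topology $x_1x_3\,|\,x_2x_4$, passing to the additive metric $d=-\log S$ and invoking the four-point condition gives $d(x_1,x_3)+d(x_2,x_4)<d(x_1,x_4)+d(x_2,x_3)$ strictly (strictness uses $\xi<1$ on the internal quartet edge), whence the $2\times 2$ minor $S(x_1,x_3)S(x_2,x_4)-S(x_1,x_4)S(x_2,x_3)$ is strictly positive and $\mathrm{rank}\,S(A,B)\ge 2$.

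The only place your write-up is genuinely sketchy is the existence of the crossing quartet. Your parenthetical about ``an internal node with the $A$-nodes and $B$-nodes distributed among at least two of its three incident subtrees on each side'' is not quite the right formulation; what you actually need is an edge split $C|D$ of $\T$ with all four intersections $A\cap C,\,A\cap D,\,B\cap C,\,B\cap D$ nonempty, from which the quartet (one leaf per intersection) automatically has topology $x_1x_3\,|\,x_2x_4$ since $C|D$ is displayed by $\T$. A clean way to produce such an edge: if no edge works, orient every edge of $\T$ toward the side whose leaf set meets both $A$ and $B$ (uniqueness of that side follows since both sides monochromatic would make $A|B$ an edge split); a sink of this orientation is an internal degree-$3$ vertex all three of whose pendant subtrees are monochromatic, and then two of them share a color, again forcing $A|B$ to be an edge split --- contradiction. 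This replaces the induction you allude to and is shorter than the four-point route in logarithmic coordinates.
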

    
     Lemma \ref{lem:affinity_spectral} implies that given the exact similarity matrix $S$, one can determine if a subset $A$ of terminal nodes is a clan in $\T$ by 
     computing the rank of $S(A,A^c)$, where $A^c = \bm x \setminus A$.
In practice, the exact similarity matrix $S$ is unknown. Yet, as shown in \cite{jaffe2020spectral}, a sufficiently accurate estimate $\hat{S}$, which in general is full rank, still allows to determine if a subset is a clan.


\subsection{Tree partitioning via spectral clustering}\label{sec:tree_splitting}
The aim of step (i) of STDR is to partition the terminal nodes into two clans of $\T$.
Our approach is based on the similarity graph $G$ of Definition \ref{def:similarity_graph}. One possible way to partition the graph is by the min-cut criteria. Given the exact similarity, this approach is guaranteed to yield two clans, see Lemma \ref{lem:min_cut} in the appendix. 
Though the min-cut problem can be solved efficiently \cite{von2007tutorial}, it often leads to unbalanced partitions of the graph, with the smaller one containing $1$ or $2$ terminal nodes. Since one goal is to reduce the runtime of the reconstruction algorithm in step (ii), we would like to avoid imbalanced partitions. 
To this end, we propose to partition the terminal nodes via a spectral approach based on the Fiedler vector.
\begin{definition}[Graph Laplacian and Fiedler vector]\label{def:laplacian}
The Laplacian matrix of a graph $G$ with a symmetric weight matrix $W$ is given by $
L_G = D-W,$
where $D$ is a diagonal matrix with $D_{ii} = \sum_j W(x_i,x_j)$.
The Fiedler vector is the eigenvector of $L_G$ that corresponds to the second smallest eigenvalue.
\end{definition}

In the STDR algorithm, we use the Fiedler vector $v$ of the similarity graph $G$ to partition the terminal nodes into two subsets $C_1$ and $C_2$ (Algorithm \ref{alg:description}, line \ref{alg:partition_step}), as follows:
\begin{equation}\label{eq:spectral_partition}
C_1 = \{i; v(i)  \geq 0\}, \qquad C_2 = \{i; v(i)  < 0\}.
\end{equation}
Importantly, 
in Section \ref{sec:consistency_partition} we prove that partitioning the nodes of $G$ via Eq. \eqref{eq:spectral_partition} yields two clans of the underlying tree $\T$. 
To illustrate this point, we created a tree graphical model from a symmetric binary tree with $m=128$ nodes, see Figure \ref{fig:full_tree-tree}. The transition matrices between adjacent nodes are all identical and were chosen according to the HKY model \cite{hasegawa1985dating}. We used this model to generate a dataset of nucleotide sequences of length $n=1,000$. Figure \ref{fig:full_tree-eigs} shows the Fiedler vector of the similarity graph estimated from the dataset. Here, the Fiedler vector exhibits a single dominant gap, and partitioning the terminal nodes by Eq. \eqref{eq:spectral_partition} yields two sets  $C_1$ and $C_2$ which are indeed clans of $\T$.
A similar example is shown in the appendix for a tree generated according to the coalescent model. 
In Section \ref{sec:finite_sample_partition} we derive an expression for the number of samples required to obtain two clans with high probability.


\begin{figure}[t]
    \centering
    \begin{subfigure}[b]{0.45\textwidth}
        \includegraphics[width = 0.8\textwidth]{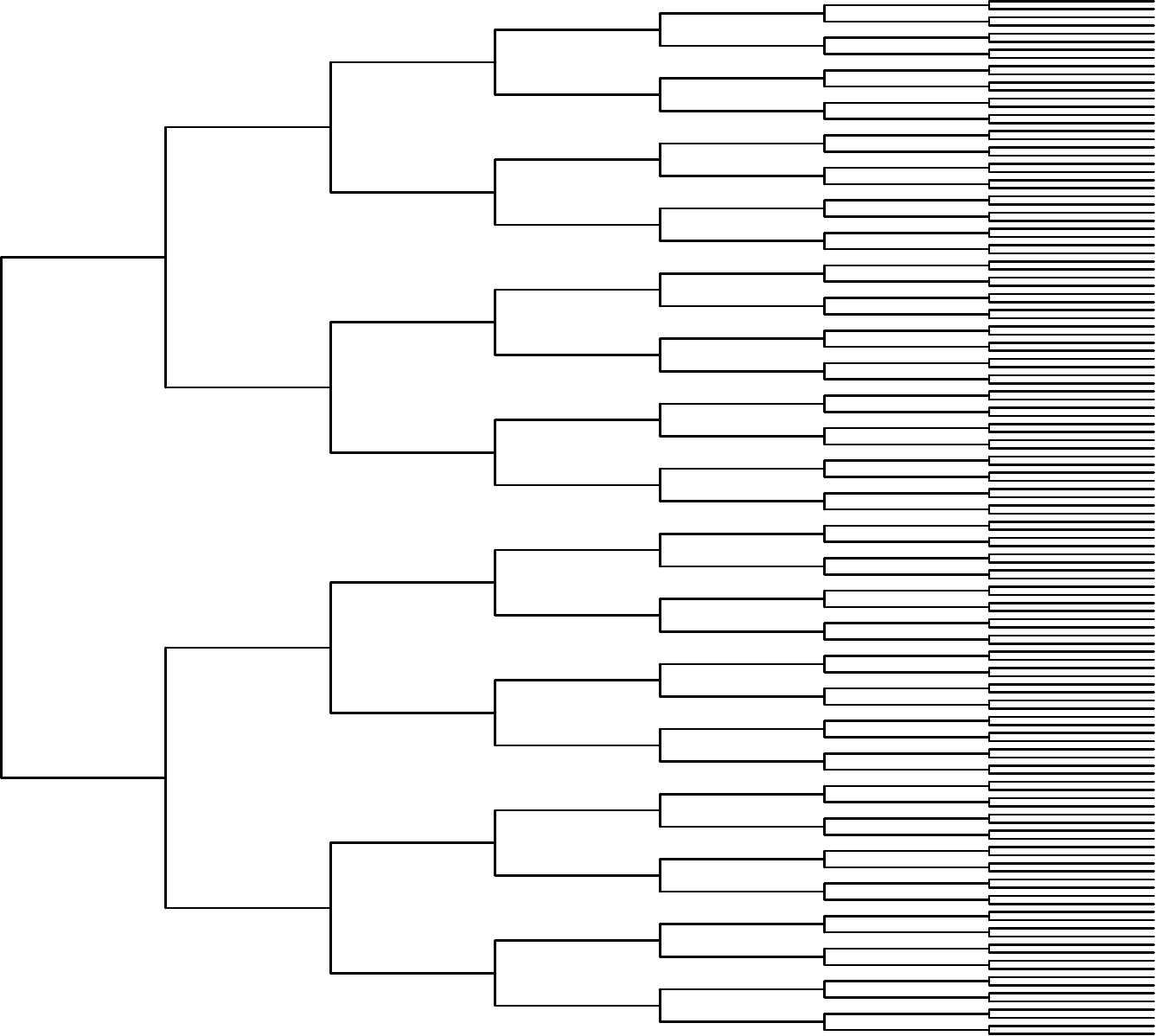}
        \caption{Illustration of a symmetric binary tree.}
        \label{fig:full_tree-tree}
 \end{subfigure}
    \begin{subfigure}[b]{0.53\textwidth}
        \includegraphics[width=\textwidth]{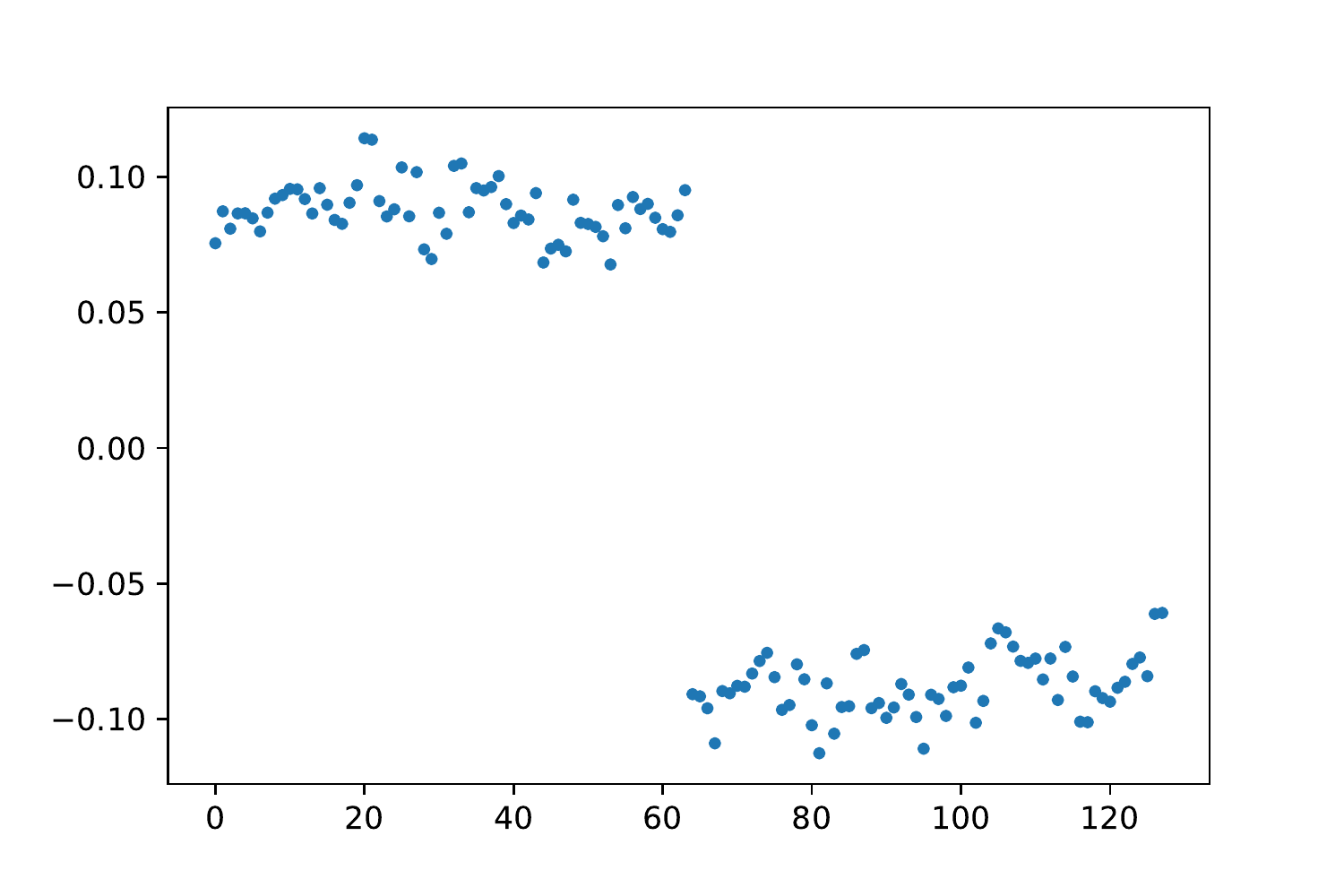}
        \caption{Fielder vector of binary symmetric tree}
        \label{fig:full_tree-eigs}
    \end{subfigure}
    \caption{Symmetric binary tree with 128 terminal nodes. The data consists of sequences of length $n=1000$ over the $\ell=4$ characters of the DNA alphabet, generated according to the HKY  model. }
    \label{fig:full_tree}
\end{figure}


\subsection{Recursive Reconstruction Step}
Step (i) of STDR outputs two sets of terminal nodes $C_1$ and $C_2$. Under certain conditions defined in Section \ref{sec:consistency_partition}, these are guaranteed to be two clans in the tree $\T$. 
The next task is to construct trees $\T_1$ and $\T_2$ that describe their latent internal structure. 
If $|C_1| > \tau$, then $\T_1$  is recovered by recursively reapplying the three steps of STDR to $C_1$.  
When $|C_1| \leq \tau$, the input is small enough that we consider it tractable to use a direct method for tree reconstruction, even a slow one like maximum likelihood. 


\subsection{Merging disjoint subtrees}\label{sec:tree_merging}

The output of step (ii) of STDR consists of the internal \textit{unrooted} tree structures $\T_1$ and $\T_2$ of two subsets of terminal nodes $C_1$ and $C_2$. 
Assuming steps (i) and (ii) were successful, then $C_1$ and $C_2$ are adjacent clans, and $\T_1$ and $\T_2$ are indeed their correct internal structure.
The remaining challenge in step (iii) is to recover the full tree $\T$ by correctly merging $\T_1$ and $\T_2$.

Since $\T_1$ and $\T_2$ are unrooted binary trees, to merge them it is necessary to add a root node to each of them. Adding a connecting edge between the two root nodes yields a binary unrooted tree and completes the merging process, see Figure \ref{fig:merge_trees} for an illustration. 
To add a root node to a subtree, we select one of its edges to be
the ``placeholder edge" (illustrated in red in Figure \ref{fig:merge_treesa}). 
Subsequently, the placeholder edge is  replaced with two edges connected to the root node. Importantly, as shown in Figure~\ref{fig:tree_clans}, changing the placeholder edge in either $\T_1$ or $\T_2$ yields a merged tree with a different topology.  

Thus, merging $\T_1$ and $\T_2$ reduces to the task of identifying the correct ``placeholder edge".
Here, we derive a novel spectral method for finding these edges.
To the best of our knowledge, our approach for merging subtrees is new and may be of independent interest for other applications, such as rooting unrooted trees \cite{kinene2016rooting,boykin2010comparison,barriel1998rooting}.  In the following lemma, whose proof is in Appendix \ref{appendix:sec_algorithm}, we describe a property of the placeholder edge that motivates our approach. 
 
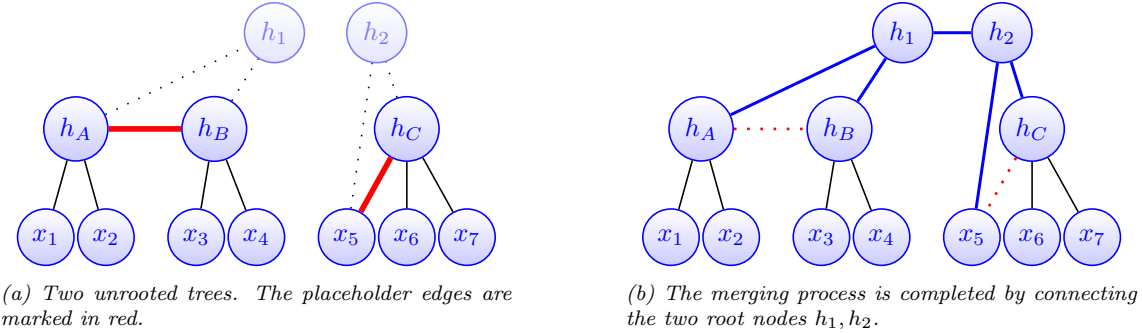
\begin{figure}[t]
    \begin{subfigure}[b]{0.45\textwidth}
      \centering
      \begin {tikzpicture}[-latex ,auto ,node distance =4 cm and 5cm ,on grid ,
		semithick ,scale=0.80,
		state/.style ={ circle ,top color =white , bottom color = blue!20 ,
			draw,blue , text=blue , minimum width = 0.75 cm}]	
		
		\node[state,  opacity =0.5] (h1) at (3.8,3.4) {$h_1$};
		
		\node[state] (hA) at (0.5,1.8) {$h_A$};
		\node[state] (hB) at (2.8,1.8) {$h_B$};
		
		\node[state] (T1) at (0,0) {$x_1$};
		\node[state] (T2) at (1,0) {$x_2$};
		\node[state] (T3) at (2.5,0) {$x_3$};
		\node[state] (T4) at (3.5,0) {$x_4$};
		
		\path[-] (hA) edge node [above =0.15 cm,left = 0.15cm] {}(T1);
		\path[-] (hA) edge node [above =0.15 cm,left = 0.15cm] {}(T2);
		\path[-] (hB) edge node [above =0.15 cm,left = 0.15cm] {}(T3);
		\path[-] (hB) edge node [above =0.15 cm,left = 0.15cm] {}(T4);
		
		\path[-, color = red, line width = 0.075 cm] (hA) edge node [above =0.15 cm,left = 0.15cm] {}(hB);
		
		\path[-,loosely dotted] (hA) edge node [above =0.15 cm,left = 0.15cm] {}(h1);
		\path[-, loosely dotted] (hB) edge node [above =0.15 cm,left = 0.15cm] {}(h1);
		\node[state,  opacity =0.5] (h2) at (5.5,3.4) {$h_2$};
		
		\node[state] (hC) at (6,1.8) {$h_C$};
		
		\node[state] (T5) at (5,0) {$x_5$};
		\node[state] (T6) at (6,0) {$x_6$};
		\node[state] (T7) at (7,0) {$x_7$};
	
		\path[-, color = red, line width = 0.075 cm] (hC) edge node [above =0.15 cm,left = 0.15cm] {}(T5);
		\path[-] (hC) edge node [above =0.15 cm,left = 0.15cm] {}(T6);	
		\path[-] (hC) edge node [above =0.15 cm,left = 0.15cm] {}(T7);
		
		\path[-, loosely dotted] (hC) edge node [above =0.15 cm,left = 0.15cm] {}(h2);
		\path[-, loosely dotted] (T5) edge node [above =0.15 cm,left = 0.15cm] {}(h2);
		\end{tikzpicture}
      \caption{Two unrooted trees. The placeholder edges are marked in red.}
      \label{fig:merge_treesa}
    \end{subfigure}\hfill
    \begin{subfigure}[b]{0.45\textwidth}
      \centering
      \begin {tikzpicture}[-latex ,auto ,node distance =4 cm and 5cm ,on grid ,
		semithick ,  scale = 0.8,
		state/.style ={ circle ,top color =white , bottom color = blue!20 ,
			draw,blue , text=blue , minimum width = 0.75 cm}]	
		
		\node[state] (h1) at (3.85,3.4) {$h_1$};
		
		\node[state] (hA) at (0.5,1.8) {$h_A$};
		\node[state] (hB) at (2.8,1.8) {$h_B$};
		
		\node[state] (T1) at (0,0) {$x_1$};
		\node[state] (T2) at (1,0) {$x_2$};
		\node[state] (T3) at (2.5,0) {$x_3$};
		\node[state] (T4) at (3.5,0) {$x_4$};
		
		\path[-] (hA) edge node [above =0.15 cm,left = 0.15cm] {}(T1);
		\path[-] (hA) edge node [above =0.15 cm,left = 0.15cm] {}(T2);
		\path[-] (hB) edge node [above =0.15 cm,left = 0.15cm] {}(T3);
		\path[-] (hB) edge node [above =0.15 cm,left = 0.15cm] {}(T4);
		
		\path[draw,loosely dotted,-, color = red, line width = 0.035 cm] (hA) edge node [above =0.15 cm,left = 0.15cm] {}(hB);
		
		\path[-, line width = 0.04 cm, color = blue] (hA) edge node [above =0.15 cm,left = 0.15cm] {}(h1);
		\path[-, line width = 0.04 cm, color = blue] (hB) edge node [above =0.15 cm,left = 0.15cm] {}(h1);
		\node[state] (h2) at (5.5,3.4) {$h_2$};
		
		\node[state] (hC) at (6,1.8) {$h_C$};
		
		\node[state] (T5) at (5,0) {$x_5$};
		\node[state] (T6) at (6,0) {$x_6$};
		\node[state] (T7) at (7,0) {$x_7$};
	
		\path[draw,loosely dotted,-, color = red, line width = 0.035 cm] (hC) edge node [above =0.15 cm,left = 0.15cm] {}(T5);
		\path[-] (hC) edge node [above =0.15 cm,left = 0.15cm] {}(T6);	
		\path[-] (hC) edge node [above =0.15 cm,left = 0.15cm] {}(T7);
		
		\path[-, line width = 0.04 cm, color = blue] (hC) edge node [above =0.15 cm,left = 0.15cm] {}(h2);
		\path[-, line width = 0.04 cm, color = blue] (T5) edge node [above =0.15 cm,left = 0.15cm] {}(h2);
		\path[-, line width = 0.04 cm, color = blue] (h1) edge node [above =0.15 cm,left = 0.15cm] {}(h2);
		
		\end{tikzpicture}
      \caption{The merging process is completed by connecting the two root nodes $h_1,h_2$.
      }
      \label{fig:merge_treesb}
    \end{subfigure}
	\caption{Merging example} 
	\label{fig:merge_trees}
\end{figure}
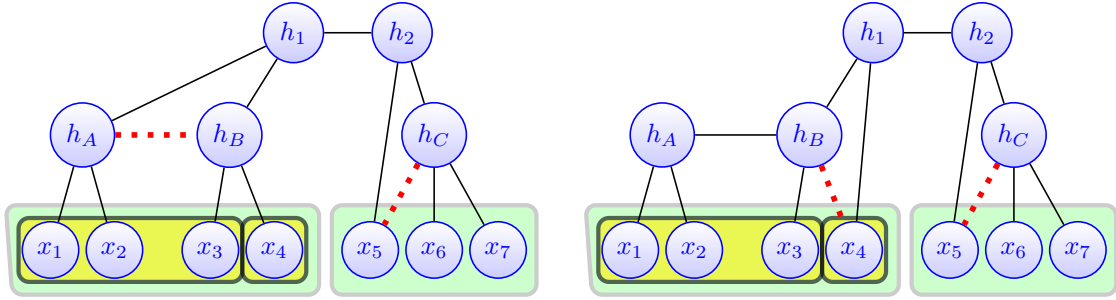
\begin{figure}
    \begin{subfigure}[b]{0.49\textwidth}
      \centering
      \begin {tikzpicture}[-latex ,auto ,node distance =4 cm and 5cm ,on grid ,
		semithick ,scale=0.85,
		state/.style ={ circle ,top color =white , bottom color = blue!20 ,
			draw,blue , text=blue , minimum width = 0.75 cm}]	
		
		\draw [ultra thick, draw=black, fill=green, opacity=0.2, rounded corners]
      (4.4,-0.7) -- (4.4,0.7) -- (7.6,0.7) -- (7.6,-0.7) -- cycle;
       
      	\draw [ultra thick, draw=black, fill=green, opacity=0.2, rounded corners]
      (-0.6,-0.7) -- (-0.7,0.7) -- (4.2,0.7) -- (4.2,-0.7) -- cycle;
       
		\draw [ultra thick, draw=black, fill=yellow, opacity=0.6, rounded corners]
      (-0.5,-0.5) -- (-0.5,0.5) -- (3,0.5) -- (3,-0.5) -- cycle;
		\draw [ultra thick, draw=black, fill=yellow, opacity=0.6, rounded corners]
      (3,-0.5) -- (3,0.5) -- (4,0.5) -- (4,-0.5) -- cycle;
       

		\node[state] (h1) at (3.8,3.4) {$h_1$};
		
		\node[state] (hA) at (0.5,1.8) {$h_A$};
		\node[state] (hB) at (2.8,1.8) {$h_B$};
		
		\node[state] (T1) at (0,0) {$x_1$};
		\node[state] (T2) at (1,0) {$x_2$};
		\node[state] (T3) at (2.5,0) {$x_3$};
		\node[state] (T4) at (3.5,0) {$x_4$};
		
		\path[-] (hA) edge node [above =0.15 cm,left = 0.15cm] {}(T1);
		\path[-] (hA) edge node [above =0.15 cm,left = 0.15cm] {}(T2);
		\path[-] (hB) edge node [above =0.15 cm,left = 0.15cm] {}(T3);
		\path[-] (hB) edge node [above =0.15 cm,left = 0.15cm] {}(T4);
		
		\path[-, color = red,loosely dotted, line width = 0.075 cm] (hA) edge node [above =0.15 cm,left = 0.15cm] {}(hB);
		
		\path[-] (hA) edge node [above =0.15 cm,left = 0.15cm] {}(h1);
		\path[-] (hB) edge node [above =0.15 cm,left = 0.15cm] {}(h1);
		\node[state] (h2) at (5.5,3.4) {$h_2$};
		
		\node[state] (hC) at (6,1.8) {$h_C$};
		
		\node[state] (T5) at (5,0) {$x_5$};
		\node[state] (T6) at (6,0) {$x_6$};
		\node[state] (T7) at (7,0) {$x_7$};
	
		\path[-, color = red,loosely dotted, line width = 0.075 cm] (hC) edge node [above =0.15 cm,left = 0.15cm] {}(T5);
		\path[-] (hC) edge node [above =0.15 cm,left = 0.15cm] {}(T6);	
		\path[-] (hC) edge node [above =0.15 cm,left = 0.15cm] {}(T7);
		
		\path[-] (hC) edge node [above =0.15 cm,left = 0.15cm] {}(h2);
		\path[-] (T5) edge node [above =0.15 cm,left = 0.15cm] {}(h2);
		
		\path[-] (h1) edge node [above =0.15 cm,left = 0.15cm] {}(h2);
		
		\end{tikzpicture}
      \caption{Placeholder edges set to $e(h_A,h_B)$ and $e(h_C,x_5)$.}
      \label{fig:tree_clans_option1}
    \end{subfigure}\hfill
    \begin{subfigure}[b]{0.49\textwidth}
      \centering
      \begin {tikzpicture}[-latex ,auto ,node distance =4 cm and 5cm ,on grid ,
		semithick ,scale=0.85,
		state/.style ={ circle ,top color =white , bottom color = blue!20 ,
			draw,blue , text=blue , minimum width = 0.75 cm}]	
	
		\draw [ultra thick, draw=black, fill=green, opacity=0.2, rounded corners]
      (4.4,-0.7) -- (4.4,0.7) -- (7.6,0.7) -- (7.6,-0.7) -- cycle;
       
      	\draw [ultra thick, draw=black, fill=green, opacity=0.2, rounded corners]
      (-0.6,-0.7) -- (-0.7,0.7) -- (4.2,0.7) -- (4.2,-0.7) -- cycle;
       
		\draw [ultra thick, draw=black, fill=yellow, opacity=0.6, rounded corners]
      (-0.5,-0.5) -- (-0.5,0.5) -- (3,0.5) -- (3,-0.5) -- cycle;
		\draw [ultra thick, draw=black, fill=yellow, opacity=0.6, rounded corners]
      (3,-0.5) -- (3,0.5) -- (4,0.5) -- (4,-0.5) -- cycle;
       
		\node[state] (h1) at (3.8,3.4) {$h_1$};
		
		\node[state] (hA) at (0.5,1.8) {$h_A$};
		\node[state] (hB) at (2.8,1.8) {$h_B$};
		
		\node[state] (T1) at (0,0) {$x_1$};
		\node[state] (T2) at (1,0) {$x_2$};
		\node[state] (T3) at (2.5,0) {$x_3$};
		\node[state] (T4) at (3.5,0) {$x_4$};
		
		\path[-] (hA) edge node [above =0.15 cm,left = 0.15cm] {}(T1);
		\path[-] (hA) edge node [above =0.15 cm,left = 0.15cm] {}(T2);
		\path[-] (hB) edge node [above =0.15 cm,left = 0.15cm] {}(T3);
		\path[-, color = red,loosely dotted, line width = 0.075 cm] (hB) edge node [above =0.15 cm,left = 0.15cm] {}(T4);
		
		\path[-] (hA) edge node [above =0.15 cm,left = 0.15cm] {}(hB);
		
		\path[-] (T4) edge node [above =0.15 cm,left = 0.15cm] {}(h1);
		\path[-] (hB) edge node [above =0.15 cm,left = 0.15cm] {}(h1);
		\node[state] (h2) at (5.5,3.4) {$h_2$};
		
		\node[state] (hC) at (6,1.8) {$h_C$};
		
		\node[state] (T5) at (5,0) {$x_5$};
		\node[state] (T6) at (6,0) {$x_6$};
		\node[state] (T7) at (7,0) {$x_7$};
	
		\path[-, color = red,loosely dotted, line width = 0.075 cm] (hC) edge node [above =0.15 cm,left = 0.15cm] {}(T5);
		\path[-] (hC) edge node [above =0.15 cm,left = 0.15cm] {}(T6);	
		\path[-] (hC) edge node [above =0.15 cm,left = 0.15cm] {}(T7);
		
		\path[-] (hC) edge node [above =0.15 cm,left = 0.15cm] {}(h2);
		\path[-] (T5) edge node [above =0.15 cm,left = 0.15cm] {}(h2);
		
		\path[-] (h1) edge node [above =0.15 cm,left = 0.15cm] {}(h2);
		\end{tikzpicture}
      \caption{Placeholder edges set to $e(h_B,x_4)$ and $e(h_C,x_5)$.}
      \label{fig:tree_clans_option2}
    \end{subfigure}
	\caption{Different choices of placeholder edges result in a different merged trees.} 
	\label{fig:tree_clans}
\end{figure}

\begin{lemma}\label{lem:correct_placeholder}
Let $C_1$ be a set of terminal nodes that forms a clan in $\T$, and let $\T_1$ be the internal structure of $C_1$. An edge $e \in \T_1$ is the correct placeholder edge if and only if it partitions $C_1$ into two sets  $A(e),B(e)$, such that both form clans in $\T$.
\end{lemma}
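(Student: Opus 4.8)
<br>

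The plan is to prove both directions by relating the edges of $\T_1$ to the edges of the full tree $\T$, using the fact that $C_1$ is a clan attached to the rest of $\T$ via a single edge.

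\textbf{Setup.} Since $C_1$ is a clan, there is a unique edge of $\T$ connecting $C_1$ to its complement; call its endpoint inside $C_1$ the root $r$ of the clan. The internal structure $\T_1$ is obtained from the induced subtree of $\T$ on $C_1$ by adding $r$ (more precisely, $\T_1$ is $\T$ restricted to $C_1$, which is an unrooted binary tree on the terminal nodes of $C_1$ with $r$ suppressed into an edge). The key combinatorial observation is that there is a natural correspondence between edges of $\T_1$ and edges of $\T$ that lie ``inside'' the clan $C_1$, together with one extra edge of $\T_1$ that corresponds to the cut edge plus the two half-edges at $r$ being contracted. Concretely, the correct placeholder edge is, by the description in Section~\ref{sec:tree_merging}, precisely the edge $e^\star \in \T_1$ into which the root $r$ must be reinserted so that, after reattaching the pendant subtree $\T \setminus C_1$ there, one recovers $\T$.

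\textbf{Forward direction ($e^\star$ correct $\Rightarrow$ both sides are clans of $\T$).} Suppose $e = e^\star$ is the correct placeholder edge, so reinserting $r$ into $e^\star$ reconstructs $\T$ exactly. Removing $e^\star$ from $\T_1$ splits $C_1$ into $A(e^\star)$ and $B(e^\star)$. In the reconstructed $\T$, the node $r$ sits on the former $e^\star$, and $r$'s three neighbors lead respectively to the $A(e^\star)$ side, the $B(e^\star)$ side, and the rest of the tree $\T\setminus C_1$. Deleting the edge from $r$ toward the $A(e^\star)$-side therefore disconnects $\T$ into $A(e^\star)$ and everything else; hence $A(e^\star)$ is a clan of $\T$, and symmetrically $B(e^\star)$ is a clan of $\T$.

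\textbf{Reverse direction (both $A(e),B(e)$ clans of $\T$ $\Rightarrow$ $e$ correct).} Suppose $e \in \T_1$ partitions $C_1$ into $A(e)$ and $B(e)$, both clans of $\T$. Since $A(e)$ is a clan, a single edge of $\T$ separates $A(e)$ from $\T \setminus A(e)$; likewise for $B(e)$. Because $A(e) \sqcup B(e) = C_1$ and $C_1$ is itself a clan separated from $\T\setminus C_1$ by one edge, the separating edges of $A(e)$ and of $B(e)$ must meet at a common node $w$ of $\T$ that lies in $C_1$ (the path in $\T$ from $A(e)$ to $B(e)$ cannot exit $C_1$ and re-enter, as that would require two edges in the cut of $C_1$). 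This node $w$ has degree three in $\T$ with its three incident directions going to $A(e)$, to $B(e)$, and out to $\T\setminus C_1$ — i.e. $w$ plays exactly the role required of the clan root $r$, and the edge of $\T_1$ obtained by suppressing $w$ within $\T$ restricted to $C_1$ is $e$. Therefore reinserting the root into $e$ and attaching $\T\setminus C_1$ recovers $\T$, so $e$ is the correct placeholder edge. One should also note uniqueness: since $\T$ is fixed, the node $w=r$ is unique, so at most one edge $e$ satisfies the condition, consistent with the ``if and only if.''

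\textbf{Main obstacle.} The delicate point is the argument that the two separating edges of the sub-clans $A(e)$ and $B(e)$ meet \emph{inside} $C_1$ at a single degree-three node, rather than, say, the separating edge of $A(e)$ coinciding with the cut edge of $C_1$ (which would happen only in the degenerate case $B(e)=\emptyset$, excluded since $e$ is an internal edge of $\T_1$ inducing a nontrivial bipartition). Making this rigorous requires a careful use of the clan/cut-edge structure — essentially that a clan within a clan has its cut edge lying on the sub-clan side of the larger clan's cut edge — and checking the small-cardinality edge cases (a subtree with a single terminal node on one side, so $A(e)$ or $B(e)$ is a singleton leaf, which is trivially a clan). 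I would handle this by explicitly tracking, for each of $A(e)$ and $B(e)$, the unique path in $\T$ to $\T\setminus C_1$ and arguing these paths share their first edge out of $C_1$, forcing the branch point to be a single internal node of $C_1$.
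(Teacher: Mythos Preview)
Your proposal is correct and follows essentially the same approach as the paper: both directions hinge on the observation that the correct placeholder edge is exactly where the root node $h_1$ of the clan sits, with its three incident edges leading to $A(e)$, $B(e)$, and $C_2$. Your reverse direction is in fact more careful than the paper's, which simply asserts without further argument that three disjoint clans $A(e),B(e),C_2$ partitioning the terminal nodes must have their roots attached to a common node $h_1$; the ``main obstacle'' you flag is real but is precisely the step the paper glosses over.
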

Lemma  \ref{lem:correct_placeholder} is illustrated in Figure \ref{fig:merge_treesa}. The edge $e(h_A, h_B)$ divides the left subtree into the clans $\{x_1, x_2\}$ and $\{x_3, x_4\}$. These subsets also form clans in the full tree depicted in Figure \ref{fig:merge_treesb}.
Next, using Lemma \ref{lem:correct_placeholder}, we derive a spectral characterization of the correct placeholder edge. 
Recall that by Lemma \ref{lem:affinity_spectral}, the matrix $S(C_1,C_2) \in \RR^{|C_1|\times |C_2|}$, 
is rank one. Thus, 
\begin{equation}\label{eq:clans_svd}
    S(C_1,C_2) = u \sigma v^T, \quad \mbox{ where }\quad\|v\|=\|u\|=1, \quad \mbox{ and } \quad \sigma>0.
\end{equation}
Given a placeholder edge $e$ and its corresponding partition of terminal nodes $A(e)$ and $B(e)$, we denote by $u_{A(e)},u_{B(e)}$ the entries of $u$ that correspond to $A(e)$ and $B(e)$, respectively. 
The following lemma, proven in Appendix \ref{appendix:sec_algorithm}, characterizes the correct placeholder edge in terms of  $u_{A(e)}$ and $u_{B(e)}$.


\begin{lemma}\label{lem:merge_trees}
An edge $e$ is the correct placeholder edge of $\T_1$ if and only if there exists a constant $\alpha$ such that
\begin{equation}\label{eq:correct_placeholder}
S(A(e),B(e)) = u_{A(e)} \alpha u_{B(e)}^T.
\end{equation}
\end{lemma}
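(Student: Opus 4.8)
The plan is to deduce \eqref{eq:correct_placeholder} from the two characterizations already available, bridged by the rank-one factorization of $S(C_1,C_2)$. Concretely, write $S(C_1,C_2)=u\sigma v^T$ as in \eqref{eq:clans_svd}. Since the terminal nodes of $\T$ are partitioned as $C_1\cup C_2$ with $C_1=A(e)\cup B(e)$, for any subset $D\subseteq C_1$ the block $S(D,C_2)$ is simply the row-restriction of $u\sigma v^T$, that is $S(D,C_2)=u_D\,\sigma\,v^T$. Thus the $C_2$-columns of every row block of $S$ are multiples of the corresponding sub-vector of $u$; informally, $C_2$ acts as an outgroup that forces the left factor of such a block to be proportional to the relevant part of $u$.

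\emph{Forward direction.} Suppose $e$ is the correct placeholder edge. By Lemma \ref{lem:correct_placeholder}, $A(e)$ and $B(e)$ are clans of $\T$; since the complement of a clan is a clan, so are $B(e)\cup C_2$ and $A(e)\cup C_2$. By Lemma \ref{lem:affinity_spectral}, $S(A(e),B(e)\cup C_2)$ and $S(B(e),A(e)\cup C_2)$ both have rank one. Split the first matrix into its $B(e)$-columns, which form $S(A(e),B(e))$, and its $C_2$-columns, which form $S(A(e),C_2)=u_{A(e)}\,\sigma\,v^T$. The latter is nonzero because $S$ is entrywise positive, so in the rank-one matrix $S(A(e),B(e)\cup C_2)$ every column, in particular every column of $S(A(e),B(e))$, is a scalar multiple of $u_{A(e)}$; hence $S(A(e),B(e))=u_{A(e)}\gamma^T$ for some vector $\gamma$. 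Applying the same reasoning to $S(B(e),A(e)\cup C_2)$ gives $S(A(e),B(e))=\eta\,u_{B(e)}^T$ for some vector $\eta$. A nonzero matrix equal to both $u_{A(e)}\gamma^T$ and $\eta\,u_{B(e)}^T$ has column space $\operatorname{span}(u_{A(e)})$ and row space $\operatorname{span}(u_{B(e)})$, hence equals $u_{A(e)}\,\alpha\,u_{B(e)}^T$ for a scalar $\alpha$, which is \eqref{eq:correct_placeholder}.

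\emph{Converse.} Assume \eqref{eq:correct_placeholder} holds for $e$. Adjoining the block $S(A(e),C_2)=u_{A(e)}\sigma v^T$ to $S(A(e),B(e))=u_{A(e)}\alpha u_{B(e)}^T$ gives
\[
S\bigl(A(e),\,B(e)\cup C_2\bigr)=u_{A(e)}\,\bigl[\,\alpha\,u_{B(e)}^T \;\big|\; \sigma\,v^T\,\bigr],
\]
an outer product, hence of rank at most one, and nonzero by positivity of $S$; therefore it has rank exactly one. Symmetrically $S(B(e),A(e)\cup C_2)=u_{B(e)}\,[\,\alpha\,u_{A(e)}^T \mid \sigma\,v^T\,]$ has rank one. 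By Lemma \ref{lem:affinity_spectral}, $A(e)$ and $B(e)$ are both clans of $\T$, and Lemma \ref{lem:correct_placeholder} then identifies $e$ as the correct placeholder edge.

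The argument is essentially bookkeeping; the step I would flag as the crux is the one in the first paragraph: recognizing that appending the $C_2$-columns, whose values are pinned down by $u$ via $S(C_1,C_2)=u\sigma v^T$, converts the condition \eqref{eq:correct_placeholder} on the sub-block $S(A(e),B(e))$ of $S(C_1,C_1)$ into a rank-one condition on a block spanning a genuine partition of the terminal nodes of $\T$, to which Lemma \ref{lem:affinity_spectral} directly applies. The remaining care points are minor: verifying the row-restriction identity $S(D,C_2)=u_D\sigma v^T$, checking nonzeroness so that ``rank $\le 1$'' upgrades to ``rank $1$'', and the elementary fact that a nonzero matrix with one-dimensional column and row spaces is a scalar multiple of the corresponding outer product.
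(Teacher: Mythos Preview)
Your proof is correct. The converse direction is essentially identical to the paper's. For the forward direction, however, you take a different route than the paper.

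The paper argues directly through the root node $h_1$ of the clan $C_1$: from $S(C_1,C_2)=S(C_1,h_1)S(h_1,C_2)$ it identifies $u$ as proportional to $S(C_1,h_1)$, say $S(C_1,h_1)=\beta u$. When $e$ is correct, $h_1$ separates $A(e)$ from $B(e)$, so the multiplicative property gives $S(A(e),B(e))=S(A(e),h_1)S(h_1,B(e))=\beta u_{A(e)}\cdot\beta u_{B(e)}^T$, yielding \eqref{eq:correct_placeholder} with the explicit value $\alpha=\beta^2$.

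You instead invoke Lemma~\ref{lem:affinity_spectral} in both directions: once $A(e)$ is known to be a clan, $S(A(e),B(e)\cup C_2)$ is rank one, and the already-known $C_2$-block pins the left factor to $u_{A(e)}$; symmetrically the right factor is $u_{B(e)}$. This is a clean ``black-box'' argument that uses only Lemmas~\ref{lem:affinity_spectral} and~\ref{lem:correct_placeholder} without unpacking the multiplicative structure or naming the root node. The paper's route is shorter and produces an explicit $\alpha$, but your approach has the virtue of being entirely symmetric between the two implications and of making clear that the result really rests only on the rank-one characterization of clans.
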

In practice we can only compute an estimate of $S$. Motivated by Lemma \ref{lem:merge_trees}, we propose to determine the placeholder edge $e^\ast$ by minimizing the following score function,
\begin{equation}\label{eq:edge_score}
e^\ast = \argmin_e d(e) = \argmin_e \frac{1}{\|S(A(e),B(e))\|_F} \min_{\alpha} \|S(A(e),B(e)) - u_{A(e)}\alpha u_{B(e)}^T \|_F.
\end{equation}
The normalizing factor $\|S(A(e),B(e))\|_F$ is added since the size of $S(A(e),B(e))$ changes for every edge $e$. Note that given the exact matrix $S$, at the correct placeholder edge $d(e^\ast)=0$.
In Section \ref{sec:finite_sample_mearging} we derive an expression for the number of samples required to obtain the correct  placeholder edge by Eq. \eqref{eq:edge_score} with high probability.

\subsection{Computational complexity}
We analyze the complexity of each step of STDR separately.
We assume that the similarity or distance matrix are given.
To simplify the analysis, we assume a balanced binary tree, and that the partition steps gave $m/\tau$ subsets of size $\tau$ each. 
We denote by $B(k)$ the complexity of recovering the topology of a tree with $k$ terminal nodes by the given subroutine $\text{Alg}$.
\begin{enumerate}
    \item Given the  similarity matrix, partitioning a set of $k$ terminal nodes is $\OO(k^2)$, due to the computation of the Fiedler vector of the positive semi-definite  Laplacian matrix  \cite[Chapter 2]{stewart2001matrix}.
    \item The complexity of merging two subtrees with $k$ terminal nodes each is composed of two parts: 
    (i) compute the leading singular vector of the matrix $S(C_1,C_2) \in \RR^{k \times k}$, which takes $\OO(k^2)$ operations;
    (ii) compute the score for every edge as in Eq. \eqref{eq:edge_score}. 
    The number of operations required for the least square operation in the numerator of Eq. \eqref{eq:edge_score}, as well as computing the Frobenius norms in the numerator and denominator is proportional to the number of elements in $S(A(e),B(e))$. 
    Thus, the total complexity of computing the score for all edges in $\T_1$ (and similarly $\T_2$) is  $O(\sum_{e\in \T_1} |A(e)||B(e)|)$. For a balanced tree, this term is  equal to
    \[
    \frac{k^2}{4} + \sum_{i=2}^{\log k} \underbrace{2^i}_{\substack{\text{Number of partitions} \\\text{of size } k/2^i} } \underbrace{\frac{k}{2^i}}_{|A(e)|}\underbrace{\Big(k-\frac{k}{2^i}\Big)}_{|B(e)|}= 
    \OO(k^2\log k).
    \]
    We remark that if the two trees are highly imbalanced, the complexity may increase up to $\OO(k^3)$.
\end{enumerate}

Let $T(m)$ be the complexity of the partitioning and merging operations of STDR, excluding the complexity of the subroutine algorithm that recovers the structure of small trees. We have that
\[
T(m) =  \underbrace{\OO(m^2)}_{\text{partitioning}} + 2T(m/2) + \underbrace{\OO(m^2 \log m)}_{\text{merging}} = 2T(m/2) + \OO(m^2 \log m).
\]
By the Master theorem \cite{bentley1980general},
\begin{equation}\label{eq:complexity_T}
    T(m) = \OO(m^2 \log m).
\end{equation}
Thus, the total complexity of STDR is 
\[
\OO(m^2 \log m + (m/\tau)B(\tau)).
\]
For example, 
the complexity of NJ is $B(\tau)=\OO(\tau^3)$. Thus, the complexity of STDR+NJ is $\OO(m^2\log m + m\tau^2)$, which for $\tau = \OO(1)$ improves upon the $\OO(m^3)$ complexity of running NJ to recover the full tree.
In the simulation section, we show that STDR+NJ outperforms NJ in accuracy while being about an order of magnitude faster. 


An important property of the STDR algorithm, in terns of actual runtime, is that it is embarrassingly parallel. 
Specifically, steps \ref{algstep:stdr1} and \ref{algstep:stdr2} in Algorithm \ref{alg:description} can be
executed in two independent processes.
This may result in up to $k$ parallel processes, where $k$ is the number of partitions.


\section{Correct tree recovery of STDR}
In this section we consider the population setting where the similarity matrix $S$ is known. 
In this setting we prove that STDR correctly recovers the underlying tree. We do so by analyzing the
partitioning step (i) and the  merging step (iii) of STDR. 
Our key results are Theorem   \ref{thm:consistenct_partitioning},  which states that step (i) is guaranteed to yield disjoint clans, and Theorem \ref{thm:merge_consistency}, which states that given accurate trees for two clans, step (iii) recovers the exact structure of the full tree.
Combining these two results directly yields the following theorem establishing the correctness of STDR in the population setting. 


\begin{theorem}\label{thm:consistency}
Given an exact similarity matrix $S$, and assuming that the subroutine Alg correctly recovers the internal structure of its input, STDR recovers the exact latent tree $\T$.
\end{theorem}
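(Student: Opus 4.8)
My plan is to prove Theorem~\ref{thm:consistency} by induction on the number $m$ of terminal nodes of $\T$, exploiting the recursive structure of Algorithm~\ref{alg:description}. The two substantive ingredients are already available: Theorem~\ref{thm:consistenct_partitioning}, which says that the Fiedler-vector partition in line~\ref{alg:partition_step} splits the terminal nodes into two clans of $\T$, and Theorem~\ref{thm:merge_consistency}, which says that the merging step reconstructs $\T$ exactly once it is handed the correct internal structures of those two clans. The remaining task is to glue these together and to check that each recursive call is itself a legitimate instance of the problem.

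For the base case $m \le \tau$, STDR returns $\mathrm{Alg}(X)$, which by hypothesis recovers the internal structure of its input, giving the claim. For the inductive step I would fix $m > \tau$, assume the theorem for all latent trees with fewer than $m$ terminal nodes, and trace through the three steps. In the partitioning step, STDR takes the Fiedler vector $v$ of the Laplacian of the (complete, hence connected) similarity graph; since $v$ is a nonzero eigenvector orthogonal to the all-ones vector, both sides of the threshold partition are nonempty, so $1 \le |C_i| \le m-1 < m$. Theorem~\ref{thm:consistenct_partitioning} then gives that $C_1$ and $C_2$ are clans of $\T$, and, being complementary, they are separated by a single common edge, hence adjacent clans in the sense of Definition~\ref{def:adj_clans}.

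I would then argue that each recursive call is well posed: a clan $C_i$ is connected to the rest of $\T$ by a single edge, so it carries its own latent tree model on its subtree, and because the similarity function is multiplicative along tree edges and every path between two nodes of $C_i$ stays inside that subtree, the similarity matrix of this sub-model is exactly $S(C_i,C_i)$, with the determinant bounds \eqref{eq:assumption_1} inherited. Hence $\mathrm{STDR}(X(C_i,:),\tau,\mathrm{Alg})$ runs on a genuine instance with fewer than $m$ terminal nodes and an exact similarity matrix, and the inductive hypothesis yields the true internal structure $\T_i$ of $C_i$. Finally STDR enters the merging step with the exact $S$, the correct structures $\T_1,\T_2$, and two adjacent clans, so Theorem~\ref{thm:merge_consistency} guarantees that the edge scores of \eqref{eq:edge_score} pick out the correct placeholder edges and that joining the inserted roots returns exactly $\T$, closing the induction.

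I expect the only genuinely delicate point to be the verification that restricting attention to a clan produces a valid sub-instance whose similarity matrix is $S(C_i,C_i)$ — i.e., that the recursion does not silently change the underlying model — but this should fall out immediately from the multiplicativity of $S$ and Lemma~\ref{lem:affinity_spectral}. Everything else is routine bookkeeping; the real work is carried by Theorems~\ref{thm:consistenct_partitioning} and~\ref{thm:merge_consistency}.
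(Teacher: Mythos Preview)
Your proposal is correct and follows essentially the same approach as the paper: the paper simply states that Theorem~\ref{thm:consistency} follows directly by combining Theorem~\ref{thm:consistenct_partitioning} (partitioning yields adjacent clans) with Theorem~\ref{thm:merge_consistency} (merging is exact given correct subtrees), without spelling out the induction. Your write-up is a more careful, explicit version of the same argument, including the verification that each recursive call is a valid sub-instance with similarity matrix $S(C_i,C_i)$, which the paper leaves implicit.
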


\label{sec:consistency}

\subsection{Consistency of the partition step} \label{sec:consistency_partition}
The following theorem proves that given the exact similarity matrix, partitioning the terminal nodes of the tree by thresholding the Fiedler vector as described in Section \ref{sec:similarity_matrix} yields two  adjacent clans.

\begin{theorem}\label{thm:consistenct_partitioning}
Let $G$ be the similarity graph of a binary tree $\T$. Denote by $v$ the Fiedler vector of $G$ and
by $\{C_1,C_2\}$ 
a partition of the terminal nodes according to the sign pattern of $v$ as in Eq. \eqref{eq:spectral_partition}.
Then $C_1,C_2$ are adjacent clans in $\T$.
\end{theorem}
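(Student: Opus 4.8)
The plan is to connect the sign structure of the Fiedler vector of the similarity graph $G$ to the topology of $\T$ via Fiedler's nodal domain theorem, and then to upgrade the resulting "two nodal domains" statement to the much stronger claim that each domain is a clan. First I would recall Fiedler's theorem: for a connected weighted graph with nonnegative weights, the Fiedler vector $v$ induces at most two weak nodal domains, i.e. the vertex sets $\{i : v(i) \ge 0\}$ and $\{i : v(i) \le 0\}$ each induce a connected subgraph of $G$. But $G$ here is a complete graph, so connectivity in $G$ is vacuous and gives nothing directly. The key observation must therefore be a structural one about the weights: since $S(x_i,x_j)$ decays multiplicatively (and hence exponentially) along paths in $\T$, the similarity graph $G$ "remembers" the tree metric, and I expect one can show that the quadratic form $v^T L_G v$ is minimized (subject to $v \perp \mathbf 1$, $\|v\|=1$) only by vectors whose sign pattern respects the tree — concretely, whose positive and negative parts are each clans.

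The main technical route I would take is a characterization of clans via Lemma~\ref{lem:affinity_spectral}: a subset $A$ is a clan iff $S(A,A^c)$ is rank one, equivalently iff $S(A,A^c) = p q^T$ for nonnegative vectors $p,q$. The strategy is to show that if $C_1 = \{i : v(i) \ge 0\}$ were \emph{not} a clan, then one could perturb $v$ to decrease the Rayleigh quotient $v^T L_G v / v^T v$ while keeping orthogonality to $\mathbf 1$, contradicting that $v$ is the Fiedler vector. To make this precise I would first reduce to considering a candidate split along a single edge $e$ of $\T$: every clan arises by removing an edge, so the "closest clan" to the sign pattern of $v$ is obtained by cutting some edge $e^\star$. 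Then I would argue that among all such edge-cuts, the cut minimizing the associated normalized-cut / Rayleigh-type quantity coincides with a genuine eigenvector sign pattern, using the rank-one structure of $S(C_1,C_2)$ guaranteed for true clans. An alternative and possibly cleaner route: exhibit an explicit near-eigenvector supported on a clan-split and show its Rayleigh quotient equals $\lambda_2$, then invoke a uniqueness/interlacing argument (the Fiedler value is simple under the assumptions, since the exponential decay forces strict inequalities in \eqref{eq:assumption_1}) to conclude the Fiedler vector must have that sign pattern. Adjacency of $C_1$ and $C_2$ is then automatic: two complementary clans of a binary tree are always adjacent clans, since removing the separating edge partitions the tree into exactly the two pieces, and their roots are the two endpoints of that edge.

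The hard part, I expect, is precisely the step that rules out non-clan sign patterns for $v$. Fiedler's nodal domain theorem alone does not see the tree structure because $G$ is complete; the real content is a new lemma saying that the Laplacian $L_G$ of the similarity graph of a latent tree has its second eigenvector aligned with a tree edge-cut. I anticipate this requires either (a) an explicit factorization of $S$ (or of $L_G$) exploiting multiplicativity along $\T$ — for instance writing $S$ in terms of the path metric $d(x_i,x_j) = -\log S(x_i,x_j)$ and leveraging that tree metrics embed isometrically into $\ell_1$, hence the associated graph has a well-understood spectral cut structure — or (b) a careful induction on the tree: root $\T$ at an internal edge, express $L_G$ in block form according to the two sides, and show that any eigenvector for $\lambda_2$ cannot change sign "inside" a clan without violating the eigenvalue equation, because the near-constant similarities within a clan (relative to the cross-clan similarities) would force a sign-consistent response. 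I would also need to handle the degenerate possibility that $v$ has zero entries, but the strict bounds $\delta > 0$, $\xi < 1$ in \eqref{eq:assumption_1} should ensure simplicity of $\lambda_2$ and hence a generic, zero-free (or at worst harmlessly-zero) Fiedler vector, letting the sign partition in \eqref{eq:spectral_partition} be well defined. Once the non-clan exclusion is in place, the rest — connectedness of each side in $\T$, complementarity, and adjacency — follows from elementary tree combinatorics together with Lemma~\ref{lem:affinity_spectral}.
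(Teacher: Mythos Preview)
Your proposal has a genuine gap at precisely the point you yourself flag as ``the hard part.'' You correctly observe that Fiedler's nodal domain theorem is vacuous on the complete graph $G$, and you then offer several speculative routes --- a variational perturbation argument, an explicit near-eigenvector, an $\ell_1$-embedding of the tree metric, an induction on the tree --- but none of these is actually carried out, and it is not clear that any of them can be made to work directly. The perturbation argument in particular is delicate: to show that a non-clan sign pattern for $v$ allows one to strictly decrease the Rayleigh quotient while maintaining $v\perp\mathbf 1$, you would need to exhibit a concrete direction of descent, and the multiplicative structure of $S$ does not obviously hand you one. Likewise, your remark that ``two complementary clans of a binary tree are always adjacent clans'' is false as stated --- any internal edge of $\T$ gives a pair of complementary clans, but only one such pair has roots that share a common neighbor --- so adjacency requires its own argument.

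The paper's route is entirely different and bypasses all of these difficulties. The key idea is a \emph{Schur complement} (Kron reduction) connection: for a weighted tree $\T$ with internal-node set $R$, the Schur complement $L_{\T/R}$ of the tree Laplacian with respect to $R$ is itself the Laplacian of a complete graph on the leaves, and a theorem of Stone (Theorem~\ref{thm:schur_complement_thm}) asserts that the Fiedler vector of $L_{\T/R}$ always splits the leaves into two adjacent clans of $\T$. This result is not directly applicable because computing $L_{\T/R}$ requires the edge weights of $\T$, which are unobserved. The paper's new contribution is Lemma~\ref{lem:graph_construction}: for any tree $\T$ there exists a \emph{twin tree} $\widetilde\T$ with the same topology but different edge weights such that $L_G = L_{\widetilde\T/R}$, where $G$ is the similarity graph. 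The proof of this lemma is constructive, iteratively adding internal nodes to $G$ and adjusting weights so that each Schur-complement step is reversed. Once this is in hand, Theorem~\ref{thm:consistenct_partitioning} follows immediately: the Fiedler vector of $L_G = L_{\widetilde\T/R}$ gives adjacent clans in $\widetilde\T$, hence in $\T$. Your proposal misses this Schur-complement mechanism entirely, which is the actual engine of the proof.
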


Before proving Theorem \ref{thm:consistenct_partitioning}, we would like to put its novelty into the context of related results. In A result similar in nature to Theorem \ref{thm:consistenct_partitioning} was proved for  hierarchical block models (HBM) \cite{balakrishnan2011noise}
where the underlying block structure of a given connectivity matrix  is recovered by recursive partitioning according to its Fiedler vector. 
The statistical guaranty, however, is derived by making additional assumptions on the structure of the tree as well as its parameters. Theorem \ref{thm:consistenct_partitioning}, in contrast, is true for any tree structure and parameters. 
A different distance-based approach for tree partitioning was derived in chapter 4 of \cite{griffing2012connections}. This approach is  guaranteed to yield two clans, but only given the exact distance matrix between terminal nodes. 
In Appendix \ref{appendix:griffing} we show empirically that our similarity based approach is more robust than the distance based approach, specifically in cases where the number of samples is limited.


For the proof of Theorem \ref{thm:consistenct_partitioning}, we present several preliminaries on graphs.
First, we define the Schur complement of a matrix, which plays an important role in graph theory~\cite{dorfler2012kron}. 
\begin{definition}[Schur complement]\label{def:schur}
Let $A,B,C$ and $D$ be matrices of dimensions $p \times p , p \times q, q \times p$ and $q \times q$, respectively. Assume $D$ is invertible and consider the matrix 
\[
M = \left[\begin{matrix} A & B \\ C & D \end{matrix}\right],
\]
of size $(p + q) \times (p + q)$. The Schur complement of $M$ with respect to $D$ is the $p \times p$ matrix
\[
M/D = A - BD^{-1}C.
\]
\end{definition}
Let $H$ be a graph with a set of nodes $V$ and Laplacian matrix $L$. We denote by $L_R$ the principal sub-matrix of $L$ that corresponds to a subset of nodes $R \subset V$. The Schur complement of $L$ with respect to $L_R$ yields the Laplacian of a different graph, with $|V-R|$ nodes  \cite{crabtree1966applications, dorfler2012kron}. We denote this Laplacian matrix by $L_{H/R}$.
The rows and columns of $L_{H/R}$ correspond to vertices of $H$ that are not in $R$.
When the graph is a tree $\T$, and $R$ is the set of its non-terminal nodes, then $L_{\T/R}$ is the Laplacian of a complete graph $G$ whose nodes are the terminal nodes of $\T$. 

Equipped with these definitions, we proceed to the proof of Theorem~\ref{thm:consistenct_partitioning}. The
proof consists of two parts, that correspond to Theorem \ref{thm:schur_complement_thm} and Lemma \ref{lem:graph_construction}. Theorem \ref{thm:schur_complement_thm}, which is a rephrase of Theorem 3.3 of \cite{stone2009fiedler}, shows that one can partition the terminal nodes of a tree $\T$ into two clans via the Fiedler vector of $L_{\T/R}$, where $R$ is the set of all internal nodes. 
\begin{theorem}[\cite{stone2009fiedler},Theorem 3.3\footnote{For clarity, we rephrased the theorem from \cite{stone2009fiedler} according to our terminology.}]\label{thm:schur_complement_thm}
 Let $\T$ be a tree with a node set $V$ and a subset of non terminal nodes $R\subset V$. We denote by $L_\T$ the Laplacian of $\T$ and by $L_{\T/R}$ the Laplacian of a graph $G$ obtained by Schur complement of $L_\T$ with respect to $R$. 
Let $v$ be the Fiedler vector of $G$, and $C_1,C_2$ the following partition of the terminal nodes,
\[
C_1 = \{i\in V \setminus R ;\; v(i) \leq 0\},  \qquad C_2 = \{j\in V\setminus R ; \; v(j) > 0\}.
\]
Then $C_1$ and $C_2$ are adjacent clans in $\T$.
\end{theorem}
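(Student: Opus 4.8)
The plan is to lift the Fiedler vector $v$ of $G$ from the terminal nodes to the whole tree by a \emph{harmonic extension}, and then to run a nodal-domain argument directly on $\T$, where the acyclic structure forces the sign partition to be realized by a single edge. First I would record the algebraic consequences of the Schur complement (Definition~\ref{def:schur}, Kron reduction \cite{dorfler2012kron,crabtree1966applications}). Ordering the nodes so that $L_\T = \left[\begin{smallmatrix} L_{TT} & L_{TR}\\ L_{RT} & L_{RR}\end{smallmatrix}\right]$ with $T = V\setminus R$ the terminals and $R$ the internal nodes, the block $L_{RR}$ is a grounded (Dirichlet) sub-Laplacian and hence invertible, so $L_{\T/R} = L_{TT} - L_{TR}L_{RR}^{-1}L_{RT}$ is well defined and is itself the Laplacian of the connected graph $G$ on $T$. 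For any $v$ define its harmonic extension $\tilde v = (v, w)$ with $w = -L_{RR}^{-1}L_{RT}v$. A direct computation gives $(L_\T \tilde v)|_R = 0$ and $(L_\T\tilde v)|_T = L_{\T/R} v$, together with the energy identity $\tilde v^\top L_\T \tilde v = v^\top L_{\T/R} v$.

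Next I would translate the eigenproblem. If $L_{\T/R} v = \mu v$ with $\mu = \lambda_2(L_{\T/R})$, the identities above show $\tilde v$ solves the generalized eigenproblem $L_\T \tilde v = \mu\, \Pi_T \tilde v$, where $\Pi_T$ is the diagonal indicator of the terminal nodes. Conversely every generalized eigenpair with nonzero terminal part descends to an eigenpair of $L_{\T/R}$, and the constant vector accounts for $\mu=0$; hence $\mu$ is precisely the second smallest finite eigenvalue of the pencil $(L_\T, \Pi_T)$, so $\tilde v$ is a ``second'' eigenfunction on the tree. This reduces the theorem to a nodal-domain statement: it suffices to show that the sign pattern of $\tilde v$ cuts $\T$ into exactly two connected pieces, since two complementary connected subtrees of a tree meet in a single edge, and the terminal nodes on the two sides of that edge are then, by definition, a pair of adjacent clans.

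The heart of the argument is therefore the nodal-domain analysis, where I would combine two facts. (a) A discrete maximum principle: the equation $(L_\T\tilde v)|_R = 0$ says each internal node carries the weighted average of its neighbors, so $\tilde v$ has no strict local extremum at an internal node and its sign cannot flip in isolation inside $R$; following nondecreasing values out of any positive internal node reaches a positive terminal, which glues internal nodes onto the terminal sign-domains. (b) A second-eigenvalue count: because $\mu$ is the second eigenvalue of the pencil, a Courant--Fiedler nodal-domain bound (the generalized analogue of \cite{fiedler1975property}) limits $\tilde v$ to at most two sign domains. Fact (b) gives ``at most two'' and fact (a), with the acyclicity of $\T$, upgrades this to ``exactly two connected domains separated by one edge''. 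That cut edge exhibits $C_2 = \{v > 0\}$ and $C_1 = \{v \le 0\}$ as the terminal sets of its two sides, proving that both are clans and that they are adjacent.

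The step I expect to be the main obstacle is controlling the degenerate behavior of the zero set, and three issues must be handled. Internal nodes with $\tilde v = 0$ can, by harmonicity, form a whole connected neutral region and must be shown not to split a sign-domain into two components. Terminal entries with $v = 0$ are assigned to $C_1$ by the asymmetric convention $\{v \le 0\}$ versus $\{v > 0\}$, so one must verify that this assignment keeps both sides connected. Finally, the vertex-type case of Fiedler's dichotomy, in which the separating structure collapses onto a single degree-three internal node rather than an edge, requires showing that the positive terminals still occupy exactly one of the three incident subtrees, so that a single edge cut survives. Making the generalized nodal-domain bound in (b) rigorous for the singular weight $\Pi_T$, and ruling out these degeneracies, is where the real work lies; the remaining clan identification is a routine combinatorial consequence of the single-edge cut.
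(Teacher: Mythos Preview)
The paper does not prove Theorem~\ref{thm:schur_complement_thm}; it is imported wholesale as a restatement of Theorem~3.3 in \cite{stone2009fiedler} and used as a black box in the proof of Theorem~\ref{thm:consistenct_partitioning}. So there is no ``paper's own proof'' to compare against---your proposal is an attempt to reprove the cited result from scratch.

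That said, your outline is the right one and is essentially the strategy in \cite{stone2009fiedler}: harmonic extension of the Fiedler vector through the Schur-complemented nodes, reinterpretation as a generalized eigenproblem $L_\T \tilde v = \mu \Pi_T \tilde v$, and then a Fiedler-type nodal-domain argument on the tree itself. The algebraic identities you state (the block computation, the energy identity, the correspondence of spectra between $L_{\T/R}$ and the pencil $(L_\T,\Pi_T)$) are all correct. The reduction to ``at most two sign domains on an acyclic graph implies a single separating edge'' is also the right endpoint.

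You have correctly identified where the real content lies. The nodal-domain bound for the singular pencil $(L_\T,\Pi_T)$ is not literally covered by \cite{fiedler1975property}, so you would either need to invoke the generalized Courant theorem for $M$-matrix pencils or rederive it by the usual sign-flipping variational argument (which goes through because $\Pi_T$ is diagonal nonnegative and $L_\T$ is an irreducible $M$-matrix). The zero-set degeneracies you list are exactly the cases Fiedler's original dichotomy (edge-type versus vertex-type) handles; on a \emph{binary} tree the vertex-type case, where the zero node has degree three, still leaves a single subtree carrying all strictly positive terminals, so the adjacent-clan conclusion survives. If you want to turn this sketch into a full proof, the cleanest route is to cite the generalized nodal-domain theorem directly rather than reprove it, and then spend the remaining effort on the vertex-type boundary case.
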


Theorem \ref{thm:schur_complement_thm}, however, is not directly applicable to our setting, since computing $L_{\T/R}$ requires knowledge of the unknown similarities between all  nodes of $\T$, including its unobserved nodes.
Here, we derive Lemma \ref{lem:graph_construction} that shows that 
for any tree $\T$, there is a \textit{twin tree} $\widetilde \T$ with the same topology, such that $L_{\widetilde \T/R} = L_G$. This result, proven in appendix \ref{appendix:proof_of_consistensy_lemma},  provides the critical missing link required for inference of the latent tree from the similarity matrix, which can be estimated from observed data. 


\begin{lemma}
\label{lem:graph_construction}
Let $\T$ be a tree with a set of non-terminal nodes $R$.  Let $G$ be the similarity graph of $\T$. Then there is a tree $\widetilde{\T}$ with the same topology as $\T$ but different edge weights, such that 
\[
 L_G = L_{\widetilde{\T}/R}.
\]
\end{lemma}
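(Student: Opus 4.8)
The plan is to construct the twin tree $\widetilde{\T}$ explicitly by choosing edge weights so that the Schur complement of its Laplacian with respect to the internal nodes reproduces the similarity matrix $S$. First I would recall the precise form of the reduced Laplacian: if $\T$ has node set $V$, internal nodes $R$, and terminal nodes $V\setminus R$, then $L_{\widetilde{\T}/R}$ is a weighted complete graph Laplacian on the terminal nodes, whose off-diagonal entry $(i,j)$ equals minus the \emph{effective conductance} between terminals $x_i$ and $x_j$ computed from the edge weights (conductances) of $\widetilde{\T}$; equivalently, the off-diagonal entries are determined by the pairwise \emph{resistance distances}. So the task reduces to: given the similarity values $S(x_i,x_j)$, find edge conductances on a tree with the fixed topology of $\T$ so that the induced terminal-to-terminal effective conductances match the off-diagonal entries of $L_G = D - S$.

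The key structural fact I would exploit is that $S$ is multiplicative along tree paths: $S(x_i,x_j) = \prod_{e \in \mathrm{path}(x_i,x_j)} s_e$ where $s_e$ is the per-edge similarity factor (this follows from the discussion in Section \ref{sec:similarity_matrix} and $0 < \delta \le s_e \le \xi < 1$). On a tree, effective conductance between two leaves is the harmonic sum (series combination) of the conductances along the connecting path: $1/c_{\mathrm{eff}}(x_i,x_j) = \sum_{e \in \mathrm{path}(x_i,x_j)} 1/c_e$. Thus resistance distance is \emph{additive} along tree paths, while similarity is \emph{multiplicative}. The natural move is therefore to pass to logarithms: set the resistance of edge $e$ in $\widetilde{\T}$ to be $r_e = -\log s_e > 0$ (equivalently conductance $c_e = -1/\log s_e$), so that the resistance distance between leaves $x_i, x_j$ in $\widetilde{\T}$ equals $-\log S(x_i,x_j)$. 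However, $L_G$ is not literally a resistance-distance-based object; its entries are $-S(x_i,x_j)$, not $-(-\log S)$. So a direct identification fails, and the honest approach is to work with the Schur-complement formula for $L_G$ itself rather than resistance distances.

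Consequently, the cleaner route — and the one I would actually carry out — is algebraic: write $L_G = D - S$, and observe that I need edge weights $\{w_e\}$ on $\widetilde{\T}$ such that $L_{\widetilde{\T}}/R = D - S$. Partition $L_{\widetilde{\T}}$ in block form with the terminal block, the internal block $L_R$, and the coupling blocks, and compute $L_{\widetilde{\T}}/R = A - B L_R^{-1} C$ symbolically. Because $\widetilde{\T}$ is a tree, $L_R^{-1}$ and the resulting products have closed-form expressions in terms of the edge weights along paths, and one can check that the resulting off-diagonal entry between $x_i$ and $x_j$ is a monotone (indeed, for suitable parametrization, multiplicative) function of the path edge weights. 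Matching this to $-S(x_i,x_j) = -\prod_e s_e$ then pins down each $w_e$ uniquely as an explicit expression in $s_e$ (and, for leaf edges, in the leaf degrees, which is where the normalization $S_{ii}=1$ and the definition of $D$ enter). The main obstacle I anticipate is precisely this bookkeeping: verifying that the Schur-complement entries of a general weighted tree Laplacian factor multiplicatively along paths (so that the multiplicative structure of $S$ can be matched edge-by-edge), and handling the diagonal/degree entries consistently so that $L_{\widetilde{\T}}/R$ is genuinely a graph Laplacian with the prescribed row sums. Once the multiplicative factorization of the reduced Laplacian is established, solving for $w_e$ is routine, and since the topology is unchanged by construction, the lemma follows.
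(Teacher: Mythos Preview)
Your proposal has a genuine gap at the step you yourself flag as the ``main obstacle.'' The claim that the off-diagonal entry $(i,j)$ of $L_{\widetilde{\T}}/R$ is a function only of the edge weights along the path from $x_i$ to $x_j$ --- let alone a multiplicative one --- is false for a generic weighted tree Laplacian. When you eliminate an internal node $h$ via the star--mesh transform, the new conductance between two of its neighbors is $c_a c_b / \sum_\ell c_\ell$, and the denominator involves \emph{all} branches incident to $h$, not just the two on the $x_i$--$x_j$ path. Iterating this, the final entry between two leaves picks up contributions from every internal node, including ones off the connecting path (eliminating a later node creates parallel edges that add to earlier ones). So you cannot match $-\prod_e s_e$ to the Schur-complement entry edge-by-edge in the way you suggest; the system you would have to solve is genuinely global, and your proposal gives no mechanism for showing it is consistent.

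The paper's proof avoids this entirely by going in the opposite direction: rather than writing down $L_{\widetilde{\T}}/R$ and solving for the weights, it \emph{starts} from $G$ and iteratively \emph{adds} internal nodes one at a time, at each step choosing weights so that Schur-complementing the new node back out recovers the previous graph. The nontrivial invariant maintained is that the intermediate graph is always a ``multiplicative subgraph'' of an accompanying tree $\T_i$ (with the same topology as $\T$ but evolving weights); this is what makes each one-node insertion explicitly solvable. The multiplicative-along-paths structure of $S$ is therefore used not to read off the final weights directly, but to guarantee that each local insertion step closes up. This iterative construction is the missing idea in your plan.
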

Combining Lemma \ref{lem:graph_construction} with Theorem \ref{thm:schur_complement_thm} yields the following proof 
 of Theorem \ref{thm:consistenct_partitioning}.

\begin{proof}[Proof of Theorem \ref{thm:consistenct_partitioning}]
Let $L_G$ be the Laplacian matrix of the similarity graph $G$. By Lemma \ref{lem:graph_construction} there is a tree $\widetilde \T$ with the same topology as $\T$ such that
$
L_G = L_{\widetilde T/R}.
$
By Theorem \ref{thm:schur_complement_thm}, partitioning the terminal nodes of $\widetilde \T$ according to the sign pattern of the Fiedler vector of $L_{\widetilde \T/R}$ yields adjacent clans in $\widetilde \T$. Since $L_G = L_{\widetilde \T/R}$ and $\widetilde \T$ has the same topology as $\T$, it follows that 
partitioning the terminal nodes of $\T$ according to the Fiedler vector of $L_G$ yields adjacent clans in $\T$.
\end{proof}

\subsection{Correctness of the merging step} \label{sec:consistency_merging}
Step (iii) of STDR merges the two subtrees, $\T_1$ and $\T_2$, that were constructed from the two disjoint subsets of terminal nodes $C_1$ and $C_2$. As described in Section \ref{sec:algorithm}, this step is done by finding for each tree its placeholder edge as  the edge with the smallest score $d(e)$, Eq. \eqref{eq:edge_score}. 
Here, we prove that this merging step is correct,  under the following two assumptions on its input (the output of steps (i) and (ii)):
the two subtrees $\T_1,\T_2$ correspond to adjacent clans in $\T$ and their internal structure was recovered correctly.

\begin{theorem}\label{thm:merge_consistency}
Let $C_1$ and $C_2$ be the terminal nodes of two adjacent clans that partition a tree $\T$. Let $\T_1$ and $\T_2$ be the internal structures of these clans. Then given the exact similarity matrix $S(C_1,C_2)$, minimizing the criterion in Eq. \eqref{eq:edge_score} yields the correct placeholder edge.
\end{theorem}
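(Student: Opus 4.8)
The plan is to reduce Theorem~\ref{thm:merge_consistency} to the spectral characterization of the placeholder edge in Lemma~\ref{lem:merge_trees}, together with the combinatorial uniqueness of the correct placeholder edge coming from Lemma~\ref{lem:correct_placeholder}. I will argue for $\T_1$; the argument for $\T_2$ is identical with the left singular vector $u$ replaced by the right singular vector $v$ of $S(C_1,C_2)$ and with $C_1$ replaced by $C_2$.

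First I would record the elementary facts that make the score $d(e)$ in \eqref{eq:edge_score} well-defined and nonnegative. Since $C_1,C_2$ are clans, $S(C_1,C_2)$ is rank one by Lemma~\ref{lem:affinity_spectral}, so the factorization \eqref{eq:clans_svd} holds with $\sigma>0$. Every entry of $S(C_1,C_2)$ is a product of edge similarities, each at least $\delta>0$, hence strictly positive; therefore $u$ and $v$ may be taken entrywise positive, so for every edge $e$ of $\T_1$ the blocks $u_{A(e)},u_{B(e)}$ are nonzero and $\|S(A(e),B(e))\|_F>0$ (also using that removing an edge of the binary tree $\T_1$ leaves terminal nodes on both sides). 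Consequently $d(e)$ is well-defined, $d(e)\ge 0$, and since the inner minimization over the scalar $\alpha$ is a quadratic attaining its minimum, $d(e)=0$ if and only if there exists $\alpha$ with $S(A(e),B(e))=u_{A(e)}\,\alpha\,u_{B(e)}^{T}$.

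Next I would invoke Lemma~\ref{lem:merge_trees}: this exact factorization holds if and only if $e$ is the correct placeholder edge of $\T_1$. Combined with the previous step, $d(e)=0$ precisely for the correct placeholder edge and $d(e)>0$ for every other edge. It then remains to note that the correct placeholder edge is unique, so that $\argmin_{e\in\T_1}d(e)$ is a single, correct edge. Here Lemma~\ref{lem:correct_placeholder} is used: a placeholder edge is correct iff it splits $C_1$ into two subsets that are each clans of $\T$; the clan $C_1$ has a unique root $h$ (the unique node of $C_1$ incident to the edge separating $C_1$ from the rest of $\T$, by Definitions~\ref{def:clan}--\ref{def:clan_root}), $h$ has exactly two neighbors inside $C_1$, and the edge of $\T_1$ obtained by suppressing the degree-two node $h$ is the only edge whose removal yields two such clans. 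Hence $e^\ast=\argmin_{e\in\T_1}d(e)$ is exactly this edge; applying the same to $\T_2$ and reconnecting the two inserted roots produces the correct merged topology, as described in Section~\ref{sec:tree_merging}.

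The only real obstacle is the strict-positivity gap: one must ensure that at every incorrect edge the residual in \eqref{eq:edge_score} is genuinely nonzero (equivalently, bounded away from zero, since there are only finitely many edges), not merely generically nonzero. In the population setting this is immediate from the ``only if'' direction of Lemma~\ref{lem:merge_trees}: for an incorrect edge no consistent $\alpha$ exists at all, so each of the finitely many residuals is strictly positive and the correct edge is the unique minimizer. The quantitative form of this gap is exactly what the finite-sample analysis of Section~\ref{sec:finite_sample_mearging} must later control when only an estimate $\hat S$ is available, but it plays no role in the exact-$S$ statement proved here.
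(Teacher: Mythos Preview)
Your proposal is correct and follows essentially the same route as the paper: apply both directions of Lemma~\ref{lem:merge_trees} to conclude that $d(e^\ast)=0$ for the correct placeholder edge and $d(e)>0$ for every other edge, so $e^\ast=\argmin_e d(e)$. You add extra care about well-definedness of $d(e)$ and uniqueness of the correct placeholder edge via Lemma~\ref{lem:correct_placeholder}, which the paper leaves implicit, but the core argument is identical.
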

\begin{proof}
By Lemma \ref{lem:merge_trees}, for the correct placeholder edge $e^*$ there exists an $\alpha \in \RR$ such that
\[
S(A(e^*),B(e^*)) = u_{A(e^*)} \alpha u_{B(e^*)}^T.
\]
Hence $d(e^*) = 0$. If $e$ is an incorrect placeholder edge, then again according to Lemma \ref{lem:merge_trees} there is no constant $\alpha$ that satisfies the equation, and hence $d(e) > 0$ which implies $e^\ast = \argmin d(e)$.
\end{proof}

\section{Finite sample guarantees for STDR}\label{sec:finite_sample}

In practice, the true similarity matrix $S$ is unknown, and an estimate $\hat{S}$ is computed from a sequence data of length $n$.
In this section we show that STDR is still able to correctly recover the tree provided that $\hat{S}$ is sufficiently close to $S$. Specifically, in sections \ref{sec:finite_sample_partition} and \ref{sec:finite_sample_mearging}, we derive  lower bounds on the number of samples required for the partitioning step
and the merging step to succeed with high probability.  In Section \ref{sec:finite_sample_comperison} we compare these results to the guarantees available for other tree recovery algorithms. 

For simplicity, in the finite sample  analysis, we assume the Jukes-Cantor (JC) model of sequence evolution, where each transition matrix is parameterized by a single mutation rate $\theta(i,j)$:
\begin{equation}
P(h_i|h_j)_{ba} = P[h_i = b|h_j=a] = 
\begin{cases}\label{eq:jc_model}
        1-\theta(i,j) & a = b \\
        \theta(i,j)/(\ell-1) & a \neq b.
\end{cases}
\end{equation}
According to this model, the similarity between adjacent nodes defined in Eq. \eqref{eq:similarity} simplifies to
\[
S(h_i,h_j) = \left(1-\frac{\ell}{\ell-1}\theta(i,j) \right)^{\ell-1}.
\]
By Eq. \eqref{eq:assumption_1} the similarity is strictly positive and hence $\theta(i,j) < (\ell-1)/\ell$.
We remark that our analysis can be extended, under minor additional assumptions to  more general models of evolution as in \cite[Lemma 4.8]{jaffe2020spectral}.
We present results for the top level of the tree partitioning and merging.
Following the proof, we show in Remark \ref{rem:one_step_to_many} that the same guarantees hold for multiple partitions and merging steps. 



\subsection{Finite sample guarantees for the partitioning step}\label{sec:finite_sample_partition}

We compute the number of samples $n$ required for the partitioning step to yield two clans with high probability. To this end, we require that in the population setting, the entries of the Fiedler vector are bounded away from zero.
To that end, we assume that the similarity matrix $S$ satisfies the hierarchical constant block model (CBM) addressed in \cite{balakrishnan2011noise}. We assume there is a hierarchy of partitions $A_C$ and $B_C$, such that for each partition there is a (different) constant $c$ such that 
\begin{align}\label{eq:hbm}
S(x,y) &= c \qquad \forall (x,y) \in A_C \times B_C, \notag \\
S(x,y) &> c \qquad \forall (x,y) \in A_C \times A_C \quad  \text{and} \quad \forall(x,y) \in B_C \times B_C.
\end{align}
In phylogenetics, this assumption is satisfied in the molecular clock model \cite{kumar2005molecular}, where the probability of mutation between adjacent nodes is determined by two factors: (i) the edge length between them and (ii) a mutation rate matrix that is constant throughout the tree. The structure of the rate matrix is determined by the choice of  evolutionary model, such as  Jukes-Cantor. In addition, the path length between all terminal nodes and the root is constant. 
This implies that for every ancestor $h$ (internal node) the similarity between the terminal nodes $A_C$ on the left of $h$ and the nodes $B_C$ on the right of $h$ is constant as in Eq. \eqref{eq:hbm}. 
For the hierarchy of partitions, we denote by $\eta$ the maximum over all partitions $C$ of the ratio between the size of left and right parts $A_C,B_C$. 
\begin{equation}\label{eq:max_ratio}
\eta = \max_C \{|A_C|/|B_C|,|B_C|/|A_C|\}.
\end{equation}
This factor serves as a measure for the \textit{balancedness} of the tree. In addition, we denote by $r(\T)$ the diameter of $\T$, which is the maximal distance between pairs of terminal nodes,
\begin{equation}\label{eq:diam}
r(\T) = \max_{i,j} (-\log S(x_i,x_j)).
\end{equation}


Finally, we denote by $h(\T)$ the depth of $\T$ as defined in \cite{erdHos1999few}:
\begin{definition}\label{def:depth}
Let $\T_1,\T_2$ be two rooted subtrees with respective roots $h_1,h_2$ obtained by removing an edge $e(h_1,h_2)$ from $\T$. Let $d_1(e),d_2(e)$ be the distances $\log S(h_1,x_i) $ and $\log S(h_2,x_j)$ from $h_1,h_2$ to  the \textit{closest} leaves $x_i$ and $x_j$ in $\T_1,\T_2$, respectively. Then 
\begin{equation}\label{eq:depth}
h(\T) = \max_e \max \{d_1(e),d_2(e)\}.
\end{equation}
\end{definition}
Note that $h(\T) < r(\T)$ as the maximal distance between terminal nodes is larger than any distance between a pair of terminal and non terminal nodes. The following theorem bounds the number of samples $n$ by the properties of the tree defined in Eqs. \eqref{eq:max_ratio},\eqref{eq:depth} and \eqref{eq:depth}.   
\begin{theorem}\label{thm:finite_split}
Let $\T$ be a Jukes-Cantor evolutionary tree with $m$ terminal nodes, with a similarity matrix $S$ that satisfies the assumptions made for the CBM. If the number of samples $n$ satisfies
\[
n \geq 4\ln \left(\frac{2m^2}{\epsilon}\right)\eta \ell^2 m(\sqrt{m}+1)^2e^{2r(t)}\max\left\{1,\frac{(1+\eta)^2}{(e^{r(\T)-h(\T)}-1)^2}\right\},
\]
then STDR partitions the terminal nodes into two clans with probability at least $1-\epsilon$.
\end{theorem}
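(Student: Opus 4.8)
The plan is to transfer the population guarantee of Theorem~\ref{thm:consistenct_partitioning} to the finite-sample regime by a perturbation argument: show that for $n$ as large as stated, the empirical Fiedler vector $\hat v$ of the estimated Laplacian $L_{\hat G}$ has exactly the same sign pattern as the population Fiedler vector $v$ of $L_G$, so the empirical partition coincides with the population one, which by Theorem~\ref{thm:consistenct_partitioning} is a pair of adjacent clans. I would assemble this from three ingredients: a lower bound on how far the entries of the population $v$ are from zero, a lower bound on the relevant eigengap of $L_G$, and a concentration bound showing $L_{\hat G}$ is close to $L_G$.

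The first two ingredients are where the CBM assumption \eqref{eq:hbm} enters. Invoking the analysis of \cite{balakrishnan2011noise} for the constant block model, I would show that $v$ is piecewise constant on the top-level blocks $A_C,B_C$, takes opposite signs on them, and satisfies $\min_i |v(i)| \geq \gamma_0$ for an explicit $\gamma_0=\gamma_0(\eta,m)$; and that $\lambda_2(L_G)$ is simple, separated from $\lambda_1=0$ and from $\lambda_3(L_G)$ by an explicit gap $\delta_\lambda=\delta_\lambda(\eta,r(\T),h(\T))$. Both $\gamma_0$ and $\delta_\lambda$ degrade as the tree becomes unbalanced (through $\eta$) and as branch lengths grow, since the weakest cut weight, and hence $\delta_\lambda$, scales with the smallest similarities, which are bounded below by $e^{-r(\T)}$ by definition of $r(\T)$; the factor $1/(e^{r(\T)-h(\T)}-1)^2$ in the statement is precisely the reciprocal squared eigengap in the regime where the tree has two comparably weak central cuts.

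For the concentration step, in the JC model $\hat S_{ij}=\bigl(1-\tfrac{\ell}{\ell-1}\hat q_{ij}\bigr)^{\ell-1}$, where $\hat q_{ij}$ is the empirical mismatch frequency between rows $i$ and $j$; since $t\mapsto(1-\tfrac{\ell}{\ell-1}t)^{\ell-1}$ has derivative bounded in absolute value by $\ell$ on the relevant range, Hoeffding's inequality for $\hat q_{ij}$ gives $\Pr[|\hat S_{ij}-S_{ij}|>\beta]\le 2e^{-cn\beta^2/\ell^2}$ for each pair, and a union bound over the $\binom m2$ pairs yields, with probability at least $1-\epsilon$, the simultaneous bound $\max_{ij}|\hat S_{ij}-S_{ij}|\le\beta$ with $\beta\asymp\ell\sqrt{\ln(m^2/\epsilon)/n}$. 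An entrywise bound of size $\beta$ gives $\|L_{\hat G}-L_G\|_2\le\|L_{\hat G}-L_G\|_F\lesssim m\beta$, since the perturbations of $S$ and of the degree matrix are each at most $m\beta$ in operator norm. Combining with the Davis--Kahan theorem and the eigengap from the previous step, after fixing the sign of $\hat v$ we obtain $\|\hat v-v\|_\infty\le\|\hat v-v\|_2\le c\,\|L_{\hat G}-L_G\|_2/\delta_\lambda\lesssim m\beta/\delta_\lambda$; if this is strictly below $\gamma_0$, no coordinate of $v$ can flip sign, so $\{\hat C_1,\hat C_2\}=\{C_1,C_2\}$ are adjacent clans. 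Imposing $m\beta/\delta_\lambda<\gamma_0$ and substituting $\beta\asymp\ell\sqrt{\ln(m^2/\epsilon)/n}$ gives $n\gtrsim\ln(m^2/\epsilon)\,\ell^2 m^2/(\gamma_0\delta_\lambda)^2$; plugging in the explicit $\gamma_0$ and $\delta_\lambda$ — with $(\gamma_0\delta_\lambda)^{-2}$ supplying the factor $\eta\,e^{2r(\T)}\max\{1,(1+\eta)^2/(e^{r(\T)-h(\T)}-1)^2\}$ and the dimension factor sharpening from $m^2$ to $m(\sqrt m+1)^2$ — reproduces the stated bound. Remark~\ref{rem:one_step_to_many} then extends this to every recursive partitioning step by a further union bound, each sub-instance again being a CBM tree with no worse parameters.

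The main obstacle is the first ingredient: deriving \emph{explicit}, tree-parameter-dependent lower bounds on the Fiedler-entry margin $\gamma_0$ and the spectral gap $\delta_\lambda$ of $L_G$ under the constant block model. This amounts to analyzing the spectrum of the Schur-complemented tree Laplacian $L_{\widetilde\T/R}$ of Lemma~\ref{lem:graph_construction} in the hierarchical constant-block regime and tracking how the block sizes $|A_C|,|B_C|$ and the branch lengths enter the eigenvalues and eigenvector; once $\gamma_0$ and $\delta_\lambda$ are pinned down, the concentration and Davis--Kahan steps are routine. A minor secondary issue is the boundary case $\theta(i,j)\to(\ell-1)/\ell$ (equivalently $S_{ij}\to0$) in the concentration step, handled by the truncation implicit in any valid similarity estimator together with the lower bound $S_{ij}\ge e^{-r(\T)}$.
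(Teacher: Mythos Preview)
Your proposal is correct and follows essentially the same route as the paper: the paper packages the Fiedler-entry margin and eigengap bounds into Lemma~\ref{lem:symmetric_spectrum} (a reformulation of \cite[Lemma~7]{balakrishnan2011noise}, yielding $|v_2(i)|\ge 1/\sqrt{m\eta}$, $\lambda_2=me^{-r(\T)}$, and $\lambda_3\ge\frac{m}{1+\eta}(\eta e^{-r(\T)}+e^{-h(\T)})$), combines these with Davis--Kahan and the bound $\|L-\hat L\|\le(\sqrt m+1)\|S-\hat S\|$ into Lemma~\ref{lem:davis_khan_for_trees_mc}, and then invokes the spectral-norm concentration bound $\Pr(\|\hat S-S\|\le t)\ge 1-2m^2\exp(-2nt^2/\ell^2m^2)$ from \cite[Lemma~4.7]{jaffe2020spectral} rather than an entrywise Hoeffding plus union bound. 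The only substantive difference is that the paper works with the spectral norm of $S-\hat S$ throughout, which is what produces the $(\sqrt m+1)$ factor directly rather than by ``sharpening'' from a Frobenius bound.
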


To prove the theorem, we derive a bound on the error that the partitioning step can tolerate in the estimate $\hat S$.


\begin{lemma}\label{lem:davis_khan_for_trees_mc}
	Assume a tree with $m$ terminal nodes generated according to the molecular clock model. If the estimate $\hat S$ of its similarity matrix satisfies
	\begin{equation}
		\|S-\hat S\| \leq 
    \frac{\sqrt{m}e^{-r(\T)}}{\sqrt{\eta} 2^{3/2}(\sqrt{m}+1)}\min\left\{1,\frac{1}{1+\eta}\big(e^{r(\T)-h(\T)}-1\big)\right\},    
	\end{equation}	 
	then STDR correctly partitions the terminal nodes into two clans.
\end{lemma}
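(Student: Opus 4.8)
The plan is to reduce the statement to a perturbation bound for the Fiedler vector (the eigenvector of $L_G$ associated with the second-smallest eigenvalue) combined with a sign-stability argument. Recall that by Theorem \ref{thm:consistenct_partitioning} (via Lemma \ref{lem:graph_construction} and Theorem \ref{thm:schur_complement_thm}), the population Fiedler vector $v$ of $L_G$ induces the correct partition into two adjacent clans through its sign pattern. Under the CBM/molecular clock assumption \eqref{eq:hbm}, the relevant block structure forces the entries of $v$ to be piecewise constant on the two top-level clans $A_C, B_C$, with the two constant values having opposite signs and magnitude determined by the sizes $|A_C|,|B_C|$; in particular $\min_i |v(i)|$ is bounded below by something like $1/\sqrt{(1+\eta)\max\{|A_C|,|B_C|\}}\gtrsim 1/\sqrt{\eta m}$. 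So the first step is to make this lower bound on $\min_i |v(i)|$ precise from \eqref{eq:hbm} and \eqref{eq:max_ratio}.

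The second step is to control the spectral gap of $L_G$ that separates the Fiedler eigenvalue $\lambda_2$ from $\lambda_3$ (the gap that appears in the denominator of Davis–Kahan). Here I would again exploit the block structure: the similarity entries are bounded below by $e^{-r(\T)}$ (from \eqref{eq:diam}) and, crucially, the ``within-block exceeds across-block'' strict inequality in \eqref{eq:hbm} is quantified by the depth $h(\T)$ — the smallest within-clan similarity exceeds the across-clan constant by a multiplicative factor related to $e^{r(\T)-h(\T)}-1$. Translating this into an eigenvalue gap of $L_G$ is the technical heart of the argument: one writes $L_G$ (restricted to the relevant subspace) as a rank-structured matrix plus a positive-definite remainder and reads off that $\lambda_3-\lambda_2 \gtrsim m\, e^{-r(\T)}\,(e^{r(\T)-h(\T)}-1)/(1+\eta)$ or similar. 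I expect this eigenvalue-gap estimate, and matching its exact form to the $\min\{1,\tfrac{1}{1+\eta}(e^{r(\T)-h(\T)}-1)\}$ term in the statement, to be the main obstacle — it is where the specific combinatorics of the CBM hierarchy and the depth parameter $h(\T)$ enter, and it requires care because $L_G$ is a Schur complement (Kron reduction) of the tree Laplacian rather than the tree Laplacian itself.

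The third step is a standard Davis–Kahan / $\sin\Theta$ application: if $\|S-\hat S\| = \|L_G - L_{\hat G}\|$ (the diagonal degree terms contribute comparably, up to the $(\sqrt m+1)$ factor) is smaller than a constant times the gap, then the perturbed Fiedler vector $\hat v$ satisfies $\|\hat v - v\|_2 \lesssim \|S-\hat S\|/(\lambda_3-\lambda_2)$. Finally I combine steps 1 and 3: the partition induced by $\mathrm{sign}(\hat v)$ agrees with that induced by $\mathrm{sign}(v)$ provided $\|\hat v - v\|_\infty \le \|\hat v-v\|_2 < \min_i|v(i)|$. Substituting the lower bound on $\min_i|v(i)|$ from step 1 and the gap bound from step 2 into this inequality, and solving for $\|S-\hat S\|$, yields exactly the threshold $\frac{\sqrt m\,e^{-r(\T)}}{\sqrt\eta\,2^{3/2}(\sqrt m+1)}\min\{1,\tfrac{1}{1+\eta}(e^{r(\T)-h(\T)}-1)\}$; the $2^{3/2}$ and the $(\sqrt m +1)$ are the accumulated numerical constants from the Davis–Kahan constant and from passing between $\|L_G-L_{\hat G}\|$ and $\|S-\hat S\|$. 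Once $\hat v$ gives the correct top-level split into clans, Theorem \ref{thm:consistenct_partitioning} guarantees these are adjacent clans of $\T$, completing the proof.
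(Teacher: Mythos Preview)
Your proposal is correct and follows essentially the same route as the paper: bound $\|L-\hat L\|\le(\sqrt m+1)\|S-\hat S\|$, apply Davis--Kahan to the Fiedler vector, and invoke a lower bound $\min_i|v_2(i)|\ge 1/\sqrt{\eta m}$ to conclude sign stability. The paper does not derive the spectral gap and the entrywise bound on $v_2$ from scratch but packages them into Lemma~\ref{lem:symmetric_spectrum}, which is a reformulation of a result of Balakrishnan et~al.\ for the constant block model; your step~1 and step~2 are exactly that lemma.

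One clarification on step~2: you attribute the $\min\{1,\tfrac{1}{1+\eta}(e^{r(\T)-h(\T)}-1)\}$ factor entirely to the gap $\lambda_3-\lambda_2$, but Davis--Kahan for the second eigenvector requires the \emph{two-sided} gap $\gamma_2=\min(\lambda_2-\lambda_1,\lambda_3-\lambda_2)$. Since $\lambda_1=0$ and $\lambda_2=me^{-r(\T)}$, the first branch contributes the ``$1$'' in the minimum, while $\lambda_3-\lambda_2\ge \tfrac{m}{1+\eta}(e^{-h(\T)}-e^{-r(\T)})=\tfrac{me^{-r(\T)}}{1+\eta}(e^{r(\T)-h(\T)}-1)$ gives the second branch. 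With this adjustment your outline matches the paper's proof line by line.
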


In our proof, we use the following lemma  regarding the spectrum of the Laplacian. This Lemma is a reformulation of lemma 7 from \cite{balakrishnan2011noise}
that addresses the spectrum of the CBM.
\begin{lemma}\label{lem:symmetric_spectrum}
Consider a tree with $m$ terminal nodes generated according to the molecular clock model. Let $L$ be the Laplacian of its similarity graph. The first second and third smallest eigenvalues of $L$ satisfy
\[
\lambda_1 = 0, \qquad \lambda_2 = m e^{-r(\T)}, \qquad \lambda_3 \geq \frac{m}{1+\eta}\big (\eta e^{-r(\T)}+ e^{-h(\T)}\big).
\]
The elements $v_2(i)$ of the eigenvector that corresponds to $\lambda_2$ satisfy $|v_2(i)| \geq \sqrt{\frac{1}{m\eta}}$.
\end{lemma}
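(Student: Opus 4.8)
The plan is to verify that, under the molecular clock model, the similarity matrix $S$ is exactly an instance of the hierarchical constant block model analyzed in \cite{balakrishnan2011noise}, and then to translate the spectral estimates of \cite[Lemma 7]{balakrishnan2011noise} into the tree quantities $r(\T)$, $h(\T)$ and $\eta$. The mechanism behind that lemma is an explicit diagonalization of the Laplacian, which I would also carry out directly here since it is short and makes the roles of $r(\T)$, $h(\T)$ and $\eta$ transparent.

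First I would record the only consequence of the molecular clock (ultrametric) assumption that is actually used: for every internal node $h$ of the rooted tree, all leaves of the clan rooted at $h$ lie at the same distance from $h$; write $s_h$ for the corresponding common similarity $S(h,\cdot)$. Multiplicativity of $S$ along the tree then gives, for the split of $\mathrm{clan}(h)$ into the leaf-sets $A_h,B_h$ of its two children, that $S(x,y)=s_h^{2}$ for all $x\in A_h$, $y\in B_h$, and $S(z,x)=S(z,h)\,s_h$ for all $z$ outside $\mathrm{clan}(h)$ — i.e.\ $S$ has precisely the recursive constant-block form \eqref{eq:hbm}, with off-block constant $s_h^{2}$ at node $h$. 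At the global root $\rho$ this constant is $s_\rho^{2}=e^{-r(\T)}$, since in an ultrametric tree the diameter is realized between two leaves whose lowest common ancestor is $\rho$.

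Next I would diagonalize $L$ explicitly. For each internal node $h$ take $w_h\in\RR^{m}$ equal to $1/|A_h|$ on $A_h$, $-1/|B_h|$ on $B_h$, and $0$ elsewhere (a phylogenetic-contrast vector). Using the symmetry from the previous step (every node outside $\mathrm{clan}(h)$ reaches $A_h$ and $B_h$ only through $h$, and $s_h$ is common to both children), a one-line computation gives $Lw_h=\mu_h w_h$ with $\mu_h = s_h\sum_{y\ \mathrm{leaf}}S(h,y) = |A_h\cup B_h|\,s_h^{2}+\sum_{y\notin \mathrm{clan}(h)}S(x_\ast,y)$ for any fixed $x_\ast\in\mathrm{clan}(h)$. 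Each $w_h$ sums to zero, so the $m-1$ vectors $\{w_h\}$ are mutually orthogonal and orthogonal to $\mathbf 1$; together with $\mathbf 1$ (eigenvalue $0$, simple since $G$ is connected) they form a complete orthogonal system of eigenvectors, so $\mathrm{spec}(L)=\{0\}\cup\{\mu_h\}$. For $h=\rho$ the second term vanishes and $|A_\rho\cup B_\rho|=m$, giving $\mu_\rho=m\,s_\rho^{2}=m\,e^{-r(\T)}$; normalizing $w_\rho$ yields entries of magnitude $\sqrt{|B|/(m|A|)}$ on $A=A_\rho$ and $\sqrt{|A|/(m|B|)}$ on $B=B_\rho$, both at least $\sqrt{1/(m\eta)}$ by the definition \eqref{eq:max_ratio} of $\eta$.

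What remains — and this is the main obstacle — is to show that $\mu_\rho$ is the smallest of the $\mu_h$, so that $\lambda_2=m\,e^{-r(\T)}$, and that the second smallest $\mu_h$ is at least $\frac{m}{1+\eta}\bigl(\eta e^{-r(\T)}+e^{-h(\T)}\bigr)$. This is precisely \cite[Lemma 7]{balakrishnan2011noise} once the block sizes are matched to $\eta$ through \eqref{eq:max_ratio} and the within-block similarities are matched to $e^{-h(\T)}$ through \eqref{eq:depth}. The delicate point is the second estimate: in $\mu_h = |A_h\cup B_h|\,s_h^{2}+\sum_{y\notin \mathrm{clan}(h)}S(x_\ast,y)$ a shrinking leaf count $|A_h\cup B_h|$ is traded against a growing similarity $s_h$ as $h$ moves off the root, so one must locate the minimizer (a child of $\rho$) and bound $\mu_h$ there; this balancing argument is what forces the factor $m/(1+\eta)$ and the depth term $e^{-h(\T)}$, and I would import it directly from \cite{balakrishnan2011noise} rather than re-derive it.
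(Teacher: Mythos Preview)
Your approach is correct and in fact establishes the lemma in the generality in which it is stated. The paper's own proof in the appendix treats only the binary \emph{symmetric} tree with constant edge similarity $\delta$: it uses the unweighted vectors $v_h$ that are $+1$ on $A_h$, $-1$ on $B_h$, $0$ elsewhere, which are eigenvectors of $L$ only when $|A_h|=|B_h|$, and then computes the eigenvalues explicitly as geometric sums in $\delta$. Your weighted contrast vectors $w_h$ (with values $1/|A_h|$ and $-1/|B_h|$) are the right objects for a general ultrametric similarity, and the formula $\mu_h=s_h\sum_y S(h,y)=|A_h\cup B_h|\,s_h^{2}+\sum_{y\notin\mathrm{clan}(h)}S(x_\ast,y)$ together with the lower bound $S(x_\ast,y)\ge s_\rho^{2}$ immediately gives $\mu_h\ge m\,s_\rho^{2}=\mu_\rho$, identifying $\lambda_2$ and $v_2$ exactly as claimed. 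Deferring the $\lambda_3$ bound to \cite[Lemma 7]{balakrishnan2011noise} is precisely what the paper itself says this lemma is (``a reformulation of lemma 7''), so that step is legitimate.

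One small slip to fix: ``each $w_h$ sums to zero, so the $m-1$ vectors are mutually orthogonal'' is not a valid inference. The orthogonality comes from the nested/disjoint structure of the clans: if $\mathrm{clan}(h')\subset A_h$ (say), then $w_h$ is constant on the support of $w_{h'}$ and $w_{h'}$ sums to zero there; if the clans are disjoint the supports are disjoint. State it that way and the argument is complete.
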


\begin{proof}[Proof of Lemma \ref{lem:davis_khan_for_trees_mc}]
Let $L$ and $\hat L$ be two symmetric matrices and let $v_i$ and $\hat v_i$ be their $i$-th eigenvectors, respectively. A variant of the Davis-Kahan theorem for perturbation of eigenvectors (see Theorem 2 of \cite{yu2015useful}) gives 
\begin{equation}\label{eq:davis_kahan}
\|v_i-\hat v_i\| \leq 2^{3/2} \frac{\|L-\hat L\|}{\gamma_i}.    
\end{equation}
where $\gamma_i =  \min\{|\lambda_i-\lambda_{i+1}|,|\lambda_i-\lambda_{i-1}|\} $ is the  eigengap. We apply the theorem to the Laplacian matrix $L=D-S$ (see Definition \ref{def:laplacian}), and its Fiedler vector $v_2$. 
The spectral norm $\|L-\hat L\|$ can be bounded by,
\begin{align}\label{eq:bound_error_laplacian}
\|L-\hat L\| &\leq \|D-\hat D\|+\|S-\hat S\| = \max_i \Big|\sum_k (S_{ik}-\hat S_{ik}) \Big| + \|S-\hat S\|  \notag   \\
&\leq  \max_i \sum_k  |S_{ik}-\hat S_{ik}| + \|S-\hat S\|   
\leq (\sqrt{m}+1)\|S-\hat S\|.
\end{align}
Substituting \eqref{eq:bound_error_laplacian} into \eqref{eq:davis_kahan} yields
\begin{equation}\label{eq:davis_kahan_S}
\|v_2 - \hat v_2\| \leq 2^{3/2} (\sqrt{m}+1) \frac{\|S-\hat S\|}{\gamma_2}.    
\end{equation}
From Lemma \ref{lem:symmetric_spectrum}  it follows that the spectral gap $\gamma_2$ is bounded by,
\begin{equation}\label{eq:spectral_gap}
\gamma_2 = \min(\lambda_2 - \lambda_1, \lambda_3 - \lambda_2) \geq  
m e^{-r(\T)} \min\left\{1,\frac{1}{1+\eta}\big(e^{r(\T)-h(\T)}-1\big)\right\}. 
\end{equation}
Combining Eqs. \eqref{eq:davis_kahan_S} and \eqref{eq:spectral_gap} proves that if
\begin{equation}\label{eq:sufficient_similarity}
\|S-\hat S\| \leq 
\frac{\sqrt{m}e^{-r(\T)}}{\sqrt{\eta} 2^{3/2}(\sqrt{m}+1)}\min\left\{1,\frac{1}{1+\eta}\big(e^{r(\T)-h(\T)}-1\big)\right\}
\end{equation}
then $\|v_2- \hat v_2\|<1/\sqrt{\eta m}$, which implies $\|v_2- \hat v_2\|_\infty<1/\sqrt{\eta m}$.
Thus, by Lemma \ref{lem:symmetric_spectrum} $\text{sign}(v_i)=\text{sign}(\hat v_i)$ for each $i \in [m]$. Hence, partitioning the terminal nodes according to $\text{sign}(\hat v_2)$ or $\text{sign}(v_2)$ yield the same result. As we proved in Theorem \ref{thm:consistenct_partitioning}, the resulting subsets are clans of the tree.
\end{proof}

Next, we prove Theorem \ref{thm:finite_split} under the additional assumption of the Jukes-Cantor model. The theorem is proved
by combining Lemma \ref{lem:davis_khan_for_trees_mc} with a concentration bound on the similarity matrix estimate $\hat S$, derived in \cite{jaffe2020spectral}.

\begin{proof}[Proof of Theorem \ref{thm:finite_split}]
From Lemma 4.7 of \cite{jaffe2020spectral}, under the JC model of evolution, 
\[
P\left(\|\hat S - S\| \leq t\right) \geq 1-2m^2 \mbox{exp}\left(-\frac{2nt^2}{\ell^2m^2}\right).
\]
Setting $t$ to the right hand side of \eqref{eq:sufficient_similarity} yields that if
\[
n \geq 4\ln \left(\frac{2m^2}{\epsilon}\right)\eta \ell^2 m(\sqrt{m}+1)^2e^{2r(t)}\max\left\{1,\frac{(1+\eta)^2}{(e^{r(\T)-h(\T)}-1)^2}\right\},
\]
the requirements of Lemma \ref{lem:davis_khan_for_trees_mc} are satisfied with probability at  least $1 - \eps$, which concludes the proof.
\end{proof}

\subsection{Merging step of STDR} \label{sec:finite_sample_mearging}
We derive finite sample bounds for the merging step of STDR. In contrast to the partitioning step, the guarantees for the merging step, presented in the following theorem, hold for any tree topology.

\begin{theorem}\label{thm:finite_merging}
Let $\T$ be a tree with $m$ terminal nodes, which consists of two subtrees $\T_1,\T_2$ with terminal nodes $C_1$ and $C_2$, respectively. Let $\{A,B\}$ be the partition of $C_1$ induced by the correct placeholder edge $e^\ast$, and let $\D = \min\{\|S(A,B)\|_F, \|S(C_1,C_2)\|_F\}$.
For any $\eps>0$, if the number of samples $n$ satisfies
\begin{equation}\label{eq:merging_sample_bound}
n \geq 8\ell^2m^3\left(\frac{2}{\D}+\frac{2.5}{\D^2} +  \frac{1+10\sqrt{2}}{\D^3}  \right)^2\left(\frac{ \xi^4}{\delta^6 (1-\xi^2)^2}\right)\log\left(\frac{2m^2}{\eps}\right),
\end{equation}
then STDR finds the correct placeholder edge in $\T_1$
with probability at least $1-\eps$.
\end{theorem}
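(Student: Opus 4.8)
The strategy is to transfer the exact statement of Theorem~\ref{thm:merge_consistency} --- that the population score satisfies $d(e^\ast)=0$ and $d(e)>0$ for every incorrect placeholder edge $e$ of $\T_1$ --- to the empirical score $\hat d$ computed from $\hat S$. It suffices to show that with probability at least $1-\eps$ one has $\hat d(e^\ast)<\hat d(e)$ for all $e\neq e^\ast$, so that the $\argmin$ in Eq.~\eqref{eq:edge_score} returns $e^\ast$. I would first record a concentration bound for $\hat S$ analogous to the one used in the proof of Theorem~\ref{thm:finite_split} (Lemma 4.7 of \cite{jaffe2020spectral}): under the Jukes--Cantor model the plug-in estimator of each $S_{ij}$ has variance controlled by the edge determinants bounded in Eq.~\eqref{eq:assumption_1}, giving $\Pr(\|\hat S-S\|\le t)\ge 1-2m^2\exp(-n t^2\delta^6(1-\xi^2)^2/(8\ell^2 m^3\xi^4))$, and hence the same control for every relevant submatrix, in particular $\hat S(C_1,C_2)$ and each $\hat M_e\defeq\hat S(A(e),B(e))$.

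Second, since $S(C_1,C_2)=u\sigma v^{T}$ is exactly rank one, its only nonzero singular value is $\sigma=\|S(C_1,C_2)\|_F\ge\D$, so a Wedin $\sin\Theta$ bound (the rectangular analogue of Eq.~\eqref{eq:davis_kahan}) gives $\|\hat u-u\|\le\sqrt2\,t/\D$ and likewise for the right singular vector. Third, I would use the closed form of the inner minimization in Eq.~\eqref{eq:edge_score}: for any edge $e$ with $M=M_e$, $\min_\alpha\|M-u_A\alpha u_B^{T}\|_F^2=\|M\|_F^2-(u_A^{T}Mu_B)^2/(\|u_A\|^2\|u_B\|^2)$, so that $d(e)$ (and $\hat d(e)$) is a smooth function of $(M,u)$ (resp.\ $(\hat M,\hat u)$). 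For $e=e^\ast$, Lemma~\ref{lem:merge_trees} gives $M_{e^\ast}=u_A\alpha u_B^{T}$ exactly, so $d(e^\ast)=0$; moreover, multiplicativity of the similarity along $\T$ lets one lower-bound the remaining denominators in the score, $\|\hat u_A\|\|\hat u_B\|$ and $\|\hat M_{e^\ast}\|_F$, by $\D$ up to factors of $m$ and $\xi$. This is precisely why $\D$ is defined as the minimum of $\|S(A,B)\|_F$ and $\|S(C_1,C_2)\|_F$: a single quantity controlling both the singular-value gap used for $\hat u$ and the score normalization at $e^\ast$. Propagating $\|\hat M_{e^\ast}-M_{e^\ast}\|\le t$ and $\|\hat u-u\|\le\sqrt2\,t/\D$ through the closed-form expression and collecting the terms of order $t/\D$, $t/\D^2$ and $t/\D^3$ yields $\hat d(e^\ast)\le t\,(2/\D+2.5/\D^2+(1+10\sqrt2)/\D^3)$, which after squaring and inverting the concentration bound produces the right-hand side of Eq.~\eqref{eq:merging_sample_bound}.

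The remaining and most delicate step is keeping every incorrect edge strictly above $\hat d(e^\ast)$. I would split the incorrect edges of $\T_1$ into those for which $\|S(A(e),B(e))\|_F$ is not too small and those for which it is tiny. For the former, a quantitative version of Lemma~\ref{lem:merge_trees} is needed: writing $S(A(e),B(e))=p\,q^{T}$ (it is still rank one because $A(e)$, obtained by cutting an edge of $\T_1\subset\T$, remains a clan of $\T$) with $p\propto u_{A(e)}$ but, by multiplicativity, $q=(S(s,x))_{x\in B(e)}$ only a two-valued rescaling of $u_{B(e)}=(S(\rho_1,x))_{x\in B(e)}$ --- the two values differing by the factor $S(\rho_1,s)^{-2}\ge\xi^{-2}$, with $s$ the $B(e)$-endpoint of $e$ and $\rho_1$ the root of the clan $C_1$ --- one obtains $d(e)=|\sin\angle(q,u_{B(e)})|$ bounded below by an explicit positive quantity in $\delta,\xi$ (and the part sizes), so that the Lipschitz bound on $|\hat d(e)-d(e)|$ is dominated by this gap once $n$ meets the stated bound. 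For the latter group, the normalization by $\|\hat M_e\|_F$ in Eq.~\eqref{eq:edge_score} forces $\hat d(e)$ close to $1$, because $\hat M_e$ is then dominated by sampling noise essentially orthogonal to the one-dimensional space spanned by $\hat u_{A(e)}\otimes\hat u_{B(e)}$, so these edges cannot be the minimizer. The main obstacle throughout is the bookkeeping: making the lower bound on $d(e)$ uniform over all incorrect edges of $\T_1$ and checking that, together with the $1/\D$-powers from the correct edge, it leaves exactly the $\D,\delta,\xi,m$-dependence stated in Eq.~\eqref{eq:merging_sample_bound}. Finally, setting $t$ to the induced threshold and applying the concentration bound gives the sample size; the passage from this single top-level merge to all merges in the STDR recursion is handled as in Remark~\ref{rem:one_step_to_many}.
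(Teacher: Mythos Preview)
Your overall three-part strategy --- concentration for $\hat S$, a perturbation bound relating $\hat d$ to $d$, and a quantitative lower bound on $d(e)$ for incorrect edges --- matches the paper's, but two of the key technical claims are wrong and would not produce the stated constants.

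First, the concentration inequality you quote is not Lemma~4.7 of \cite{jaffe2020spectral}. That lemma gives the purely combinatorial bound $\Pr(\|\hat S-S\|_F\le t)\ge 1-2m^2\exp(-2nt^2/(\ell^2m^2))$, with no $\delta,\xi$ whatsoever. The factor $\xi^4/\bigl(\delta^6(1-\xi^2)^2\bigr)$ in Eq.~\eqref{eq:merging_sample_bound} does \emph{not} come from the concentration step; it arises entirely from the lower bound on $d(e)$ at incorrect edges (the paper's Lemma~\ref{thm:merge_incorrect_lower_bound}), which for $\delta^2>1/2$ reads $d(e)\ge \delta^3(1-\xi^2)/(\sqrt{2m}\,\xi^2)$. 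Squaring and inverting this threshold is what contributes the $\delta,\xi$ and the extra factor of $m$. By misplacing these factors into the concentration bound you lose track of where the gap actually is and cannot justify the final constants.

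Second, and more seriously, the ``two-valued rescaling'' argument for the lower bound on $d(e)$ is incorrect. Label the side of $e$ that is \emph{not} a clan of $\T$ by $A$ (it contains the correct placeholder edge and hence the root $h_0$ of $C_1$). The left factor of the rank-one matrix $S(A,B)$ is proportional to $S(A,h_A)$, while $u_A\propto S(A,h_0)$; the ratio $S(x,h_A)/S(x,h_0)$ for $x\in A$ is \emph{not} two-valued but $(N+1)$-valued, where $N$ is the number of internal edges on the path from $h_0$ to $h_A$. Concretely, if $h_0,h_1,\ldots,h_N=h_A$ are the nodes on that path and $A_i\subset A$ the terminal nodes whose nearest path-node is $h_i$, then the ratio on $A_i$ equals $S(h_i,h_A)/S(h_i,h_0)$, a different value for each $i$. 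The paper's Lemma~\ref{thm:merge_incorrect_lower_bound} handles exactly this via the decomposition $A=A_0\cup\cdots\cup A_N$, rewriting $d^2(e)$ as the weighted sum in Eq.~\eqref{eq:merge_score_general}, then reducing to pairs of consecutive blocks by Lemma~\ref{lem:ratio_sum_inequality} and bounding each pair by Lemma~\ref{lem:merge_lower_bound_single_pair}. Your angle argument, as stated, only covers the case $N=1$ (the incorrect edge adjacent to $e^\ast$) and gives no uniform bound otherwise.

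Finally, note the paper does not separately bound $\hat d(e^\ast)$ and then treat incorrect edges case-by-case; it proves a single perturbation inequality $|d(e)-\hat d(e)|\le \|S-\hat S\|_F\bigl(2/\D+2.5/\D^2+(1+10\sqrt2)/\D^3\bigr)$ valid for \emph{every} edge $e$ (Lemma~\ref{lem:d_err_bound}), and then requires this to be at most half the gap $\delta^3(1-\xi^2)/(\sqrt{2m}\,\xi^2)$. This avoids the large/small $\|S(A(e),B(e))\|_F$ split you propose.
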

From Eq. \eqref{eq:merging_sample_bound}, if $\D\gg 1$ then the required number of samples is  $\widetilde O(m^3/\D^2)$. Assuming that the lower bound on the similarity between adjacent nodes $\delta$ is close to $1$, the value of $\D$ depends mainly on the size of the two submatrices $S(A,B)$ and $S(C_1,C_2)$. 
This analysis has important implications on the choice of the smallest partition $\tau$ in Algorithm \ref{alg:description}.
The number of samples in Eq. \eqref{eq:merging_sample_bound} is $\widetilde O(m)$ if $A,B$ and $C_2$ are of size $O(m)$, but is $\widetilde O(m^3)$ if $A$ and $B$ or $C_2$ are of size $O(1)$.  Thus on the one hand, reducing $\tau$ results in smaller subsets of terminal nodes, which improves the runtime of the reconstruction step of STDR. On the other hand it may affect the accuracy of the merging step. Figure \ref{fig:threshold} shows both runtime and accuracy of STDR as a function of the threshold parameter $\tau$, when applying STDR with RAxML or SNJ as its subroutine. The data consists of $n=1000$ samples generated from a binary symmetric tree with $m=2048$ terminal nodes. The accuracy of the algorithm degrades for small values of $\tau$ while the runtime improves by approximately half an order of magnitude.

Our proof of Theorem \ref{thm:finite_merging} consists of three steps: (i) In Lemma  \ref{thm:merge_incorrect_lower_bound} we derive a lower bound on the score $d(e)$ of an edge $e$ that is not the correct placeholder edge. (ii) Lemma \ref{lem:guarantee_s} provides a sufficient condition on the accuracy of the similarity matrix estimate $\hat S$ that guarantees the merging step will yield the correct placeholder edge. (iii) For the JC model, we derive an expression for the number of samples required for the condition in Lemma \ref{lem:guarantee_s} to hold with high probability. 

\paragraph{Step 1: A lower bound on the score $d(e)$ for incorrect edges}
In Section \ref{sec:consistency_merging}, we showed that $d(e)=0$ if and only if $e$ is the correct placeholder edge. Here, for the exact similarity matrix $S$ we derive a lower bound on $d(e)$, if $e$ is an incorrect placeholder edge in $\T_1$. 
\begin{lemma}\label{thm:merge_incorrect_lower_bound}
Let $\T$ be a tree that consists of two subtrees $\T_1,\T_2$, and let $e \in \T_1$ be an edge that is not the correct placeholder edge. Then,
\[
d(e) \geq 
    \begin{cases}
        \frac{(\sqrt{2}\delta)^{\log m} \delta^2 (1-\xi^2)}{2\sqrt{m} \xi^2} & \delta^2 \leq 0.5 \\
        \frac{\delta^3 (1-\xi^2)}{\sqrt{2m} \xi^2} & \delta^2 > 0.5.
    \end{cases}
\]
\end{lemma}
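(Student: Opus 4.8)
The plan is to take an incorrect placeholder edge $e \in \T_1$, with its induced partition $\{A(e),B(e)\}$ of $C_1$, and lower bound the best rank-one approximation error $\min_\alpha \|S(A(e),B(e)) - u_{A(e)}\alpha u_{B(e)}^T\|_F$ appearing in the numerator of $d(e)$, then divide by $\|S(A(e),B(e))\|_F$, which is at most $\sqrt{|A(e)||B(e)|} \le m$ since each entry of $S$ is at most $\xi < 1$. To get at the numerator, I would first recall from Lemma \ref{lem:merge_trees} that equality $S(A(e),B(e)) = u_{A(e)}\alpha u_{B(e)}^T$ holds iff $e$ is the correct placeholder edge; since $e$ is incorrect, the matrix $M \defeq S(A(e),B(e))$ is \emph{not} proportional to the rank-one matrix $u_{A(e)}u_{B(e)}^T$. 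The error $\min_\alpha \|M - \alpha u_{A(e)} u_{B(e)}^T\|_F$ equals $\|M - P_{u_{A(e)}} M P_{u_{B(e)}}\|_F$ where $P$ denotes orthogonal projection onto the respective unit vectors, so I need a quantitative lower bound on how far $M$ is from this projection.

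The key structural tool is that $M$ itself has a product (tree) structure: because $A(e)$ and $B(e)$ are the two sides of $\T_1$ split by $e$, and $C_1$ is a clan adjacent to $C_2$, I can express the entries of $S(A(e),B(e))$ through the similarities along paths passing through the endpoints of $e$ and the clan-root of $C_1$. Concretely, writing $h$ for the clan root of $C_1$ and $h_1,h_2$ for the endpoints of $e$ with (say) $A(e)$ on the $h_1$ side, for $a\in A(e)$, $b\in B(e)$ one has $S(a,b)=S(a,h_1)S(h_1,h_2)S(h_2,b)$, which would make $M$ rank one — so I must instead compare against the rank-one \emph{target} $u_{A(e)}u_{B(e)}^T$ coming from the SVD of $S(C_1,C_2)$. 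The vectors $u_{A(e)}$ and $u_{B(e)}$ are (up to scaling) the similarities $S(a,h)$ and $S(b,h)$ from the clan root $h$, whereas $M$'s factors involve $S(a,h_1)$, $S(h_2,b)$. The mismatch between "measuring from $h$" and "measuring from $h_1,h_2$" is exactly what is nonzero when $e\ne e^\ast$, and it will be governed by a single off-diagonal $2\times 2$ minor whose magnitude is bounded below using $\delta \le S \le \xi$ on the relevant edges. I would isolate the smallest such $2\times2$ block — pick one leaf on each side whose paths to $h$ diverge at $e$ — and bound its contribution to the Frobenius norm from below; the worst case is when the divergence happens $O(\log m)$ edges away from the leaves, giving the $\delta^{\log m}$ (resp. $\delta^3$) factor depending on whether $\delta^2 \le 0.5$, which controls whether the relevant sub-block sizes or the edge-length product dominates.

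The main obstacle I anticipate is making the "$2\times 2$ minor" argument fully rigorous: I need to show that no \emph{single} $\alpha$ can simultaneously fix all the entries, and quantify the residual uniformly over all incorrect edges $e$ and all placements of the split. I would handle this by observing that the best rank-one Frobenius approximation error is bounded below by the error on any fixed $2\times 2$ submatrix $M' = \begin{pmatrix} M_{a_1 b_1} & M_{a_1 b_2} \\ M_{a_2 b_1} & M_{a_2 b_2}\end{pmatrix}$ projected against the corresponding $2\times2$ piece of $u_{A(e)}u_{B(e)}^T$, and for a $2\times2$ matrix this error has a clean closed form in terms of $\det M'$ divided by the norms — so the bound reduces to: (i) exhibit $a_1,a_2\in A(e)$, $b_1,b_2\in B(e)$ whose four pairwise similarities are \emph{not} consistent with the rank-one structure forced by $u$, and (ii) lower-bound the resulting determinant-type quantity by expanding it in the edge similarities and using $\delta \le (\cdot) \le \xi$, which yields the $\tfrac{\delta^2(1-\xi^2)}{\xi^2}$ and $\tfrac{\delta^3(1-\xi^2)}{\xi^2}$ factors, while the $\sqrt m$ (resp. $\sqrt{2m}$) in the denominator comes from the normalizer $\|S(A(e),B(e))\|_F\le m$ together with the $\sqrt 2$-type constants from embedding a $2\times2$ block. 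I expect the choice of the distinguishing quadruple $a_1,a_2,b_1,b_2$ — showing it always exists when $e$ is incorrect, via Lemma \ref{lem:correct_placeholder} (at least one of $A(e),B(e)$ is not a clan of $\T$) — and tracking the $\log m$-depth factor cleanly through the product of edge similarities to be the delicate parts; everything else is bookkeeping with the bounds $\delta,\xi$ from \eqref{eq:assumption_1}.
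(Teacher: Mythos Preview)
Your proposal has the right instinct---the mismatch between ``measuring from the clan root $h_0$'' and ``measuring from the edge endpoints''---but the execution has two genuine gaps that would prevent you from reaching the stated bound.

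\textbf{First}, the $2\times2$ determinant idea does not do what you want. As you yourself note, $M=S(A(e),B(e))$ is rank one (it factors through the endpoint $h_A$ of $e$), so every $2\times2$ minor satisfies $\det M'=0$. The quantity you actually need is $\min_\alpha\|M'-\alpha\,(u_{A}u_{B}^T)'\|_F$, the error of fitting a \emph{fixed} rank-one direction, and this is not governed by $\det M'$ but by the angle between the two rank-one factors. Working this out correctly leads precisely to the paper's key identity: for $a\in A_i$ (where $A_0,\dots,A_N$ partition $A(e)$ according to the nearest node on the path $h_0\to h_A$), the $a$-th row of $u_Au_B^T$ equals $R_i=S(h_0,h_i)^2$ times the $a$-th row of $M$, so that
\[
d^2(e)\;=\;\min_\beta\;\frac{\sum_{i=0}^N (1-\beta R_i)^2\,\|S(A_i,B)\|_F^2}{\sum_{i=0}^N \|S(A_i,B)\|_F^2}.
\]
This exact weighted-average reformulation---not a submatrix bound---is the step you are missing.

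\textbf{Second}, even if you fix the formula, your plan of lower-bounding the numerator via a $2\times2$ (or $2\times|B|$) block and separately upper-bounding $\|S(A,B)\|_F\le m$ loses a factor of $\sqrt m$: the numerator then involves only two rows while the denominator involves all $|A|$ of them, so you would obtain at best $d(e)\gtrsim 1/m$ rather than the claimed $1/\sqrt{m}$. The paper avoids this by never decoupling numerator and denominator: it applies a ratio-of-sums inequality to the weighted average above, reducing to a \emph{pair of blocks} $A_i,A_{i+k}$ with the \emph{same} pair appearing in both numerator and denominator, and then bounds the resulting ratio $\min(x,y)/\max(x,y)$ of block norms via clan-norm estimates. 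That coupling is what produces the correct $m$-dependence.
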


For the proof of Lemma  \ref{thm:merge_incorrect_lower_bound}, we introduce new notation, illustrated in Figure~\ref{fig:incorrect_placeholder_edge}. The sets of terminal nodes of $\T_1$ and $\T_2$ are denoted by $C_1$ and $C_2$, respectively. 
We denote by $e^\ast \in \T_1$  the correct placeholder edge, and by $e \in \T_1$ an arbitrary incorrect placeholder edge. The edge $e$ splits the terminal nodes of $\T_1$ into $A$ and $B$ and has endpoints $h_A$ and $h_B$.  
We denote by $h_0,\ldots,h_{N}$ the non terminal nodes on the path between the root node of $\T_1$, denoted $h_0$, and $h_A=h_N$.  
We partition the terminal nodes in $A$ to $N+1$ subsets $A_0,\ldots,A_N$ according to $h_0,\ldots,h_N$ as
follows: Every node in $A$ is assigned to the closest non terminal node on the path between $h_0,\ldots,h_N$.
In the proof of Lemma  \ref{thm:merge_incorrect_lower_bound}, we use the following auxiliary lemma, proven in appendix \ref{appendix:proof_of_incorrect_merge_lemma}.
\begin{lemma}\label{lem:merge_lower_bound_single_pair}
Let $R_i= S(h_0,h_i)^2$. For any $1\leq i\leq N-1$ and $1 \leq k \leq (N-i)$ we have
\begin{equation}\label{eq:merge_lemma_two_terms}
\begingroup 
\setlength\arraycolsep{1pt}
     \min_\beta \frac{(1-\beta {R_i})^2 \|S(A_i,B)\|^2_F + (1-\beta {R}_{i+k})^2 \|S(A_{i+k},B)\|^2_F }{\|S(A_{i},B)\|^2_F + \|S(A_{i+k},B)\|^2_F} \geq \begin{cases}
        \frac{(2\delta^2)^{\log m} \delta^{2(k+1)} (1-\xi^2)^2}{4m \xi^4} & \delta^2 \leq 0.5 \\
        \frac{\delta^{2(k+2)} (1-\xi^2)^2}{2m \xi^4} & \delta^2 > 0.5
    \end{cases}
    \endgroup
\end{equation}
\end{lemma}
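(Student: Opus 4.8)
The plan is to reduce the two-term minimization over $\beta$ to a one-dimensional geometric estimate, exploiting the fact that $R_i = S(h_0,h_i)^2$ is strictly monotone along the path $h_0,\ldots,h_N$ because similarity decays multiplicatively along edges (Section~\ref{sec:similarity_matrix}). First I would observe that the expression on the left is exactly the squared residual of the best weighted $L^2$-fit of the constant-$1$ vector $(1,1)$ by the vector $(R_i, R_{i+k})$, with weights $w_i = \|S(A_i,B)\|_F^2$ and $w_{i+k} = \|S(A_{i+k},B)\|_F^2$. A direct computation gives that this residual equals
\[
\frac{w_i w_{i+k}\,(R_i - R_{i+k})^2}{(w_i + w_{i+k})(w_i R_i^2 + w_{i+k} R_{i+k}^2)}.
\]
So the problem becomes a matter of lower-bounding $(R_i - R_{i+k})^2$ from below and upper-bounding the denominator terms, i.e.\ controlling $\max\{R_i, R_{i+k}\}$ and the ratio $w_i/w_{i+k}$.

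Next I would extract the three ingredients. (1) \emph{The gap $(R_i - R_{i+k})^2$:} writing $R_{i+k} = R_i \prod_{j=i}^{i+k-1} S(h_j,h_{j+1})^2$, the bound $\delta \le S(h_j,h_{j+1}) \le \xi$ gives $R_i - R_{i+k} = R_i\bigl(1 - \prod S(h_j,h_{j+1})^2\bigr) \ge R_i(1-\xi^{2})$, and $R_i \ge \delta^{2i} \ge \delta^{2\log m}$ since the path from the root to $h_i$ has at most $\log m$ edges in a balanced tree; more carefully, I want the factor $\delta^{2(k+1)}$ or $\delta^{2(k+2)}$, which suggests the estimate should instead be anchored at $R_{i+k}$, writing $R_i - R_{i+k} \ge R_{i+k}(\delta^{-2k}-1)$ or bounding $R_i \ge R_{i+k} \ge \delta^{2(i+k)}$ and handling the telescoping carefully. (2) \emph{The weights:} each $\|S(A_\ell, B)\|_F^2$ is a sum over $|A_\ell|\cdot|B|$ entries, each in $[\delta^{2r(\T)}, \xi^2]$ up to the appropriate path length; the key point is that the \emph{ratio} $w_i w_{i+k}/(w_i+w_{i+k})^2 \ge \tfrac14 \min\{w_i/w_{i+k}, w_{i+k}/w_i\}$ when the smaller weight dominates, and $|A_i|, |A_{i+k}| \ge 1$ while the total over all blocks is $|A| \le m$, which is where the $1/m$ and the $1/(4m)$ versus $1/(2m)$ come from. (3) \emph{The split into the two cases $\delta^2 \le 0.5$ and $\delta^2 > 0.5$:} this is exactly the dichotomy between whether $(2\delta^2)^{\log m}$ is a decaying or a controlled quantity; in the regime $\delta^2 > 0.5$ one can afford to drop the $\log m$-dependent factor and keep a clean $\delta^{2(k+2)}$, whereas for $\delta^2 \le 0.5$ one must pay the $(2\delta^2)^{\log m} = (2\delta^2)^{\log m}$ price coming from bounding $R$ along a depth-$\log m$ path while simultaneously pulling out a factor $2^{\log m} = m$ to cancel against an $m$ in the denominator.

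I would then assemble these: substitute the gap bound and the weight-ratio bound into the residual formula, use $\max\{R_i,R_{i+k}\}^2 \le R_i^2 \le \xi^{0}$ trivially but more usefully bound $w_i R_i^2 + w_{i+k}R_{i+k}^2 \le (w_i+w_{i+k})\xi^{4}$ (using $R_\ell \le \xi^2$ since each is a product of at least one squared similarity... actually $R_0 = 1$, so care is needed here — one uses $R_\ell \le 1$ and the $\xi^4$ must come from elsewhere, likely from the fact that $h_i$ is an internal node at distance at least one edge from some leaf, or by absorbing it into the $S(A_\ell,B)$ terms), and collect constants. The main obstacle I expect is \textbf{tracking the exponents precisely}: getting the bookkeeping right so that the depth factor comes out as exactly $(2\delta^2)^{\log m}$ (not, say, $\delta^{2\log m}$ without the compensating $2^{\log m}=m$), that the $k$-dependence is $\delta^{2(k+1)}$ vs.\ $\delta^{2(k+2)}$ and not off by a power, and that the constant in the denominator is $4m\xi^4$ vs.\ $2m\xi^4$ consistently with which case we are in. A secondary subtlety is justifying the weight-ratio bound: $w_i/w_{i+k}$ could in principle be as small as $\delta^{2r(\T)}/\xi^2$, which is exponentially small, so the argument cannot simply lower-bound $\min\{w_i/w_{i+k}, w_{i+k}/w_i\}$ by a constant — instead one must use that the \emph{larger} weight appears in the denominator $w_i R_i^2 + w_{i+k}R_{i+k}^2$ as well, so the troublesome ratio cancels and only $\min\{w_i,w_{i+k}\} \ge (\text{per-entry lower bound})$ and $w_i+w_{i+k} \le m\xi^2$ survive. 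Once that cancellation is seen clearly, the rest is the routine exponent accounting described above, split along the stated dichotomy.
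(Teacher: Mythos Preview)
Your residual formula
\[
\frac{w_i\,w_{i+k}\,(R_i - R_{i+k})^2}{(w_i + w_{i+k})(w_i R_i^2 + w_{i+k} R_{i+k}^2)}
\]
is correct and is exactly where the paper starts. But from there your plan diverges from the paper's in a way that creates the very difficulties you flag.

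\textbf{The cancellation you are missing.} You try to lower-bound $R_i$ (or $R_{i+k}$) directly via $R_i\ge \delta^{2i}$, and this is what makes the exponent bookkeeping look intractable. The paper instead writes $R_{i+k}=R_i\,S(h_i,h_{i+k})^2$ and factors $R_i^2$ out of \emph{both} numerator and denominator, obtaining
\[
\frac{w_i\,w_{i+k}\,(1-S(h_i,h_{i+k})^2)^2}{(w_i+w_{i+k})(w_i + S(h_i,h_{i+k})^4 w_{i+k})}
\;\ge\;
\frac{(1-\xi^2)^2}{4}\cdot\frac{\min(w_i,w_{i+k})}{\max(w_i,w_{i+k})}.
\]
No bound on $R_i$ is ever needed; only the local quantity $S(h_i,h_{i+k})\le\xi$ enters. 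Your worry that ``$R_0=1$, so care is needed'' simply disappears.

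\textbf{The second cancellation, and an incorrect bound in your plan.} You propose $w_i+w_{i+k}\le m\xi^2$, but this is false: $w_\ell=\|S(A_\ell,B)\|_F^2$ is a sum over $|A_\ell|\cdot|B|$ entries, not $m$ entries. The paper avoids this by using that $h_{i+k}$ separates each $A_\ell$ from $B$, so $w_\ell=\|S(A_\ell,h_{i+k})\|^2\cdot\|S(h_{i+k},B)\|^2$. The factor $\|S(h_{i+k},B)\|^2$ then cancels in the ratio $\min(w_i,w_{i+k})/\max(w_i,w_{i+k})$, leaving only $\min/\max$ of $\|S(A_\ell,h_{i+k})\|^2$. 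Now the upper bound $\max_\ell\|S(A_\ell,h_{i+k})\|^2\le m\xi^2$ is legitimate.

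\textbf{Where $(2\delta^2)^{\log m}$ really comes from.} It is not from bounding $R_i$ along a depth-$\log m$ path, as you suggest. It is the clan-norm lower bound $\|S(A,h_A)\|_F^2\ge (2\delta^2)^{\log|A|}$ (respectively $\ge 2\delta^2$ when $\delta^2>0.5$) from \cite[Lemma~4.5]{jaffe2020spectral}, applied to $A=A_i$ and $A=A_{i+k}$; the extra $\delta^{2(k+1)}$ then accounts for the edges between the root of $A_i$ and $h_{i+k}$. Once you use these two cancellations and invoke that lemma, the exponents fall out mechanically with no delicate accounting.
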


\begin{proof}[Proof of Lemma  \ref{thm:merge_incorrect_lower_bound} ]
The proof consists of the following steps: (i) we rewrite the score $d(e)$ defined in Eq. \eqref{eq:edge_score} in terms of $\|S(A_0,B)\|_F,\ldots \|S(A_N,B)\|_F$. The new expression is given in  Eq. \eqref{eq:merge_score_general}. (ii) In Eq. \eqref{eq:merge_lower_bound_pairs} we derive a lower bound on $d(e)$ in terms of two consecutive terms $\|S(A_i,B)\|_F$ and $\|S(A_{i+1},B)\|_F$.  (iii) In Lemma \ref{lem:merge_lower_bound_single_pair} we combine Eq. \eqref{eq:merge_lower_bound_pairs} with a bound on $\|S(A_i,B)\|_F$ and $\|S(A_{i+1},B)\|_F$ to conclude the proof.  
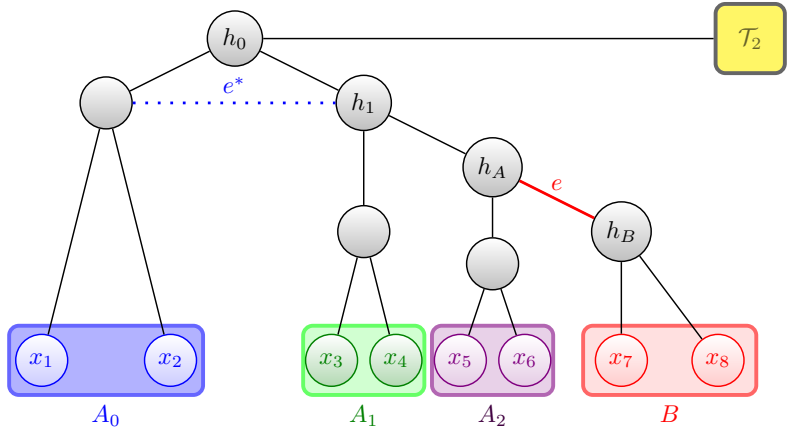
\begin{figure}[t]
  \centering
  \resizebox{0.7\textwidth}{!}
  {
      \begin {tikzpicture}[-latex ,auto ,node distance =4 cm and 5cm ,on grid ,
    	semithick ,scale=0.93,
    	state/.style ={ circle ,top color =white, bottom color=lightgray,
    		draw,black , text=black , minimum width = 0.75 cm}]	
    	
    	\node[rectangle, ultra thick, draw=blue, minimum width=2.8cm, minimum height=1cm, fill=blue!50, opacity=0.6, rounded corners, label=below:\textcolor{blue}{$A_0$}] (A1) at (2,1) {};
    	
    	\node[rectangle, ultra thick, draw=green, minimum width=1.75cm, minimum height=1cm, fill=green!30, opacity=0.6, rounded corners, label=below:\textcolor{green!50!black}{$A_1$}] (A2) at (6,1) {};
    	
    	\node[rectangle, ultra thick, draw=violet, minimum width=1.75cm, minimum height=1cm, fill=violet!30, opacity=0.6, rounded corners, label=below:\textcolor{violet!50!black}{$A_2$}] (A2) at (8,1) {};
    	
    	\node[rectangle, ultra thick, draw=red, minimum width=2.5cm, minimum height=1cm, fill=red!20, opacity=0.6, rounded corners, label=below:\textcolor{red}{$B$}] (B) at (10.75,1) {};
    		
    	\node[state] (h0) at (4,6) {$h_0$};
    	\node[state] (hAs) at (2,5) {};
    	\node[state] (h1) at (6,5) {$h_1$};
    	\node[state] (hA) at (8,4) {$h_A$};
    	\node[state] (hB) at (10,3) {$h_B$};
    	
    	\node[state] (h2) at (6,3) {};
    	\node[state] (h3) at (8,2.5) {};
    	
    	
    	\node[state, bottom color = blue!20, draw, blue, text=blue] (x1) at (1,1) {$x_1$};
    	\node[state, bottom color = blue!20, draw, blue, text=blue] (x2) at (3,1) {$x_2$};
    	
    	\node[state, bottom color = green!50!black!30, draw, green!50!black, text=green!50!black] (x3) at (5.5,1) {$x_3$};
    	\node[state, bottom color = green!50!black!30, draw, green!50!black, text=green!50!black] (x4) at (6.5,1) {$x_4$};
    	
    	\node[state, bottom color = violet!20, draw, violet, text=violet] (x5) at (7.5,1) {$x_5$};
    	\node[state, bottom color = violet!20, draw, violet, text=violet] (x6) at (8.5,1) {$x_6$};
    	
    	\node[state, bottom color = red!20, draw, red, text=red] (x7) at (10,1) {$x_7$};
    	\node[state, bottom color = red!20, draw, red, text=red] (x8) at (11.5,1) {$x_{8}$};
    	
    	\path[-] (h0) edge node [above =0.15 cm,left = 0.15cm] {}(hAs);
    	\path[-] (h0) edge node [above =0.15 cm,left = 0.15cm] {}(h1);
    	\path[-] (h1) edge node [above =0.15 cm,left = 0.15cm] {}(hA);
    	\path[-] (hA) edge node [above =0.15 cm,left = 0.15cm] {}(hB);
    	
    	\path[-] (h1) edge node [above =0.15 cm,left = 0.15cm] {}(h2);
    	\path[-] (hA) edge node [above =0.15 cm,left = 0.15cm] {}(h3);
    	\path[-] (hAs) edge node [above =0.15 cm,left = 0.15cm] {}(x1);
    	\path[-] (hAs) edge node [above =0.15 cm,left = 0.15cm] {}(x2);

    
    	\path[-] (h2) edge node [above =0.15 cm,left = 0.15cm] {}(x3);	
    	\path[-] (h2) edge node [above =0.15 cm,left = 0.15cm] {}(x4);
    	\path[-] (h3) edge node [above =0.15 cm,left = 0.15cm] {}(x5);	
    	\path[-] (h3) edge node [above =0.15 cm,left = 0.15cm] {}(x6);
    	
    	\path[-] (hB) edge node [above =0.15 cm,left = 0.15cm] {}(x7);
    	\path[-] (hB) edge node [above =0.15 cm,left = 0.15cm] {}(x8);
    	
    	\path[-, line width = 0.04 cm, color = red] (hA) edge node [above] {$e$}(hB);
    	\path[draw,loosely dotted,-, line width = 0.04 cm, color = blue] (hAs) edge node [above] {$e^\ast$} (h1);
    	
    	\node[rectangle, ultra thick, draw=black, minimum width=1cm, minimum height=1cm, fill=yellow, opacity=0.6, rounded corners] (T2) at (12,6) {$\T_2$};
    	\path[-] (h0) edge node [above =0.15 cm,left = 0.15cm] {}(T2);
    	
    	\end{tikzpicture}
	}
  \caption{Bounding the score $d(e)$ for an incorrect placeholder edge in $\T_1$.  The correct placeholder edge $e^\ast \in \T_1$ is marked by a dotted blue line. The incorrect placeholder edge $e$, which partitions the terminal node to subsets $A(e)$ and $B(e)$, is marked by a thick red line. The two non-terminal nodes on the path between the correct and incorrect edges are denoted by $h_1,h_2=h_A$, and the root node of $C_1$ is denoted by $h_0$. The subset of terminal nodes closest to $h_i$ is denoted by $A_i$.}
  \label{fig:incorrect_placeholder_edge}
\end{figure}

First, we express the numerator of $d(e)$ in Eq. \eqref{eq:edge_score} in terms of $S(A_0,B),\ldots, S(A_N,B)$.
Since $h_0$ separates $C_1$ and $C_2$, by the multiplicative property of the similarity we have, 
\[
S(C_1,C_2) = S(C_1,h_0)S(h_0,C_2) = u \sigma v^T  \qquad \|u\| = \|v\| = 1.
\]
Let $\bar \beta$ be the proportionality constant between $u$ and $S(C_1,h_0)$ such that $u = \bar \beta S(C_1,h_0)$. Recall that  $u_A,u_B$ in Eq. \eqref{eq:edge_score} are  the entries in $u$ that correspond to $A$ and $B$, respectively. Partitioning $u$ into $u_A$ and $u_B$ and partitioning $S(C_1,h_0)$ into $S(A,h_0)$ and $S(B,h_0$) gives
\[
u_A =  \bar{\beta}S(A,h_0)   \qquad u_B =  \bar{\beta} S(B,h_0).
\]
It follows that 
\[
    S(A,B) - u_A \alpha u_B^T = S(A,B) - \alpha \bar{\beta}^2 S(A,h_0)S(h_0,B) = S(A,B) - \beta S(A,h_0)S(h_0,B),
\]
where $\beta = \bar{\beta}^2\alpha$. 
We split $S(A,B)$ into the submatrices $S(A_0,B), S(A_1,B), ..., S(A_N,B)$. Similarly, we split $S(A,h_0)$ into the components $S(A_0,h_0), S(A_1,h_0), ..., S(A_N,h_0)$. This gives 
\begin{equation}\label{eq:S_AB-beta}
    S(A,B) - \beta S(A,h_0)S(h_0,B) = \begin{bmatrix}
        S(A_0,B)\\
        S(A_1,B)\\
        \vdots\\
        S(A_N,B)
    \end{bmatrix} - \beta 
    \begin{bmatrix}
        S(A_0,h_0)\\
        S(A_1,h_0)\\
        \vdots\\
        S(A_N,h_0)
    \end{bmatrix}S(h_0,B).
\end{equation}
Let $R_i = S(h_0,h_i)^2$. We show that the matrix $S(A_i,h_0)S(h_0,B)$, which appears on the right side of Eq. \eqref{eq:S_AB-beta}, is proportional to $S(A_i,B)$ with the proportionality constant $R_i$.
\begin{align}\label{eq:sim_a2_b}
    R_i S(A_i,B) &= S(h_0,h_i)^2 S(A_i,h_i) S(h_i,B) \notag \\ &=S(A_i,h_i) S(h_i,h_0) ~  S(h_0,h_i) S(h_i,B) =  S(A_i,h_0)S(h_0,B). 
\end{align}
Inserting \eqref{eq:sim_a2_b} into \eqref{eq:S_AB-beta} gives
\begin{equation*}
    \begin{bmatrix}
        S(A_0,B)\\
        S(A_1,B)\\
        \vdots\\
        S(A_N,B)
    \end{bmatrix} - \beta 
    \begin{bmatrix}
        {R_0} S(A_0,B)\\
        {R_1} S(A_1,B)\\
        \vdots\\
        {R_N} S(A_N,B)
    \end{bmatrix} = 
    \begin{bmatrix}
        (1-\beta {R_0})S(A_0,B)\\
        (1-\beta {R_1}) S(A_1,B)\\
        \vdots\\
        (1-\beta {R_N}) S(A_N,B)
    \end{bmatrix}.
\end{equation*}
Thus, the score in Eq.~\eqref{eq:edge_score} is equivalent to
\begin{equation}\label{eq:merge_score_deriv}
    d^2(e) = 
    \frac{1}{\|S(A,B)\|^2_F} \min_{\beta} \sum_{i=0}^N (1-\beta {R_i})^2 \|S(A_i,B)\|^2_F.
\end{equation}
Since $\|S(A,B)\|^2_F = \sum_{i=0}^N \|S(A_i,B)\|^2_F$, we can rewrite Eq. \eqref{eq:merge_score_deriv} as follows,
\begin{equation}\label{eq:merge_score_general}
    d^2(e) = \min_\beta \frac{\sum_{i=0}^N (1-\beta {R_i})^2 \|S(A_i,B)\|^2_F}{\sum_{i=0}^N \|S(A_i,B)\|^2_F}.
\end{equation}
Next, the following lemma, proven in Appendix \ref{appendix:proof_of_incorrect_merge_lemma}, bounds the ratio of two sums. 
\begin{lemma}\label{lem:ratio_sum_inequality}
For two series of positive numbers $a_i,b_i > 0$ we have
\[
\frac{\sum a_i}{\sum b_i} \geq \min_{i\neq j; |i-j|\leq 2} \frac{a_i+a_j}{b_i+b_j}.
\]
\end{lemma}
Applying the lemma to Eq. \eqref{eq:merge_score_general} yields
\begin{equation}\label{eq:merge_lower_bound_pairs}
d^2(e) \geq \min_{0 \leq i \leq N-1; k\in \{1,2\}} \min_{\beta} \frac{(1-\beta {R_i})^2 \|S(A_i,B)\|^2_F + (1-\beta {R}_{i+k})^2 \|S(A_{i+k},B)\|^2_F }{\|S(A_{i},B)\|^2_F + \|S(A_{i+k},B)\|^2_F}.
\end{equation}
Combining Eq. \eqref{eq:merge_lower_bound_pairs} and Lemma~\ref{lem:merge_lower_bound_single_pair} gives
\[
d^2(e) \geq 
    \begin{cases}
        \frac{(2\delta^2)^{\log m} \delta^6 (1-\xi^2)^2}{4m \xi^4} & \delta^2 \leq 0.5 \\[10pt]
        \frac{\delta^8 (1-\xi^2)^2}{2m \xi^4} & \delta^2 > 0.5,
    \end{cases}
\]
which concludes the proof of Lemma \ref{thm:merge_incorrect_lower_bound}, and with it, Step 1 in the proof of Theorem \ref{thm:finite_merging}.
\end{proof}

\paragraph{Step 2: A sufficient condition on the estimate $\hat S$}

Lemma  \ref{thm:merge_incorrect_lower_bound} shows that there is a gap between the score of the correct placeholder edge and the scores of all other edges in $\T_1$. 
In the following lemma we show that if $\hat S$ is sufficiently close to $S$ the gap is preserved and STDR selects the correct placeholder edge.
For simplicity, we address only the case $\delta^2 \geq 0.5$. 

\begin{lemma}\label{lem:guarantee_s}
Let $\D = \min\{\|S(A,B)\|_F, \|S(C_1,C_2)\|_F\}$.
If the similarity matrix estimate $\hat S$ satisfies
\begin{equation}\label{eq:S_diff_orig_bound}
\|S-\hat S\|_F \leq \frac{1}{2}\left(\frac{2}{\D}+\frac{2.5}{\D^2} +  \frac{1+10\sqrt{2}}{\D^3}  \right)^{-1}
        \frac{\delta^3 (1-\xi^2)}{\sqrt{2m} \xi^2}, 
\end{equation}
then STDR selects the correct placeholder edge.
\end{lemma}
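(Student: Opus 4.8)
The goal is to convert the population-level score gap from Lemma~\ref{thm:merge_incorrect_lower_bound} (which guarantees $d(e)=0$ at the correct placeholder edge $e^\ast$ and $d(e) \geq \tfrac{\delta^3(1-\xi^2)}{\sqrt{2m}\xi^2} =: \Delta_0$ at every incorrect edge, in the case $\delta^2 > 0.5$) into a finite-sample statement: if $\|S-\hat S\|_F$ is small enough, then $\hat d(e^\ast) < \hat d(e)$ for every incorrect $e$, where $\hat d$ is the score~\eqref{eq:edge_score} computed from $\hat S$ and from the leading singular vector $\hat u$ of $\hat S(C_1,C_2)$. So the whole thing reduces to a perturbation bound: I want to show $|\hat d(e) - d(e)| < \Delta_0/2$ for all edges $e$ simultaneously, which then separates $\hat d(e^\ast) < \Delta_0/2 \leq \Delta_0 - \Delta_0/2 < \hat d(e)$.

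First I would fix an edge $e$ with partition $\{A,B\}$ of $C_1$ and analyze the single-edge score $d(e) = \tfrac{1}{\|S(A,B)\|_F}\min_\alpha \|S(A,B) - u_A\alpha u_B^T\|_F$. The minimizing $\alpha$ is explicit ($\alpha = \langle S(A,B), u_A u_B^T\rangle / (\|u_A\|^2\|u_B\|^2)$ when $u_A, u_B \neq 0$, and since $\|u\|=1$ one has $\|u_A\|^2 + \|u_B\|^2 = 1$), so $d(e)$ is a smooth function of the triple $(S(A,B), u_A, u_B)$, i.e. of $\hat S(A,B)$ and $\hat u$. I would bound the perturbation of each ingredient: (a) $\|\hat S(A,B) - S(A,B)\|_F \leq \|\hat S - S\|_F =: \epsilon$; (b) the perturbation of the leading left singular vector $\hat u$ of $\hat S(C_1,C_2)$ via a Davis–Kahan / Wedin-type bound, controlled by $\|\hat S(C_1,C_2) - S(C_1,C_2)\|_F \leq \epsilon$ and the spectral gap of $S(C_1,C_2)$ — but here $S(C_1,C_2)$ is exactly rank one with singular value $\sigma = \|S(C_1,C_2)\|_F$, so the relevant gap is $\sigma$ itself, giving $\|\hat u - u\| \lesssim \epsilon/\sigma$ (up to a constant and a sign choice). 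The $\D = \min\{\|S(A,B)\|_F,\|S(C_1,C_2)\|_F\}$ in the statement is precisely the worst-case denominator appearing in these two normalizations, which is why it governs the bound. The numerator structure $\tfrac{2}{\D} + \tfrac{2.5}{\D^2} + \tfrac{1+10\sqrt2}{\D^3}$ should emerge from collecting first-order, second-order, and third-order terms in $\epsilon$: roughly $\tfrac1\D$ from the overall normalization, $\tfrac1{\D^2}$ from the coupling of an $O(\epsilon)$ numerator perturbation with an $O(\epsilon/\D)$ vector perturbation, and $\tfrac1{\D^3}$ from the $\alpha$-optimization which divides by quantities like $\|u_A\|^2\|u_B\|^2$ that are themselves being perturbed — but I would keep these as "collect the terms, the constants are bookkeeping" rather than grinding them out.

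The main obstacle, and the step that needs care, is twofold. Part of it is the $\alpha$-minimization: $\min_\alpha$ makes $d(e)$ a ratio/projection-type quantity, and when $u_A$ or $u_B$ is small (which happens for very unbalanced splits $A,B$) the optimal $\alpha$ is ill-conditioned, so one must either lower-bound $\|u_A\|,\|u_B\|$ using the exponential-decay lower bound $\delta$ on similarities (each entry of $u \propto S(C_1,h_0)$ is at least $\delta^{\mathrm{diam}}/\sqrt{\cdot}$), or argue that the score difference is still controlled because it is a Lipschitz function of the data on the relevant domain. The other part is the sign ambiguity of singular vectors: $\hat u$ is only determined up to sign, but $d(e)$ is sign-invariant (it depends on $u_A\alpha u_B^T$, and flipping $u$ flips $\alpha$), so after fixing the Davis–Kahan sign one should check $d(e)$ is genuinely a function of the line spanned by $u$. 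Finally I would take a union bound over all $O(m)$ edges of $\T_1$ — actually the bound is uniform in $e$ since $\|\hat S - S\|_F$ is a single quantity, so no union bound is even needed at this stage — conclude that \eqref{eq:S_diff_orig_bound} forces $|\hat d(e) - d(e)| < \Delta_0/2$ for all $e$, and hence $e^\ast = \argmin_e \hat d(e)$, which is exactly the claim. The passage to a sample bound on $n$ (Step 3 of the theorem) is then just plugging the concentration inequality $\Pr(\|\hat S - S\| \leq t) \geq 1 - 2m^2\exp(-2nt^2/\ell^2 m^2)$ from \cite{jaffe2020spectral} into \eqref{eq:S_diff_orig_bound}, together with $\|\cdot\|_F \leq \sqrt{m}\|\cdot\|$.
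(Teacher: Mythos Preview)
Your proposal is correct and follows essentially the same route as the paper: establish the population gap $\Delta_0$ via Lemma~\ref{thm:merge_incorrect_lower_bound}, then prove a uniform perturbation bound $|d(e)-\hat d(e)| \le \|S-\hat S\|_F\big(\tfrac{2}{\D}+\tfrac{2.5}{\D^2}+\tfrac{1+10\sqrt 2}{\D^3}\big)$ (which the paper isolates as Lemma~\ref{lem:d_err_bound}), and conclude by the $\Delta_0/2$ separation argument. Regarding your ``main obstacle'': the paper takes your second option, not the first---it never lower-bounds $\|u_A\|,\|u_B\|$, but instead writes $\alpha^\ast u_A u_B^T$ as the rank-one projection $u_Au_A^T\,S(A,B)\,u_Bu_B^T$ divided by a scalar, which has Frobenius norm at most $\|S(A,B)\|_F$ and so is automatically well-conditioned; working with $uu^T$ rather than $u$ also disposes of the sign ambiguity you flagged.
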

In our proof, we use the following auxiliary lemma, proven in Appendix \ref{appendix:proof_of_incorrect_merge_lemma}.
\begin{lemma}\label{lem:d_err_bound}
Let $d(e),\hat d(e)$ be the exact and estimated score functions. If $\|S- \hat S\|_F \leq \D/2$,
then 
\[
|d(e) - \hat d(e)| \leq \|S- \hat S\|_F \left(\frac{2}{\D}+\frac{2.5}{\D^2} +  \frac{1+10\sqrt{2}}{\D^3}  \right).
\]
\end{lemma}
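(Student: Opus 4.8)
The plan is to prove Lemma~\ref{lem:d_err_bound} directly, by controlling how each of the two ingredients of the score $d(e)$ — the denominator $\|S(A(e),B(e))\|_F$ and the numerator $\min_\alpha \|S(A(e),B(e)) - u_{A(e)}\alpha u_{B(e)}^T\|_F$ — changes when $S$ is replaced by $\hat S$. Throughout, write $M = S(A(e),B(e))$, $\hat M = \hat S(A(e),B(e))$, and similarly $u,\hat u$ for the leading left singular vectors of $S(C_1,C_2)$ and $\hat S(C_1,C_2)$. Since $\|M - \hat M\|_F \le \|S-\hat S\|_F$ and $\|S(C_1,C_2) - \hat S(C_1,C_2)\|_F \le \|S-\hat S\|_F$, the hypothesis $\|S-\hat S\|_F \le \D/2$ guarantees that all the relevant norms ($\|M\|_F$, $\|S(C_1,C_2)\|_F$) are within a factor $2$ of their true values, so no denominator degenerates.

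First I would handle the numerator's dependence on $u$. The map $N \mapsto \min_\alpha \|M - u N u^T\|_F$, for fixed $M$, is (up to the normalization) the norm of the residual of $M$ after projecting onto the rank-one subspace spanned by $u_{A}u_{B}^T$; concretely $\min_\alpha \|M - u_A \alpha u_B^T\|_F^2 = \|M\|_F^2 - \frac{(u_A^T M u_B)^2}{\|u_A\|^2\|u_B\|^2}$. Replacing $u$ by $\hat u$ perturbs the projection; using a Davis–Kahan-type bound (Theorem~2 of \cite{yu2015useful}, exactly as in Eq.~\eqref{eq:davis_kahan}) one has $\|u - \hat u\| \le 2^{3/2}\|S(C_1,C_2)-\hat S(C_1,C_2)\|/\sigma \le 2^{3/2}\|S-\hat S\|_F/\sigma$, where $\sigma$ is the (single) nonzero singular value of $S(C_1,C_2)$, and $\sigma = \|S(C_1,C_2)\|_F \ge \D$. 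This is the step I expect to be the main obstacle: one must carefully bound the change in $u_A^T M u_B / (\|u_A\|\|u_B\|)$ in terms of $\|u-\hat u\|$ and $\|M\|_F$, keeping the $\|u_A\|,\|u_B\|$ denominators from becoming a second source of blow-up, and then combine it with the simultaneous perturbation $M \to \hat M$. This is where the three-term structure $\tfrac{2}{\D}+\tfrac{2.5}{\D^2}+\tfrac{1+10\sqrt2}{\D^3}$ comes from — the $1/\D$ term from directly perturbing $M$ in the numerator/denominator, the $1/\D^2$ and $1/\D^3$ terms from the singular-vector perturbation which carries a $1/\sigma \le 1/\D$ factor that then interacts with another $1/\D$ (or two) from the normalizations.

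Next I would assemble the pieces. Write $d(e) = \mathrm{num}(M,u)/\mathrm{den}(M)$ with $\mathrm{den}(M) = \|M\|_F$ and $\mathrm{num}(M,u) = \min_\alpha\|M - u_A\alpha u_B^T\|_F$. The triangle-inequality-style estimate
\[
|d(e)-\hat d(e)| \le \frac{|\mathrm{num}(M,u)-\mathrm{num}(\hat M,\hat u)|}{\|M\|_F} + \mathrm{num}(\hat M,\hat u)\,\frac{|\,\|M\|_F - \|\hat M\|_F\,|}{\|M\|_F\,\|\hat M\|_F}
\]
reduces everything to (a) a bound on the numerator difference, split further as $|\mathrm{num}(M,u)-\mathrm{num}(M,\hat u)| + |\mathrm{num}(M,\hat u)-\mathrm{num}(\hat M,\hat u)|$, and (b) the elementary bound $|\,\|M\|_F-\|\hat M\|_F\,| \le \|M-\hat M\|_F \le \|S-\hat S\|_F$. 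For (a), the second term is at most $\|M-\hat M\|_F$ (the map $M\mapsto\min_\alpha\|M-u_A\alpha u_B^T\|_F$ is $1$-Lipschitz in $M$ for fixed $u$, being a distance to a subspace), and the first term is controlled by the singular-vector bound from the previous paragraph together with $\mathrm{num}(\cdot)\le\|M\|_F \le 2\|S(A,B)\|_F$-type crude bounds; also $\mathrm{num}(\hat M,\hat u) \le \|\hat M\|_F \le \|M\|_F + \|S-\hat S\|_F \le \tfrac32\|M\|_F$. Plugging $\|M\|_F \ge \D$, $\sigma \ge \D$ everywhere and collecting coefficients yields the claimed three-term factor. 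Finally, Lemma~\ref{lem:guarantee_s} follows immediately: if $\|S-\hat S\|_F$ is at most half the right-hand side quantity, then by Lemma~\ref{lem:d_err_bound} every score moves by less than half the gap $\tfrac{\delta^3(1-\xi^2)}{\sqrt{2m}\,\xi^2}$ established in Lemma~\ref{thm:merge_incorrect_lower_bound} (using $\delta^2>0.5$), so $\hat d(e^\ast) < \hat d(e)$ for every incorrect edge $e$, and STDR's $\argmin$ returns $e^\ast$. The one bookkeeping point to be careful about is that Lemma~\ref{thm:merge_incorrect_lower_bound} bounds $d(e)$ for incorrect edges in $\T_1$ while $d(e^\ast)=0$, so a shift of less than half the gap in each direction suffices to preserve the ordering.
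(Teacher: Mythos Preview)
Your strategy matches the paper's: split $d(e)$ into numerator and denominator, perturb each, and control the singular-vector change via a Davis--Kahan bound. The technical execution differs in a way that matters precisely for the obstacle you flag. You propose to work with $\|u-\hat u\|$ and the residual formula $\|M\|_F^2 - (u_A^T M u_B)^2/(\|u_A\|^2\|u_B\|^2)$, and you correctly note that keeping the $\|u_A\|,\|u_B\|$ denominators from blowing up is the hard step; as stated, your sketch does not resolve it. The paper sidesteps this entirely: it writes the optimal rank-one term explicitly as $\alpha^\ast u_Au_B^T \propto u_Au_A^T\,S(A,B)\,u_Bu_B^T$, bounds $\|\alpha^\ast u_Au_B^T - \beta^\ast\hat u_A\hat u_B^T\|_F$ via a product-of-three-matrices perturbation lemma, and controls the factors by $\|u_Au_A^T-\hat u_A\hat u_A^T\|_F \le \|uu^T-\hat u\hat u^T\|_F = \sqrt{2}\,|\sin(u,\hat u)|$. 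Because $u_Au_A^T$ is just a principal submatrix of $uu^T$, this bound never sees $\|u_A\|$ at all, so the small-denominator issue you identified does not arise; this is what buys the clean three-term constant.

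Two smaller points. You invoke Eq.~\eqref{eq:davis_kahan} (Theorem~2 of \cite{yu2015useful}), which is for eigenvectors of symmetric matrices; since $S(C_1,C_2)$ is rectangular you need the singular-vector version (Theorem~3), which the paper uses in Lemma~\ref{lem:bound_eps_A} to get $\sin(u,\hat u)\le 5\|S-\hat S\|_F/\|S(C_1,C_2)\|_F$ --- combined with the $\sqrt{2}$ and a factor of $2$ from $\|\eps_A\|_F+\|\eps_B\|_F$, this is the source of the $10\sqrt{2}$. And your final paragraph on Lemma~\ref{lem:guarantee_s} is correct but belongs to a separate statement, not to Lemma~\ref{lem:d_err_bound}.
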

\begin{proof}[Proof of Lemma \ref{lem:guarantee_s}]
Suppose $e^* \in \T_1$ is the correct placeholder edge and $e' \neq e^*$ is a different edge in $\T_1$. 
By Lemma  \ref{thm:merge_incorrect_lower_bound} 
\begin{equation*}
d(e') \geq \frac{\delta^3(1-\xi^2)}{\sqrt{2m}\xi^2},  \end{equation*}
while for the correct edge $d(e^\ast)=0$. It follows from the triangle inequality that if
\begin{equation}\label{eq:guarantee_d}
|d(e)-\hat d(e)| \leq \frac12 \frac{\delta^3(1-\xi^2)}{\sqrt{2m}\xi^2},
\end{equation}
for all edges $e$, then $\hat d(e^\ast)< \hat d(e')$.
Since $\delta \leq \xi <1$ and $m \geq 2$, 
\[
\frac{1}{2}\left(\frac{2}{\D}+\frac{2.5}{\D^2} +  \frac{1+10\sqrt{2}}{\D^3}  \right)^{-1}
 \frac{\delta^3 (1-\xi^2)}{\sqrt{2m} \xi^2} \leq \frac{\D}{2}.
\]
Thus, if the estimate $\hat S$ satisfies  Eq. \eqref{eq:S_diff_orig_bound}, then $\|S-\hat S\| \leq \D/2$ and the condition for Lemma \ref{lem:d_err_bound} holds. Combining the lemma with Eq. \eqref{eq:guarantee_d} concludes the proof.  
\end{proof}

\paragraph{Step 3: Finite sample guarantees} We are now ready to prove Theorem \ref{thm:finite_merging}, which bounds the number of samples required to compute, with high probability, a sufficiently accurate estimate $\hat S$, as determined in Lemma \ref{lem:guarantee_s}.

\begin{proof}[Proof of Theorem \ref{thm:finite_merging}]
The following concentration bound for $\hat S$ was derived in Lemma 4.7 of  \cite{jaffe2020spectral},
\[
\Pr(\|\hat S-S\|_F\leq t) \geq 1- 2m^2 \exp \left(-\frac{2nt^2}{\ell^2m^2}\right).
\]
We note that in \cite{jaffe2020spectral}, this bound was presented for the spectral norm, but the proof holds for the Frobenius norm as well.
Suppose that $\Pr(\|\hat S-S\|_F\leq t)>1-\eps$, Namely
\begin{equation} \label{eq:n_lower_bound}
    n\geq \frac{\ell^2m^2}{2t^2}\log\left(\frac{2m^2}{\eps}\right).
\end{equation}
By Lemma \ref{lem:guarantee_s}, a sufficient condition for STDR to select the correct placeholder edge is
\begin{equation}\label{eq:S-S_hat_bound_req}
\|S-\hat S\|_F \leq \frac{1}{2}\left(\frac{2}{\D}+\frac{2.5}{\D^2} +  \frac{1+10\sqrt{2}}{\D^3}  \right)^{-1}
        \frac{ \delta^3 (1-\xi^2)}{\sqrt{2m} \xi^2}.
\end{equation}
Setting $t$ to the right hand side of Eq. \eqref{eq:S-S_hat_bound_req} and substituting into Eq. \eqref{eq:n_lower_bound}, we have that if
\begin{align*}
n &\geq \frac{\ell^2m^2}{2}2^2\left(\frac{2}{\D}+\frac{2.5}{\D^2} +  \frac{1+10\sqrt{2}}{\D^3}  \right)^2\left(\frac{2m \xi^4}{\delta^6 (1-\xi^2)^2}\right)\log\left(\frac{2m^2}{\eps}\right) \notag 
\\
& = 8\ell^2m^3\left(\frac{2}{\D}+\frac{2.5}{\D^2} +  \frac{1+10\sqrt{2}}{\D^3}  \right)^2\left(\frac{ \xi^4}{\delta^6 (1-\xi^2)^2}\right)\log\left(\frac{2m^2}{\eps}\right)
\end{align*}
then Eq. \eqref{eq:S-S_hat_bound_req} holds with probability at least $1 - \eps$, and thus the merging step in STDR selects the correct placeholder edge with high probability. 
\end{proof}

\begin{remark}\label{rem:one_step_to_many}
The guarantees in Theorems  \ref{thm:finite_split} and \ref{thm:finite_merging} are derived for a single partitioning and merging step. Since the algorithm is recursive, additional splitting and merging steps depend on submatrices of $\hat S$. If the bounds in Lemmas \ref{lem:davis_khan_for_trees_mc} and \ref{lem:guarantee_s} are satisfied for the full matrix $S$, they hold simultaneously for all submatrices of $S$ as well. Thus, the number of samples required in Theorems \ref{thm:finite_split} and \ref{thm:finite_merging} is sufficient to guarantee with high probability the success of STDR for multiple partitioning and merging steps.  
\end{remark}

\subsection{Comparison of sample complexity} \label{sec:finite_sample_comperison}
Combining Theorem \ref{thm:finite_split} and Theorem \ref{thm:finite_merging}, for a binary symmetric tree with a fixed similarity between adjacent nodes, the sample complexity of the partitioning and merging steps of STDR is 
\begin{equation} \label{eq:sample_complexity}
\wtilde{\OO}\left(m^3/\D_{min}^2+ m^{2+4\log_2(\frac{1}{\delta})}\right)  
\end{equation}
where $\D_{min}$ is the minimum value of $\D$ from Theorem \ref{thm:finite_merging} over all partitions of the tree. We compare this result to three other methods for full recovery of trees. For simplicity, we assume that the similarity between all adjacent nodes is $\delta$. Thus, the value of $D_{min} = \wtilde{\OO}(m^3/\delta \tau^2)$, where $\tau$ is the user given threshold. For a reasonalbe setting where $\tau = \wtilde{\OO}(m^3/\delta \tau^2)$, the sample complexity simplifies to $\wtilde \OO(m^{2+4\log_2(\frac{1}{\delta})})$. 
For NJ, the sample complexity given in Section 3.3 of \cite{atteson1999performance} is $\wtilde{\OO}(\exp(-4\min_{i,j}\ln(S(x_i,x_j)))$ or equivalently $\wtilde{\OO}\left(\delta^{-4 \text{diam}(\T)}\right)$, where $\text{diam}(\T)$ is the diameter of $\T$. For a binary symmetric tree $\text{diam}(\T)=2\log_2(m)$ and hence the complexity is $O(m^{8\log_2(1/\delta)})$, which is better than \eqref{eq:sample_complexity} for $\delta$ close to one, but worse for lower values of $\delta$. However, the diameter of the tree can be as large as $m$, in which case the sample complexity of NJ is exponential in $m$, rather than polynomial as in \eqref{eq:sample_complexity}. 

For SNJ, if $\delta^2>0.5$ the sample 
complexity is $\wtilde{\OO}(m^2)$ 
(by Theorem 4.3 in \cite{jaffe2020spectral}). This is similar 
to \eqref{eq:sample_complexity} for $\delta$ close to one, but improves upon \eqref{eq:sample_complexity} as $\delta$ decreases. 

For the Dyadic Close method \cite[Theorem 9]{erdHos1999few}, the sample complexity is $\wtilde{\OO}((1/\delta)^{\text{4h}(\T)})$, where recall that $h(\T)$ denotes the depth of a tree as in definition \ref{def:depth}. 
For a binary symmetric tree $h(\T) = \log_2(m)$ in which case the complexity is $\OO(m^{4\log_2(1/\delta)})$, which improves upon Eq. \eqref{eq:sample_complexity} by $m^2$. For highly imbalanced trees $\text{depth}(\T)=\OO(1)$ in which case the sample complexity is logarithmic in $m$. The improved sample complexity, however, comes at cost of a $\OO(m^5)$ computational complexity.
Thus, excluding the Dyadic Closure method, the sample complexity of STDR is similar to several other distance-based methods with theoretical guarantees.






\section{Simulation Results}\label{sec:experiments}

We illustrate the performance of  STDR in comparison to several other algorithms in a variety of simulated settings. To this end we generated trees according to the coalescent model (\ref{subsec:experiment_kingman}) and the birth-death model  (\ref{subsec:experiment_birthDeath}), which are common in phylogenetics. In addition, we also considered the challenging scenario of the caterpillar tree.
In all experiments, the sequences were generated according to the HKY substitution model \cite{hasegawa1985dating} with transition-transversion ratio of 2, a typical value in the human genome \cite{keller2007transition}. The mutation rate for the HKY model is specified for each simulation. 

We considered the following reconstruction methods:  (i) RAxML~\cite{stamatakis2014raxml}, a standard tool for maximum likelihood-based tree inference, (ii) neighbor joining (NJ), and (iii) spectral neighbor joining (SNJ). 
Recall that STDR requires as input a subroutine \textit{alg} for the reconstruction of the small trees. Thus, for comparison, we applied STDR with each of the aforementioned algorithms as the subroutine. We denote these three methods as
(iv) STDR + RAxML, (v) STR + NJ and (vi) STDR + SNJ. 
A second input to STDR is the threshold parameter $\tau$, which sets an upper bound for the size of the small trees. This parameter is specified in the description of each experiment.
The accuracy of the different algorithms is measured by the normalized Robinson-Foulds (RF) distance, defined  as the RF distance \cite{estabrook1985comparison} between the reconstructed and reference tree divided by $2m-6$. Each experiment was repeated $5$ times to obtain a mean and standard deviation of the performance and runtime of each method.

In addition to the above experiments,  we compare our merging procedure to TreeMerge \cite{molloy2019treemerge}. The results for the caterpillar tree and the comparison to TreeMerge are shown in Appendix \ref{appendix:experimental_results}. Finally, for a symmetric binary tree, we demonstrate how changes in the threshold $\tau$ affect the results of STDR.  

\paragraph{Implementation remarks}
To improve the results of STDR, we computed two possible partitions $C_1,C_2$: (i) A partition that corresponds to a threshold at $0$ in the Fiedler vector, and (ii) a partition that corresponds to the largest gap. In practice, the partition was chosen by method (i) or (ii), as the one that minimizes the second singular value of $S(C_1,C_2)$, see Lemma \ref{lem:affinity_spectral}.
To improve runtime, we apply randomized methods for computing leading singular values and vectors, see \cite{stewart2001matrix,halko2011finding,aizenbud2019matrix}.

\subsection{Kingman's coalescent model}\label{subsec:experiment_kingman}

We generated a random tree according to Kingman's coalescent model \cite{wakeley2009coalescent} with $m=2000$ terminal nodes (See example in Fig \ref{fig:kingman_tree}).
Figure \ref{fig:kingman_performance} shows the accuracy (left panel) and the runtime (right panel) of the different methods as functions of the sequence length. The threshold parameter $\tau$ was set to $128$ for all experiments. Here, STDR+RAxML performs similarly to RAxML in accuracy while achieving more than an order-of-magnitude reduction in runtime. Compared to NJ and SNJ, STDR+NJ and STDR+SNJ show improvement in both accuracy and runtime.

\begin{figure}[t]
    \centering
    \includegraphics[width = \textwidth]{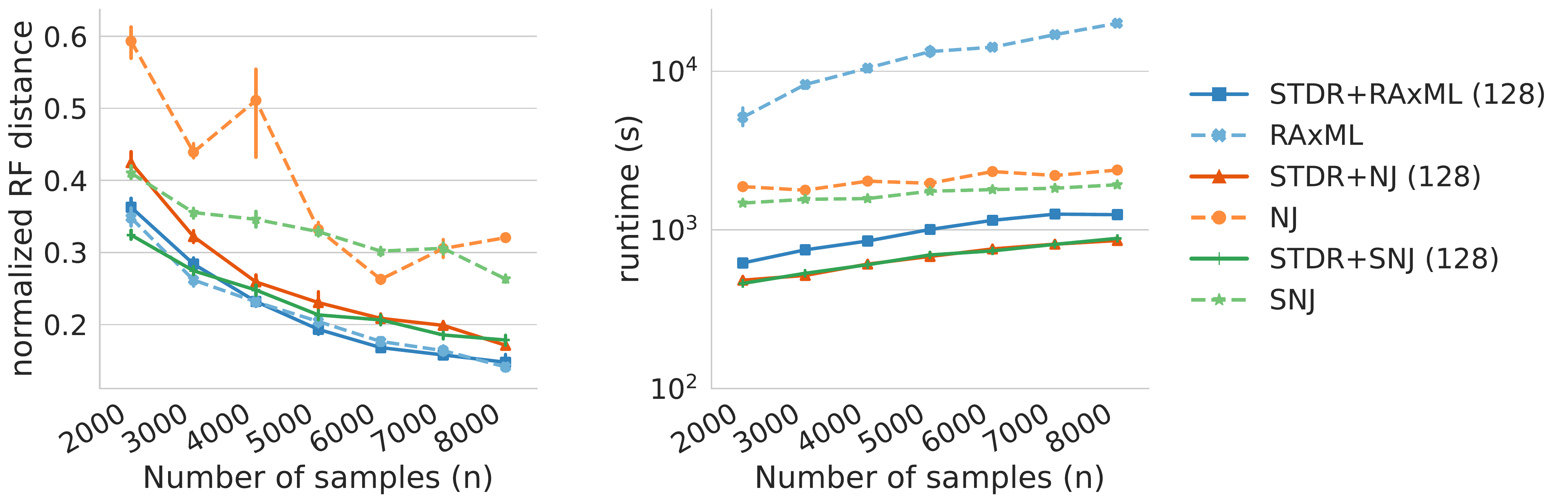}
    \caption{Trees generated according to  Kingman's coalescent model with $m=2000$ terminal nodes. The mean and standard deviation of the normalized RF distance (left) between the reconstructed tree and the input tree and of the runtime (right) are shown for each method over 5 independent runs. }
    \label{fig:kingman_performance}
\end{figure}

\subsection{Birth-death model}\label{subsec:experiment_birthDeath}
We generated random binary trees with $m=2048$ terminal nodes according to the birth-death model \cite{birthdeath} .
 The STDR threshold was set to $\tau = 256$ for all three methods.
Figure \ref{fig:birth_death_performance} shows the accuracy and runtime of the different methods as a function of the sequence length $n$. Using STDR with NJ clearly improves upon the performance of standard NJ both in terms of accuracy and runtime. 
Compared to SNJ and RAxML, STDR+SNJ and STDR+RAxML show similar accuracy but   with significantly faster runtimes


\begin{figure}[t]
    \centering
    \includegraphics[width = \textwidth]{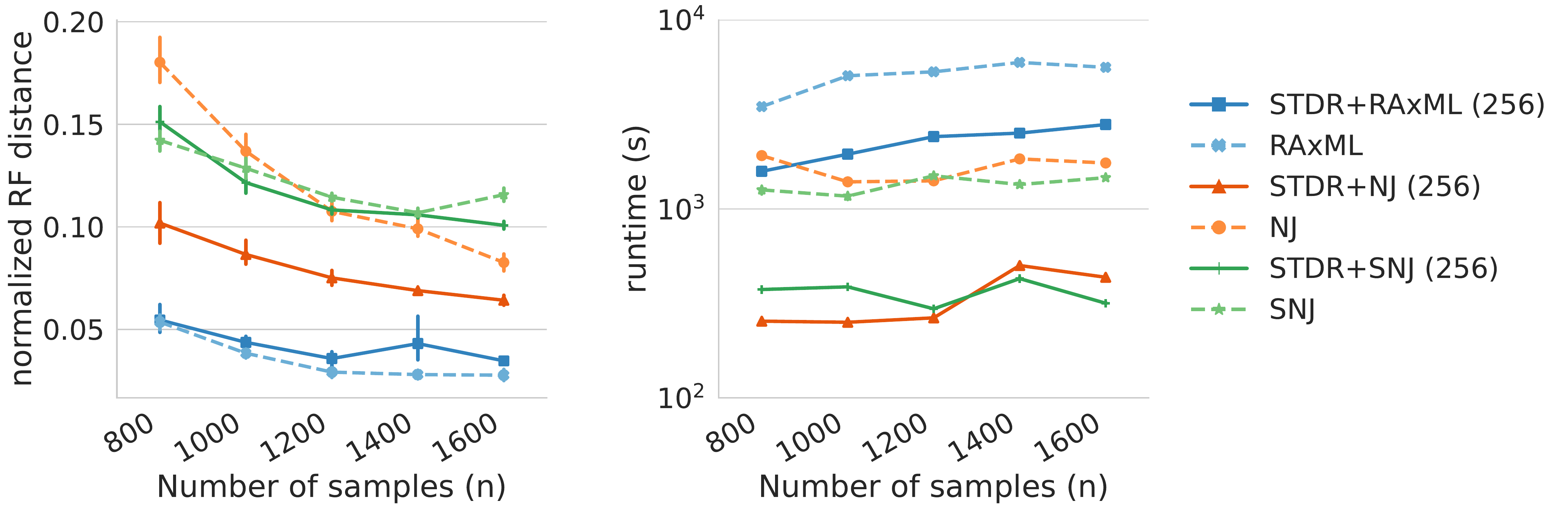}
    \caption{A birth-death tree with $m=2048$ terminal nodes. The mean and standard deviation of the normalized RF distance (left) between the reconstructed tree and the input tree and of the runtime (right) are shown for each method over 5 independent runs. }
    \label{fig:birth_death_performance}
\end{figure}

\subsection{Effect of threshold parameter}
Our aim in this experiment was to test the impact of the threshold parameter $\tau$ on the performance of STDR.
To that end, we created a binary symmetric tree with $m=2048$ terminal nodes and similarity between all adjacent nodes equal to $\delta =  0.65$. The number of samples was set to $n=1000$. We then reconstructed the tree via STDR with different subroutines and a range of threshold values.

Figure \ref{fig:threshold} shows the normalized RF distance between the recovered trees and the ground truth tree as a function of the threshold. For both RAxML and SNJ, accuracy slightly improves for higher values of the threshold. STDR + NJ is not shown in the plot because it is significantly less accurate in this setting. 
These results are in accordance with our analysis in Section \ref{sec:finite_sample}, where we show that the task of merging trees becomes challenging for small subsets of terminal nodes. 

\begin{figure}[t]
    \centering
    \includegraphics[width = \textwidth]{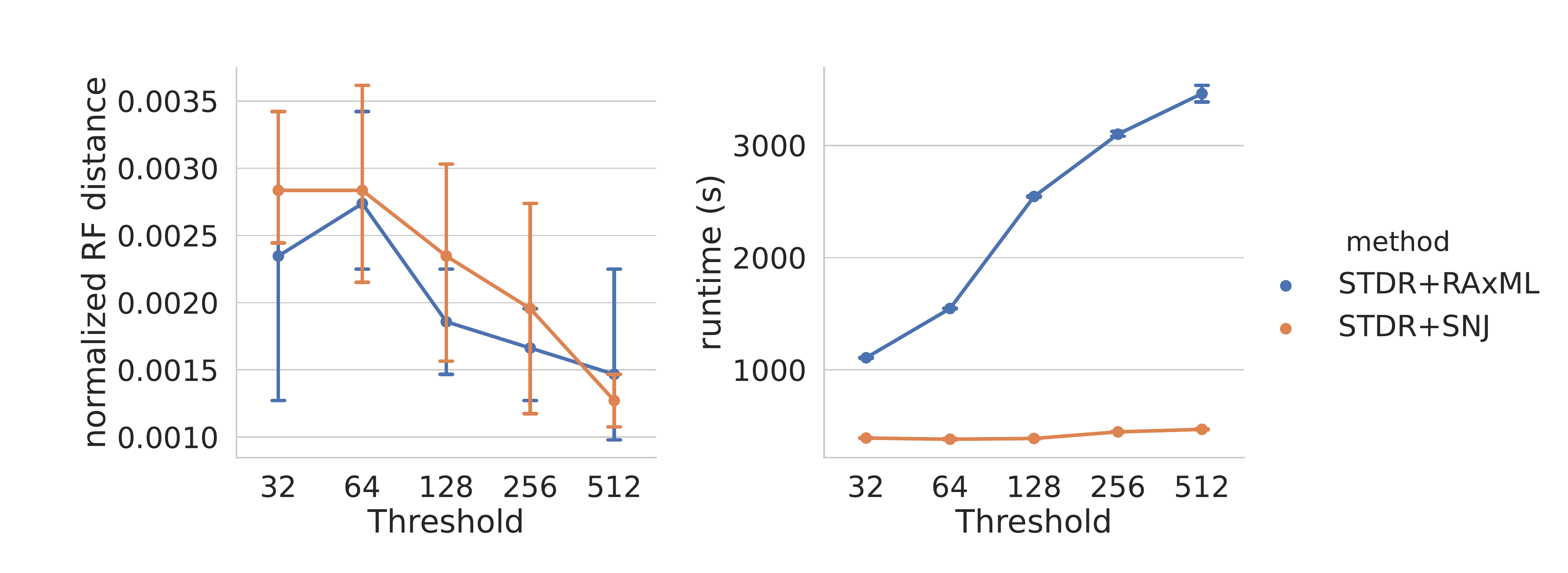}
    \caption{Effect of minimal tree size $\tau$ on runtime and accuracy of SDTR. Various values of threshold $\tau$ were chosen to test the performance of SDTR method in recovering a binary tree of size 2048 from sequences of length 1000. SNJ, and RAxML were used as the sub method of SDTR. }
    \label{fig:threshold}
\end{figure}
\section*{Acknowledgments}
The authors would like to thank Junhyong Kim, Stefan Steinerberger and Ronald Coifman for useful and insightful discussions. Y.K. and Y.A. acknowledge support by NIH grant R01GM131642, UM1DA051410 and R61DA047037. Y.K. and B.N. acknowledge support by NIH grant R01GM135928. Y.K. acknowledges support by NIH grant 2P50CA121974.


\begin{appendices}



\section{Example of Fiedler vector in a coalescent tree}

We generated a tree with $m=512$ nodes according to the coalescent model, see Figure \ref{fig:kingman_tree-tree}. 
The transition matrices  were set  according to the HKY model \cite{hasegawa1985dating}. We then generated a dataset of nucleotide sequences of length $n=2,000$. Figure \ref{fig:kingman_tree-eigs} shows the Fiedler vector of the similarity graph estimated from the dataset. Partitioning the terminal nodes according to the sign pattern of the Fiedler vector yields two clans. 
\begin{figure}[ht]
    \centering
    \begin{subfigure}[b]{0.4\textwidth}
        \includegraphics[width = 0.8\textwidth]{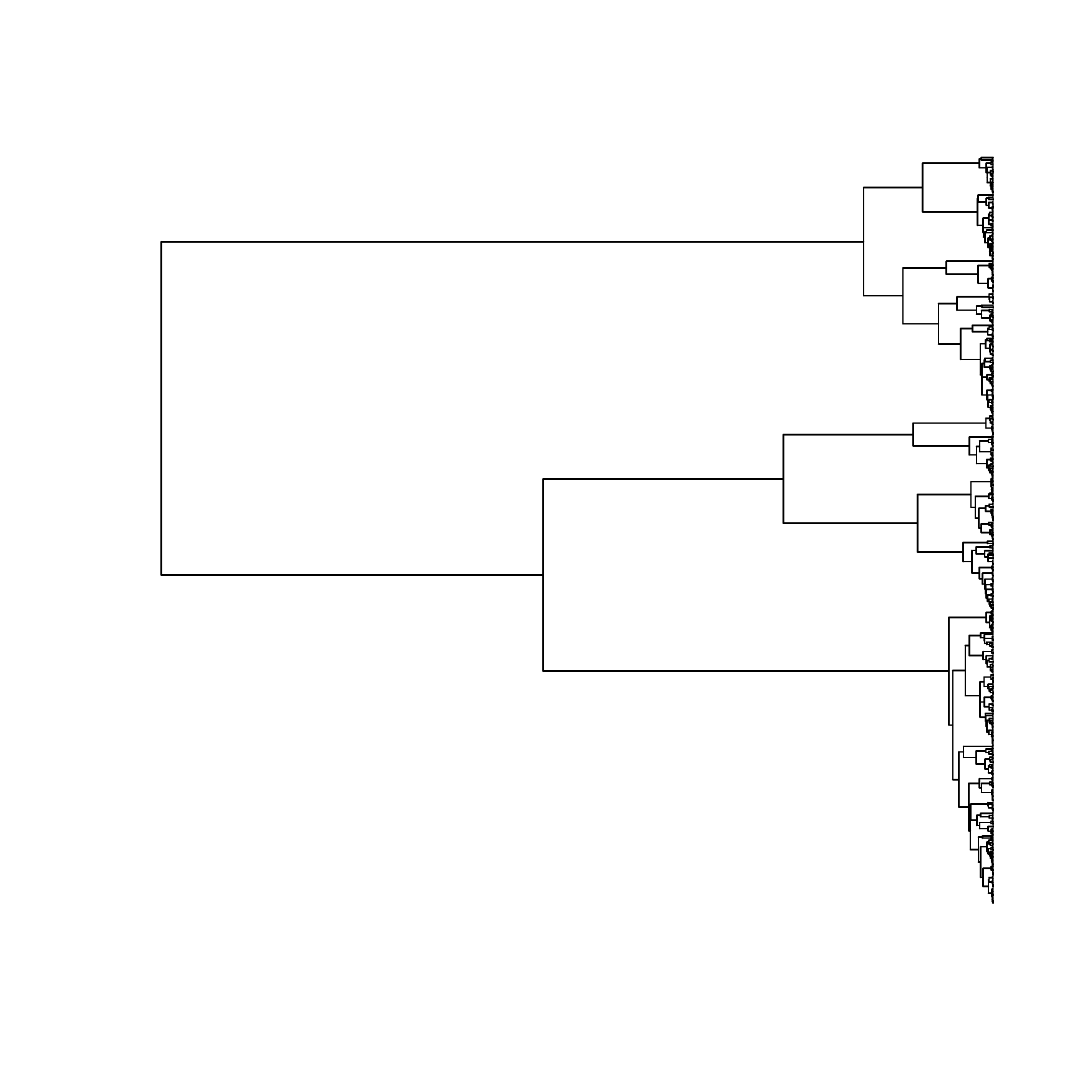}
        \caption{Generated coalescent tree}
        \label{fig:kingman_tree-tree}
 \end{subfigure}
    \begin{subfigure}[b]{0.53\textwidth}
        \includegraphics[width=\textwidth]{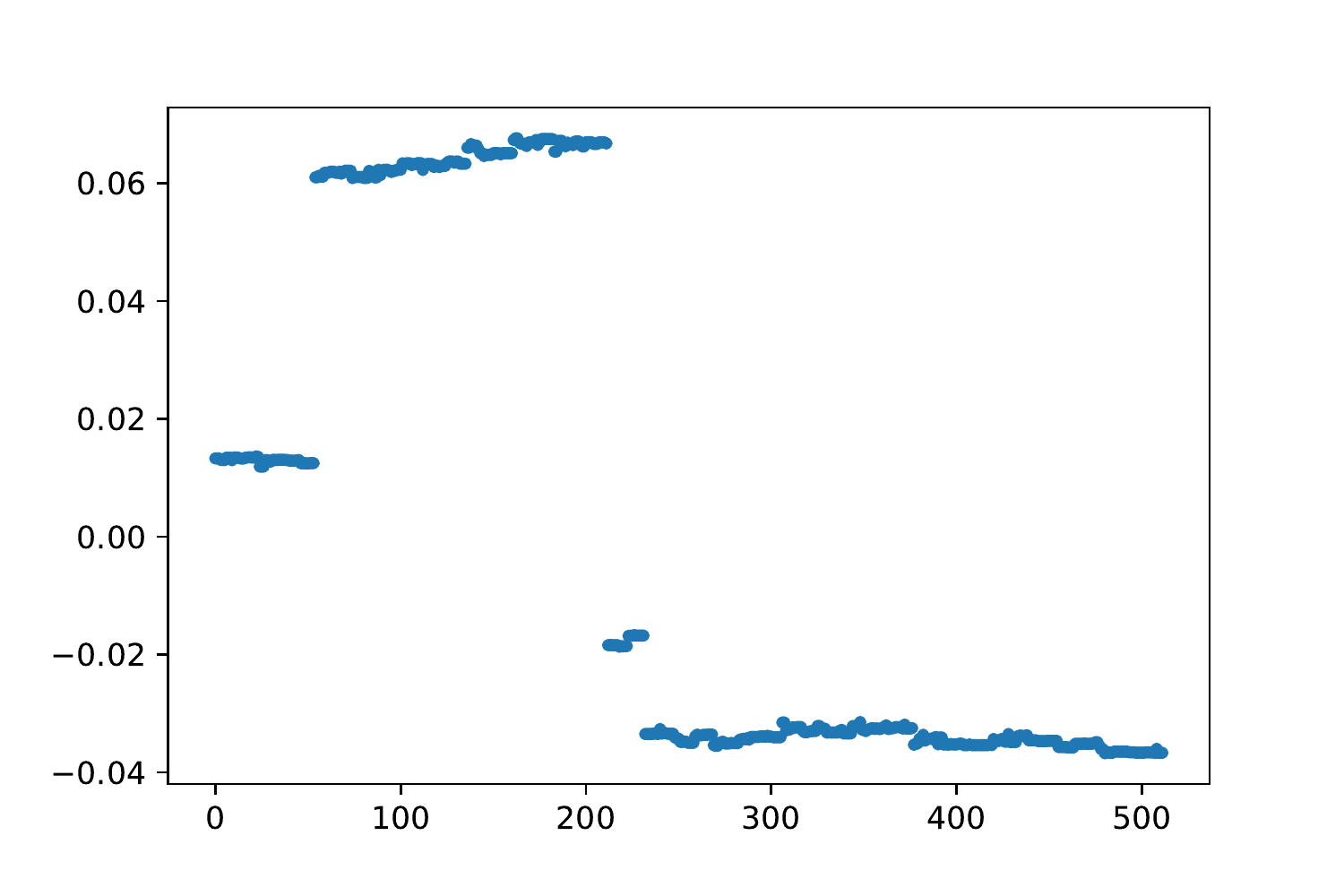}
        \caption{Fiedler vector of the coalescent tree}
        \label{fig:kingman_tree-eigs}
    \end{subfigure}
    \caption{Coalescent tree example with 512 terminal nodes}
    \label{fig:kingman_tree}
\end{figure}

\section{Relation between the partitioning step and the min-cut criterion}\label{appendix:mincut}
Let $\T$ be a binary tree and $G$ be its similarity graph, as defined in Section \ref{def:similarity_graph}.
The following lemma shows that partitioning the terminal nodes according to the min-cut criterion yields two clans of $\T$.
\begin{lemma}\label{lem:min_cut} 
Let $G$ be the similarity graph of a binary tree $\T$. Let $A^\ast$ and $B^\ast$ be a partition of the terminal nodes that minimizes the following min-cut criterion:
\begin{equation}\label{eq:min_cut}
(A^\ast,B^\ast) \in \argmin_{A,B} \mathrm{Cut}_G(A,B) = \argmin_{A,B} \sum_{i \in A,j \in B} S(x_i,y_j).
\end{equation}
  Then $A^\ast$ and $B^\ast$ are clans in $\T$.
\end{lemma}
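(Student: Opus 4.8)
The plan is to leverage the rank-one characterization of clans from Lemma~\ref{lem:affinity_spectral} together with the multiplicative (exponential-decay) structure of the similarity function. First I would recall that for a binary tree $\T$, every edge $e$ induces a bipartition of the terminal nodes into two clans $(A_e, B_e)$, and conversely by Lemma~\ref{lem:affinity_spectral} a bipartition $(A,B)$ consists of two clans if and only if $S(A,B)$ is rank one. So the claim is: the minimizer of $\mathrm{Cut}_G(A,B)$ over \emph{all} bipartitions is attained at one of the $2m-3$ ``edge bipartitions.'' Equivalently, I want to show that any bipartition $(A,B)$ that is \emph{not} a pair of clans has strictly larger cut than some bipartition that is.

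The key step is a local exchange/uncrossing argument. Suppose $(A^\ast, B^\ast)$ is an optimal bipartition that is not a pair of clans. I would pick a terminal node $x$ and consider the effect of moving $x$ from its current side to the other side: since the graph is complete with positive weights $S(x_i,x_j) = \prod_{e \in \mathrm{path}(i,j)} s_e \in [\delta,\xi]$, the change in cut value when moving $x$ from $A$ to $B$ is $\sum_{j \in B \setminus \{x\}} S(x,x_j) - \sum_{j \in A \setminus \{x\}} S(x,x_j)$. Optimality forces this to be $\ge 0$ for every $x$ and every allowed move. The heart of the argument is to show that these inequalities, combined with the tree-metric structure of $-\log S$, cannot all hold unless $A^\ast$ is exactly the set of terminal nodes on one side of some edge of $\T$. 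Concretely, I would root the tree, and for a putative non-clan $A^\ast$ locate a lowest internal node $h$ whose subtree of leaves $L_h$ is ``split'' by $A^\ast$ (some leaves in $A^\ast$, some in $B^\ast$); then I would argue that either moving all the $A^\ast$-leaves under $h$ to $B^\ast$, or all the $B^\ast$-leaves under $h$ to $A^\ast$, strictly decreases the cut, using that leaves within $L_h$ are mutually more similar (their connecting paths are short) than leaves across the separating edge of $h$ (whose paths all traverse that edge and hence pick up its factor $s \le \xi < 1$). This contradicts optimality, so $A^\ast$ must be a clan; and since $A^\ast, B^\ast$ partition the terminal nodes with $B^\ast = (A^\ast)^c$, by Lemma~\ref{lem:affinity_spectral} both are clans.

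I expect the main obstacle to be making the ``strictly decreases'' claim rigorous: moving a whole block of leaves changes many cut terms at once, and one must verify that the gained within-block similarities outweigh the lost across-edge similarities uniformly, i.e.\ the net change is negative regardless of how the rest of $A^\ast$ and $B^\ast$ are distributed. The clean way to handle this is to exploit that $S$ restricted to the leaves of $L_h$ is itself (by the multiplicative property) $S(\cdot,h)\,S(h,\cdot)$ up to the block structure, so the across-edge contributions all factor through $S(\cdot,h)$ times a common scalar $\le \xi$, which lets the within-$L_h$ terms dominate by a telescoping/averaging estimate; an alternative, possibly cleaner, route is a direct uncrossing argument showing $\mathrm{Cut}_G(A,B) + \mathrm{Cut}_G(A',B') \ge \mathrm{Cut}_G(A\cap A', \cdot) + \mathrm{Cut}_G(A \cup A', \cdot)$ (submodularity of the cut function) applied to $A^\ast$ and a clan that crosses it, reducing to the comparison of two ``smaller'' bipartitions. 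I would also need to dispatch the edge case where the minimizing bipartition is not unique, but there it suffices to note that \emph{some} minimizer is a pair of clans, and then Lemma~\ref{lem:affinity_spectral} is not even needed for the converse direction since clans automatically give rank-one blocks.
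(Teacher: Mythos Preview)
Your approach is essentially the same as the paper's, though you frame it top-down (root the tree, find a lowest split node $h$) whereas the paper works bottom-up (start from a pair of adjacent terminal nodes sharing a neighbor $h$, then generalize to adjacent \emph{homogeneous} clans). In both cases the crux is the same exchange step: two adjacent clans $C_1\subseteq A$, $C_2\subseteq B$ with common parent $h$, and the claim that relabelling one of them does not increase the cut. Your ``lowest split node'' is exactly the device that produces such a pair.

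Where your proposal has a gap is the part you yourself flag as the obstacle. The single-node optimality inequalities you write down are necessary but not sufficient, and you never actually carry out the block-move inequality. The paper does this cleanly and you should too: write
\[
\mathrm{Cut}(A,B)=S(x_1,x_2)+S(x_1,h)\!\!\sum_{x'\in B\setminus\{x_2\}}\!\!S(h,x')\;+\;S(x_2,h)\!\!\sum_{x\in A\setminus\{x_1\}}\!\!S(x,h)\;+\;S_0,
\]
using multiplicativity through $h$; then WLOG $\sum_{B\setminus\{x_2\}}S(h,\cdot)\ge\sum_{A\setminus\{x_1\}}S(h,\cdot)$ immediately gives $\mathrm{Cut}(A,B)\ge \mathrm{Cut}(A\setminus\{x_1\},B\cup\{x_1\})$. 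The same computation works verbatim with $x_1,x_2$ replaced by homogeneous clans $C_1,C_2$. This is exactly the ``factoring through $S(\cdot,h)$'' you allude to, but it is a two-line calculation, not an obstacle. The submodularity route you mention as an alternative does not obviously close the argument on its own (it gives $\mathrm{Cut}(C)\ge\mathrm{Cut}(A^\ast)$ for any clan $C$, which is the wrong direction), so I would drop it.

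One more point: the inequality above is not strict, so you do not get ``strictly decreases''. The paper handles this by observing that if $(A,B)$ is not a clan bipartition then there are at least \emph{two} disjoint adjacent homogeneous pairs with opposite labels, which rules out the degenerate case where moving $C_1$ empties $A$. You should include that observation; your current sketch asserts strict decrease without justifying it.
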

\begin{proof}
Let $(x_1,x_2)$ be a pair of adjacent terminal nodes. Consider an arbitrary partition of the terminal nodes into two non-empty subsets, denoted $A$ and $B$. The two adjacent nodes $(x_1,x_2)$ can, respectively, be labeled $(A,B)$, $(A,A)$, $(B,A)$ or $(B,B)$.
We show that if $A$ and $B$ each contains nodes besides $x_1$ and $x_2$, then assigning $x_1$ and $x_2$ to the same subset decreases the value of the min-cut criterion.

Assume without loss of generality that $x_1 \in A, x_2\in B$. The cut between $A$ and $B$ is equal to
\[
\mathrm{Cut}(A,B) \equiv \sum_{x \in A,x' \in B} S(x,x') = S(x_1,x_2) + \sum_{x'\in B \setminus \{x_2\}} S(x_1,x') + \sum_{x\in A \setminus \{x_1\}} S(x,x_2) +S_0, 
\]
where 
\[
S_0 = \sum_{\substack{x \in A \setminus \{x_1\} \\ x' \in B \setminus \{x_2\}}} S(x,x')
\]
does not depend on the assignment of $x_1$ and $x_2$. Let $h$ be the unique node that is adjacent to both $x_1$ and $x_2$. From the multiplicative property of the similarity, we have \[
\mathrm{Cut}(A,B) = S(x_1,x_2) + S(x_1,h) \sum_{x' \in B \setminus \{x_2\}} S(h,x') + S(x_2,h) \sum_{x \in A \setminus \{x_1\}} S(x,h) +S_0.
\]
Without loss of generality, assume that
\begin{equation}\label{eq:partial_mincut_sums_order}
\sum_{x' \in B \setminus \{x_2\}} S(h,x') \geq \sum_{x \in A \setminus \{x_1\}} S(x,h).
\end{equation}
It follows that
\begin{align}\label{eq:mincut_score_improve}   
\mathrm{Cut}(A,B) &\geq  S(x_1,h) \sum_{x \in A \setminus \{x_1\}} S(x,h) + S(x_2,h) \sum_{x \in A \setminus \{x_1\}} S(x,h) +S_0  \\
&= \sum_{x \in A \setminus \{x_1\}} S(x,x_1) + \sum_{x \in A \setminus \{x_1\}} S(x,x_2) + \sum_{\substack{x \in A \setminus \{x_1\} \\ x' \in B \setminus \{x_2\}}} S(x,x') 
= \sum_{\substack{x \in A \setminus \{x_1\} \\ x' \in B \cup \{x_1\}}} S(x,x'). \notag 
\end{align}
Note that the right hand side of Eq. \eqref{eq:mincut_score_improve} equals the value of the cut of the same partition, but with $x_1$ moved from $A$ to $B$. 
Thus, the min-cut partition $\{A^*, B^*\}$ satisfies one of the following:
\begin{itemize}
    \item $x_1$ and $x_2$ are in the same subset.
    \item One of $A^*$ or $B^*$ equals exactly to $\{x_1\}$ or $\{x_2\}$.
\end{itemize}

Next, let $C_1$ and $C_2$ be two adjacent clans. Assume that the terminal nodes of each of the clans are homogeneous (i.e., they all belong to the same subset, $A$ or $B$).
The same argument for a pair of terminal nodes carries over to the case of two adjacent homogeneous clans, showing that the minimal cut partition $\{A^*, B^*\}$ satisfies one of the following:
\begin{itemize}
    \item $C_1$ and $C_2$ are in the same subset.
    \item One of $A^*$ or $B^*$ equals exactly $C_1$ or $C_2$.
\end{itemize}
Let $\{A, B\}$ be an arbitrary partition of the terminal nodes that does not correspond to two clans in the tree. 
Since $A$ and $B$ are not clans, there must be at least two disjoint pairs $C_1,C_2$ and $\tilde C_1,\tilde C_2$ of homogeneous adjacent subsets, where the nodes in $C_1$ are labeled by $A$ and the nodes in $C_2$ are labeled by $B$.  By our arguments $\text{Cut}(A,B)$ can be reduced by either changing the labels of $C_1$ to $B$ or $C_2$ to $A$ which implies that $\{A, B\}$ is not the min-cut partition. Thus, for any min-cut partition $\{A^*, B^*\}$, $A^*$ and $B^*$ are clans.
\end{proof}

\section{Supplementary proofs for Section  \ref{sec:algorithm}}\label{appendix:sec_algorithm}
We present here the proofs of Lemmas \ref{lem:correct_placeholder} and Lemma \ref{lem:merge_trees}  that are used in Section \ref{sec:algorithm}.

\begin{proof}[Proof of Lemma \ref{lem:correct_placeholder}]
Let $C_2$ be the clan of all the terminal nodes of $\T$ that are not in $C_1$. Consider an edge $e(h_A,h_B)$ in $\T_1$ that partitions $C_1$ into $A(e)$ and $B(e)$. 
First, assume that $e(h_A,h_B)$ is the correct placeholder edge of $\T_1$. Then there exists a node $h_1$ in the full tree $\T$ that is connected to $h_A,h_B$ and to the root node of $C_2$. Removing the edge $e(h_A,h_1)$ in $\T$ separates the subset $A(e)$ from the remaining nodes in $\T$, which implies that $A(e)$ is a clan in $\T$. By the same argument, $B(e)$ is also a clan in $\T$. 

Conversely, assume that $A(e)$, $B(e)$ and $C_2$ are disjoint clans that partition the terminal nodes of $\T$. 
Then, there exists a node $h_1$ that connects to the roots of $A(e),B(e)$ and $\T_2$. This proves that the edge $e(h_A, h_B)$ in $\T_1$ is the correct placeholder edge, since it is where the root $h_1$ is inserted.
\end{proof}

\begin{proof}[Proof of Lemma \ref{lem:merge_trees}]
Let $C_1 = A \cup B$ be the terminal nodes of the clan $\T_1$ and let  $h_1$ be its root. We denote by $C_2$ the terminal nodes in its adjacent clan. 
By the multiplicative property of the similarity function,
\[
S(C_1,C_2) = S(C_1,h_1) S(h_1,C_2). 
\]
Combining the above expression with Eq. \eqref{eq:clans_svd} implies that the left singular vector $u$ of $S(C_1,C_2)$ is proportional to 
the vector of similarities $S(C_1,h_1)$. That is, $\exists \beta \in \RR$ such that
$
    S(C_1,h_1)= \beta u.
$
Let $e$ be an edge in $\T_1$ that partitions the terminal nodes into $A(e),B(e)$. The vector 
$S(C_1,h_1)$ can be similarly partitioned into $S(A(e),h_1)$ and $S(B(e),h_1)$ such that
\begin{equation}\label{eq:split_sh_u}
 S(A(e),h_1)= \beta u_{A(e)}, \qquad 
 S(B(e),h_1)= \beta u_{B(e)}.
\end{equation}
We first prove that if $e$ is the correct placeholder edge of $\T_1$, then Eq. \eqref{eq:correct_placeholder} holds. 
By Lemma \ref{lem:correct_placeholder}, if $e$ is the correct placeholder edge then the root node $h_1$ separates $A(e)$ from $B(e)$. By Eq. \eqref{eq:split_sh_u} and the multiplicative property of the similarity measure, we have 
\[
    S(A(e),B(e)) = S(A(e),h_1)S(h_1,B(e)) =  u_{A(e)}\beta^2u_{B(e)}^T.
\]
Setting $\alpha = \beta^2$ proves Eq. \eqref{eq:correct_placeholder}.

Next, we assume that Eq. \eqref{eq:correct_placeholder} holds for some edge $e$ and prove that $e$ is the correct placeholder edge.
Consider the matrix $S(A(e),B(e) \cup C_2)$. 
Since $h_1$ is the root of $\T_1$,
\[
S(A(e),C_2) = S(A(e),h_1)S(h_1,C_2) \qquad \text{and} \qquad S(A(e),h_1)= \beta u_{A(e)}
\]
we have 
\[
S(A(e),C_2) = \beta u_{A(e)} S(h_1,C_2).
\]
Recall that by assumption $S(A(e),B(e)) = u_{A(e)}\alpha u_{B(e)}$. It follows that both matrices $S(A(e),B(e))$ and $S(A(e),C_2)$ are rank one with a left singular vector equal to $u_{A(e)}$. Thus, 
the concatenated matrix $S(A(e),B(e) \cup C_2)$ is rank-one.
By Lemma~\ref{lem:affinity_spectral}, this implies that $A(e)$ is a clan of the tree $\T$. A similar argument shows that $B(e)$ is also a clan in $\T$. Since $A(e)$ and $B(e)$ are both clans in $\T$, it follows from Lemma \ref{lem:correct_placeholder} that $e$ is the correct placeholder edge of $\T_1$.
\end{proof}

\section{Comparison to distance based tree partitioning}\label{appendix:griffing}
Let $D\in \RR^{m\times m}$ be a matrix 
whose elements are the pairwise phylogenetic distances between all terminal nodes.
Given the exact distance matrix, it was shown in \cite{griffing2012connections} that the terminal nodes of a tree can be partitioned into two clans according to the sign pattern of the leading eigenvector of the following matrix
\[
(I - \bm 1 \bm 1^T/m) D (I - \bm 1 \bm 1^T/m).
\]
Figure \ref{fig:similarity_distance_partition} shows the percentage of times the terminal nodes were  
correctly partitioned into clans by applying our similarity based approached vs. the distance-based approach derived in \cite{griffing2012connections}.
We generated $200$ random trees according to  Kingman's coalescent model with $m=128$ terminal nodes.  Figures \ref{fig:similarity_distance_vs_N} shows the ratio of times each method successfully partitioned the tree as a function of the number of samples with a fixed mutation rate between adjacent nodes of $\delta=0.9$. Similarly, Figure
 \ref{fig:similarity_distance_vs_delta} shows the performance of both methods  as a function of $\delta$ with a fixed number of samples $n=100$.  The advantage of using the similarity matrix over the distance matrix is clear. 
\begin{figure}
	\begin{subfigure}[b]{0.45\textwidth}
		\includegraphics[width = 0.8\textwidth]{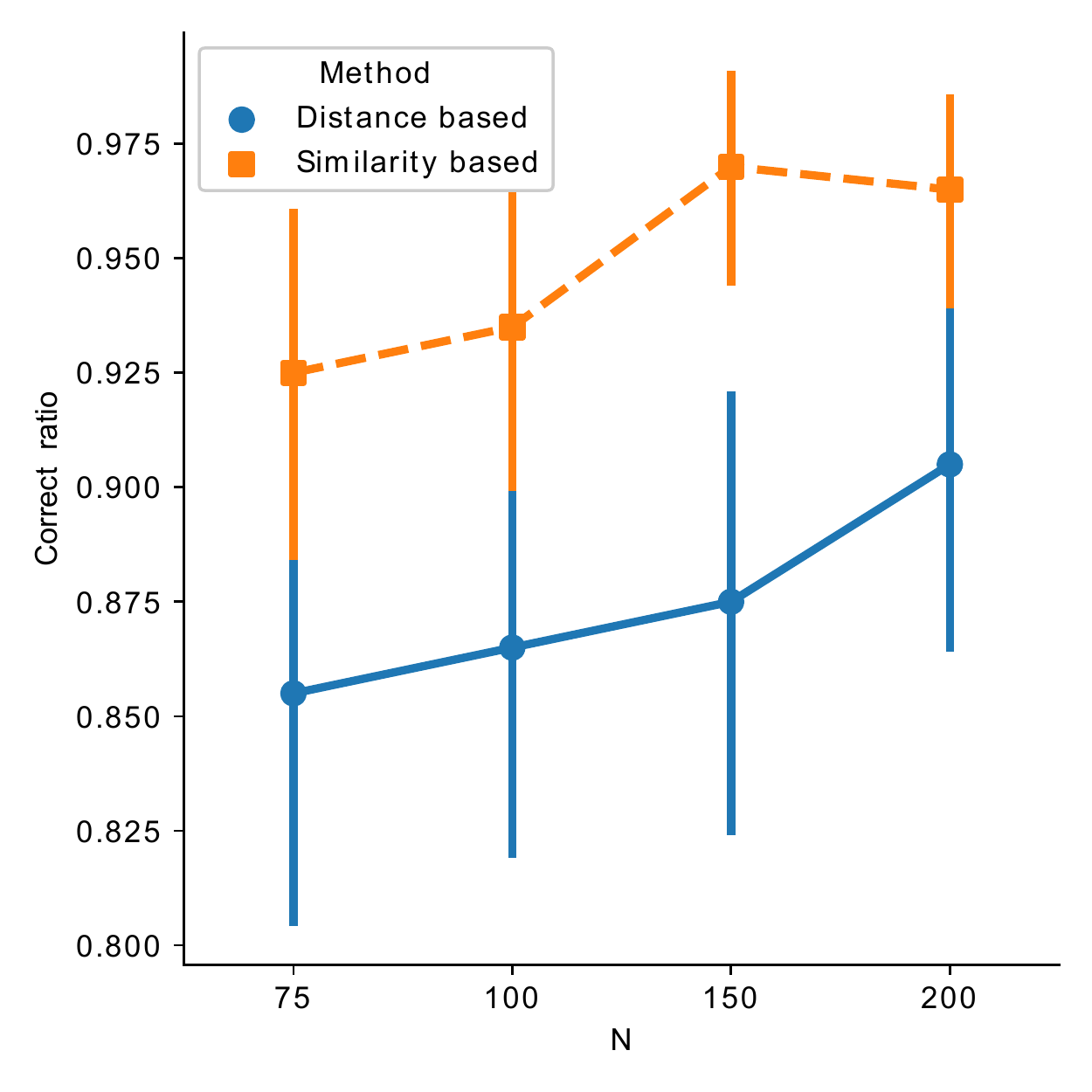}
		\caption{Partitioning accuracy vs. number of samples.}
		\label{fig:similarity_distance_vs_N}
	\end{subfigure}
	\begin{subfigure}[b]{0.45\textwidth}
	\includegraphics[width = 0.8\textwidth]{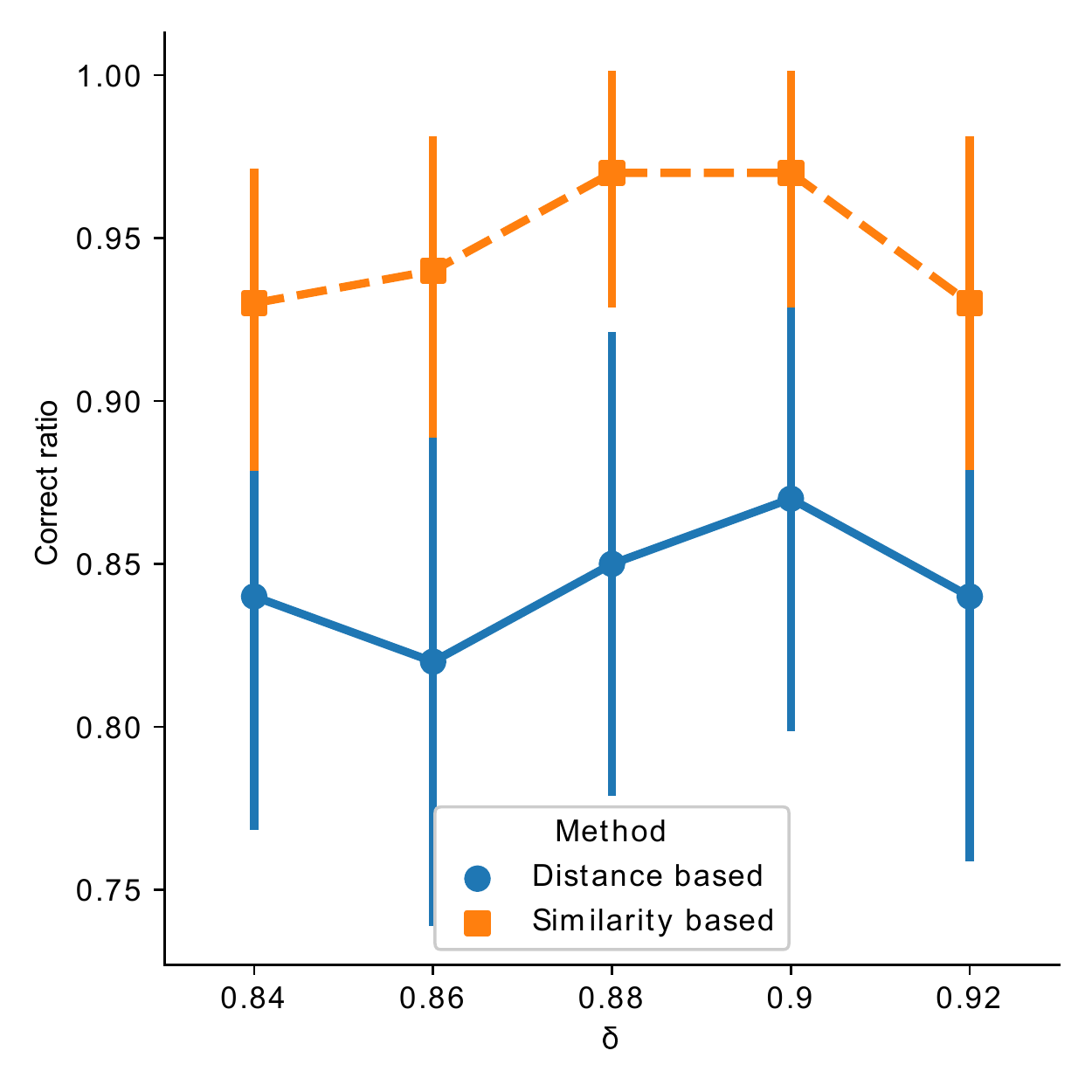}
	\caption{Partitioning accuracy vs. mutation rate.}
	\label{fig:similarity_distance_vs_delta}
	\end{subfigure}
	\caption{Comparison between distance based and similarity based spectral partitioning.}
	\label{fig:similarity_distance_partition}
\end{figure}

\section{Proof of Lemma \ref{lem:graph_construction}}\label{appendix:proof_of_consistensy_lemma}
We begin with several definitions and notations.
We denote by $G(v,w)$, $\T(v,w)$ the weight between nodes $v$ and  $w$ in a graph $G$ and tree $\T$, respectively. 
For a tree $\T$, we denote by $path_{\T}(v,w)$ the set of edges on the path between nodes $v$ and $w$,
\[
\mbox{path}_\T(v,w) = \{ (\tilde v,\tilde w) | \mbox{  $\tilde v$ and $\tilde w$ are adjacent nodes on the path between $v$ and $w$} \}.
\]
Next, we define the multiplicative weight between two nodes in a tree. 
\begin{definition}\label{def:product_path}
The multiplicative weight between $v$ and $w$ in a tree $\T$ is equal to,
\begin{equation}\label{eq:multi_weights}
\alpha_\T(v,w) = \prod_{(\tilde v,\tilde w)\in \mbox{path}_\T(v,w)} \T(\tilde v,\tilde w).
\end{equation}
\end{definition}
For example, let $\T$ be a tree whose edge weights are given by the similarity in Eq. \eqref{eq:similarity}, then the similarity between two terminal nodes $x_1,x_2$ is equal to the  multiplicative weight $\alpha_\T(x_1,x_2)$.
The next definition concerns a graph with nodes that correspond to a subset of nodes in $\T$, and weights computed according to \eqref{eq:multi_weights}.
\begin{definition}[Multiplicative subgraph]
Let $\T$ be a tree with a set of nodes $V$. We say that a graph $G$ is a \textit{multiplicative subgraph} with respect to $\T$ and a subset of nodes $\widetilde V \subset V$ if (i) the nodes of $G$ correspond to $\widetilde V$ and (ii) the weight assigned to an edge connecting $v,w$ in $G$ is equal to the multiplicative weight between $v$ and $w$ in $\T$, \[
G(v,w) = \alpha_{\T}(v,w).
\]
\end{definition}
For convenience, we will sometimes say that $G$ is a multiplicative subgraph of $\T$ without explicitly stating which nodes are included in $G$.
By definition, the similarity graph $G$ is a multiplicative subgraph with respect to the terminal nodes of $\T$.
Note that we use $v$ and $w$ as nodes both in $G$ and in $\T$ interchangeably, since by definition every node in $G$ corresponds to a node in $\T$. 

The proof of Lemma \ref{lem:graph_construction} is constructive. 
Given a tree $\T$ and its similarity graph $G$, we present an iterative procedure to build a second tree $\tilde{\T}$, with the same topology as $\T$, but with different weights such that
\[
L_G = L_{\tilde \T/R},
\]
where $R$ is the set of all internal nodes in $\T$.
Computing $\tilde \T$ consists of iterative and simultaneous updates of a graph and a tree: (i) a graph $G_i$ with nodes that correspond to a subset of the nodes in $\T$. The initial graph $G_0$ is set to $G$, with only the terminal nodes of $\T$. (ii) A tree $\T_i$, with the same topology as $\T$. The weights of the initial tree $\T_0$ are set such that $\T_0=\T$. 

At each iteration $i$, we add one of the non-terminal nodes  $h_i$ of $\T$ (that was not previously added) to $G_{i}$, 
creating $G_{i+1}$. The weights of the new graph $G_{i+1}$ are set such that
the Schur complement of its Laplacian matrix with respect to the added node $h_i$ is equal to the Laplacian of the previous graph $L_{G_{i}}$.
\begin{equation}\label{eq:update_graph}
 L_{G_{i}} = L_{G_{i+1}/h_i}.
\end{equation}
The steps for computing $G_{i+1}$ given $G_i$ and $\T_i$ are described in Algorithm \ref{alg:update_Gi}.
Next, we compute a new tree $\T_{i+1}$ with the same topology as $\T_{i}$. The weights of $\T_{i+1}$ are set such 
that $G_{i+1}$ becomes a multiplicative subgraph with respect to $\T_{i+1}$. The steps for computing $\T_{i+1}$ are described in Algorithm \ref{alg:update_Ti}.
At every iteration $i$, we maintain an \textit{active set} of nodes which we denote by $A_i$.
When updating $G_i$, changes are only made to edges connecting two nodes in $A_i \cup h_i$. When updating $\T_i$, changes are only made to edges on the path between two nodes in the active set. 
The initial active set $A_0$ is equal to all terminal nodes of $\T$. 

In our proof, we use the following two auxiliary lemmas, that show the correctness of the updating procedure of $G_i$ and $\T_i$. 
An implementation of Algorithms \ref{alg:update_Gi} and \ref{alg:update_Ti} is available on GitHub.
The first lemma proves the correctness of Algorithm \ref{alg:update_Gi}. 
The input to Algorithm \ref{alg:update_Gi} is the tree $\T_i$, a multiplicative subgraph $G_i$ and an active set $A_i$, all of which were computed in the previous iteration. 
The output of the algorithm is an updated graph $G_{i+1}$ that contains an additional node $h_i$. In addition, the algorithm updates the active set $A_i$ and creates $A_{i+1}$.
\begin{lemma}\label{lem:update_G}
The output of Algorithm \ref{alg:update_Gi}  is a graph $G_{i+1}$ whose nodes include $h_i$ as well as all the nodes in $G_i$ such that
\[
L_{G_{i+1}/h_i}  = L_{G_{i}}.
\]
\end{lemma}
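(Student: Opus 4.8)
The plan is to prove Lemma~\ref{lem:update_G} by an entrywise comparison of the two Laplacians, built on the classical description of the Schur complement of a graph Laplacian with respect to a single node (Kron reduction). Recall that if $h$ is a node of a weighted graph with Laplacian $L = D - W$, then $L/h$ is again the Laplacian of a weighted graph on the remaining nodes, and its off-diagonal entries are
\[
\bar W_{vw} = W_{vw} + \frac{W_{vh}\,W_{wh}}{\sum_{u} W_{uh}}, \qquad v,w \ne h;
\]
see \cite{dorfler2012kron, crabtree1966applications}. Because a Laplacian is completely determined by its off-diagonal entries (each diagonal entry equals minus the sum of the off-diagonal entries in its row), it is enough to show that the off-diagonal entries of $L_{G_{i+1}/h_i}$ agree with those of $L_{G_i}$.

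The next step is to record what Algorithm~\ref{alg:update_Gi} does to $G_{i+1}$. The only new vertex is $h_i$, and it is joined exactly to its ``children'' $v_1,\dots,v_r$ --- the neighbours of $h_i$ in $\T_i$ that lie in the active set $A_i$ --- with positive weights $\omega_1,\dots,\omega_r$ chosen so that $\omega_j\omega_k/\sum_{l}\omega_l = \alpha_{\T_i}(v_j,h_i)\,\alpha_{\T_i}(h_i,v_k)$ for every $j\ne k$; one concrete choice is $\omega_j = \alpha_{\T_i}(v_j,h_i)\sum_{l}\alpha_{\T_i}(v_l,h_i)$. Every edge of $G_i$ between two vertices that are not both children of $h_i$ is copied verbatim into $G_{i+1}$, and the direct edges among $v_1,\dots,v_r$ are removed. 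All the assigned weights are nonnegative, since the similarities --- hence the multiplicative weights $\alpha_{\T_i}(\cdot,\cdot)$ --- are strictly positive, so $G_{i+1}$ is a legitimate weighted graph.

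The verification then falls into two cases. If $\{v,w\}$ is a pair of vertices of $G_i$ not both of which are children of $h_i$, then $W^{G_{i+1}}_{vh_i}W^{G_{i+1}}_{wh_i}=0$, so the Kron correction vanishes and the $(v,w)$ off-diagonal entry of $L_{G_{i+1}/h_i}$ equals the $G_{i+1}$-weight of $\{v,w\}$, which by construction is its $G_i$-weight. If instead $v=v_j$ and $w=v_k$ are two distinct children of $h_i$, then $W^{G_{i+1}}_{v_jv_k}=0$, so the off-diagonal entry equals $\omega_j\omega_k/\sum_l\omega_l = \alpha_{\T_i}(v_j,h_i)\,\alpha_{\T_i}(h_i,v_k)$; since $h_i$ lies on the $v_j$--$v_k$ path in $\T_i$, the multiplicative property of $\alpha_{\T_i}$ gives $\alpha_{\T_i}(v_j,h_i)\,\alpha_{\T_i}(h_i,v_k)=\alpha_{\T_i}(v_j,v_k)$, which is exactly the weight of $\{v_j,v_k\}$ in $G_i$ because $G_i$ is a multiplicative subgraph of $\T_i$. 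Hence all off-diagonal entries match, and therefore $L_{G_{i+1}/h_i}=L_{G_i}$.

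The main obstacle is careful bookkeeping rather than a deep idea. One must use the invariant that $G_i$ is a multiplicative subgraph of $\T_i$, and that $A_i$ records precisely the vertices currently representing the subtrees below them, to be sure that the two cases above are exhaustive, that the copied edges are genuinely untouched by eliminating $h_i$, and --- crucially --- that $h_i$ has no spurious incident edges in $G_{i+1}$, so the denominator $\sum_u W^{G_{i+1}}_{uh_i}$ really is $\sum_l\omega_l$ and $h_i$ lies on every $v_j$--$v_k$ path. One also has to check the algebraic identity $\omega_j\omega_k/\sum_l\omega_l=\alpha_{\T_i}(v_j,v_k)$ for the exact normalisation used in Algorithm~\ref{alg:update_Gi}; the companion statement, that $\T_{i+1}$ can be re-weighted so that $G_{i+1}$ is again a multiplicative subgraph (needed to continue the induction), is handled separately with Algorithm~\ref{alg:update_Ti}.
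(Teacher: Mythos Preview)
There is a genuine gap: your summary of Algorithm~\ref{alg:update_Gi} does not match what the algorithm actually does, so your entrywise check verifies the Kron-reduction identity for a \emph{different} construction, not for the $G_{i+1}$ the lemma is about. In Step~4 of the algorithm the new node $h_i$ is joined to \emph{every} node $x$ of the active set $A_i$ (with weight $G_{i+1}(h_i,x)=d\,\alpha_{\T_i}(h_i,x)$, $d=\sum_{x'\in A_i}\alpha_{\T_i}(h_i,x')$), not only to the tree-neighbours of $h_i$. In Step~5 the edges between pairs $x,y\in A_i\setminus\{v_{i,1},v_{i,2}\}$ are \emph{not} copied verbatim but decreased by $\alpha_{\T_i}(h_i,x)\,\alpha_{\T_i}(h_i,y)$; and in Step~3 the edges removed are those from the two distinguished neighbours $v_{i,1},v_{i,2}$ to all of $A_i$, not merely the edges among the children of $h_i$. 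Figure~\ref{fig:proof} makes this explicit: in $G_1$ the node $h_0$ is adjacent to all four leaves $x_1,\dots,x_4$, and the surviving edge $x_3$--$x_4$ drops from $1/4$ to $3/16$.

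With the correct weights the computation is different from yours: the degree of $h_i$ in $G_{i+1}$ is $\sum_{x\in A_i}d\,\alpha_{\T_i}(h_i,x)=d^{2}$, and for any $x,y\in A_i$ the Kron correction equals $d\,\alpha_{\T_i}(h_i,x)\cdot d\,\alpha_{\T_i}(h_i,y)/d^{2}=\alpha_{\T_i}(h_i,x)\,\alpha_{\T_i}(h_i,y)$, which exactly cancels the subtraction performed in Step~5 (and, for pairs involving $v_{i,1}$ or $v_{i,2}$, restores the original $G_i$-weight because $h_i$ lies on their path in $\T_i$). That cancellation is the mechanism in the paper's proof. Your two-case split (``both children of $h_i$'' versus ``not both children''), the claim that $h_i$ has no incident edges other than to its children, and the identity $\omega_j\omega_k/\sum_l\omega_l=\alpha_{\T_i}(v_j,v_k)$ are therefore all inapplicable to the actual $G_{i+1}$; the relevant case distinction is instead whether the pair $(h_j,h_k)$ lies in $A_i$ or not.
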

The next lemma concerns the updating procedure of $\T_{i}$. The input to Algorithm \ref{alg:update_Ti} consists of the new active set  $A_{i+1}$, and the node $h_i$ added to $G_{i+1}$. Here, the only changes made are to edges on the path between $h_i$ and the nodes in the active set $A_{i+1}$.
\begin{lemma}\label{lem:update_T}
The tree $\T_{i+1}$ built according to Algoithm \ref{alg:update_Ti} is such that $G_{i+1}$ becomes a multiplicative subgraph of $\T_{i+1}$.
\end{lemma}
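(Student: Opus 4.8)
The plan is to argue by induction on the iteration index $i$. The inductive hypothesis is that, after iteration $i$, the graph $G_i$ is a multiplicative subgraph of $\T_i$ on its current node set, and that the bookkeeping invariant on the active set holds --- intuitively, that $A_i$ is exactly the set of nodes of $G_i$ still adjacent in $\T$ to a node that has not yet been inserted, with $A_0$ the full set of leaves. The base case $i=0$ is immediate: $G_0 = G$ is the similarity graph, and by definition its edge weights are the products of the $S$-values along the leaf-to-leaf paths in $\T_0 = \T$, i.e.\ the multiplicative weights $\alpha_{\T_0}(\cdot,\cdot)$ of Definition \ref{def:product_path}. Given the hypothesis after iteration $i$, I want to show that every edge $(v,w)$ of $G_{i+1}$ satisfies $G_{i+1}(v,w) = \alpha_{\T_{i+1}}(v,w)$.

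First I would split the edges of $G_{i+1}$ into the ones left untouched when $h_i$ is inserted and the ones created or rewritten. By construction Algorithm \ref{alg:update_Gi} only modifies edges joining two nodes of $A_i \cup \{h_i\}$, and Algorithm \ref{alg:update_Ti} only modifies edge weights of $\T_i$ lying on a path between $h_i$ and a node of $A_{i+1}$. So for an edge $(v,w)$ of $G_i$ with $\{v,w\} \not\subseteq A_i$ one has $G_{i+1}(v,w) = G_i(v,w)$, and the plan is to use the active-set invariant to show that $\mathrm{path}_{\T_{i+1}}(v,w)$ avoids every edge modified by Algorithm \ref{alg:update_Ti}, so that $\alpha_{\T_{i+1}}(v,w) = \alpha_{\T_i}(v,w)$; the inductive hypothesis then closes this case.

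For the remaining edges --- those incident to $h_i$, and the edges among nodes of $A_i$ that were rewritten --- I would verify the equality directly from the update rules. The key observation is that Algorithm \ref{alg:update_Gi} fixes each new edge weight via the single-node Schur-complement (Kron-reduction) formula forced by the constraint $L_{G_{i+1}/h_i} = L_{G_i}$ of Lemma \ref{lem:update_G}, while Algorithm \ref{alg:update_Ti} is designed so that the weights it places on the edges of $\T_{i+1}$ between $h_i$ and the active nodes multiply, along each relevant path, to precisely those values. Thus this step amounts to a short computation combining the Kron-reduction formula at $h_i$ with the multiplicativity of weights along tree paths (Definition \ref{def:product_path}).

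The hard part, I expect, will be the active-set bookkeeping underlying the first case: one must verify that the nodes dropped from $A_i$ when forming $A_{i+1}$ are exactly those whose remaining incident edges in $\T$ are now frozen, and hence that any path in $\T_{i+1}$ between two nodes of $G_{i+1}$ that crosses a rewritten edge has both endpoints positioned to reflect that rewrite consistently. This will require a case analysis on how many of the (at most three) tree-neighbors of $h_i$ currently lie in $A_i$, since that number governs both how many edges $G_{i+1}$ acquires and how the weight of the path through $h_i$ is split among the new edges of $\T_{i+1}$. Once the invariant is pinned down, the two cases close the induction; iterating over all internal nodes and combining with Lemma \ref{lem:update_G} then realizes $G$ as a Schur complement of the Laplacian of a reweighting of $\T$, which is Lemma \ref{lem:graph_construction}.
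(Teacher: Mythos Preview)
Your plan has the same skeleton as the paper's proof: split the edges of $G_{i+1}$ into untouched and modified, handle the untouched ones via the inductive hypothesis, and verify the modified ones directly against the update rules. But you have the emphasis reversed. What you flag as ``the hard part'' --- the active-set bookkeeping and a case analysis on how many neighbors of $h_i$ lie in $A_i$ --- is dispatched in the paper by a single sentence: any edge of $G_{i+1}$ not among the explicitly modified types is untouched by both algorithms, so the inductive hypothesis carries over. The real content is what you call ``a short computation'': the paper's three cases are (i) one endpoint is $h_i$ and the other is in $A_{i+1}$, (ii) both endpoints are in $A_{i+1}\setminus\{h_i\}$, and (iii) one endpoint is $h_i$ and the other is $v_{i,1}$ or $v_{i,2}$, and each requires writing out the path product in $\T_{i+1}$ and checking that the $\sqrt{1-\alpha_{\T_i}(\cdot,h_i)^2}$ factors from steps 2--4 of Algorithm~\ref{alg:update_Ti} telescope --- in case (i) leaving exactly the factor $d$, and in case (ii) leaving exactly the factor $(1-\alpha_{\T_i}(u,h_i)^2)$ at the branch point $u$ where the path $x\to y$ separates from $h_i$, matching the update \eqref{eq:G_jk_update}. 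Your proposal does not engage with this telescoping structure at all, and that is where the actual verification lives.
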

Figure \ref{fig:proof} shows two iterations of the aforementioned process for a tree $\T$ with four terminal and two non-terminal nodes. For simplicity, all the weights of the tree $\T$ are set to $1/2$. 

\begin{algorithm}[t]
	\caption{Updating $G_i$}
	\label{alg:update_Gi}
	\begin{algorithmic}[1]
		\Statex {\bfseries Input:}\begin{tabular}[t]{ll}
            $\T_i$ & - a tree graph\\
            $G_i$  & - a multiplicative subgraph of $\T_i$  \\
            $A_i$  & - active set of nodes
		\end{tabular}
		\Statex {\bfseries Output:}\begin{tabular}[t]{ll}
				$G_{i+1}$ & - updated graph such that $  L_{G_{i}} = L_{G_{i+1}/h_i}$ \\
				$A_{i+1}$  & - updated active set\\
				$h_i$ & - the node added to $G_i$\\
				$v_{i,1},v_{i,2}$ & - nodes removed form the active set
		\end{tabular}
	\State Initialize $G_{i+1} = G_i$ and $A_{i+1} = A_i$. 
	\State Choose a node $h_i$ in $\T_i$ that is not in $G_i$ and is adjacent to at least two nodes $v_{i,1},v_{i,2}$ in the active set $A_i$.  Add $h_i$ to $G_{i+1}$.
    \State Remove edges between the nodes $v_{i,1},v_{i,2}$ and the rest of the active set $A_i$.
    \State The weight between the new node $h_i$ and a node $x$ in the active set is computed by
    \begin{equation}\label{eq:G_ik_update}
        G_{i+1}(h_i,x) = d  \alpha_{\T_i}(h_i,x),
    \end{equation}
    where 
    \begin{equation}\label{eq:d_def}
    d = \sum_{x' \in A_i} \alpha_{\T_i}(h_i,x').
    \end{equation}
    \State The weights between two nodes $x,y$ in the active set (except $v_{i,1},v_{i,2}$) are updated by
    \begin{equation}\label{eq:G_jk_update}
        G_{i+1}(x,y) = G_{i}(x,y) -  \alpha_{\T_i}(h_i,x)
    \alpha_{\T_i}(h_i,y).
    \end{equation}
    
    \State Remove the nodes $v_{i,1}$ and $v_{i,2}$ from the active set $A_{i+1}$, and add $h_i$.
	\State \Return $G_{i+1}$, $A_{i+1}$, $h_i$,$v_{i,1}$,and $v_{i,2}$.
	\end{algorithmic}
\end{algorithm}

\begin{algorithm}[t]
	\caption{Updating $\T_i$}
	\label{alg:update_Ti}
	\begin{algorithmic}[1]
		\Statex {\bfseries Input:}\begin{tabular}[t]{ll}
            $\T_i$ & - a tree graph\\
            $A_{i+1}$  & - the active set\\
            $h_i$ & - the node last added to $G_{i+1}$\\
            $v_{i,1},v_{i,2}$ & - nodes that where removed from the active set in the last update
		\end{tabular}
		\Statex {\bfseries Output:}\begin{tabular}[t]{ll}
				$\T_{i+1}$ & - a tree with weights computed such\\
				           &~ that $G_{i+1}$ is a multiplicative subgraph of $\T_{i+1}$
		\end{tabular}
	\State Set $\T_{i+1}(h_i,v_{i,1}) =d \T_{i}(h_i,v_{i,1})$ and $\T_{i+1}(h_i,v_{i,2}) =d\T_{i}(h_i,v_{i,2})$
	\State For node $x \notin \{v_{i,1},v_{i,2}\}$ adjacent to $h_i$, set
    \[
    \T_{i+1}(h_i,x) = \frac{d\T_{i}(h_i,x)}{\sqrt{1-\alpha_{\T_{i}}(x,h_i)^2}},
    \]
    where $d$ is given by Eq.  \eqref{eq:d_def}.
    \State For two adjacent nodes $x,y \in \T$ where  $y$ is a node in the active set $A_{i+1}$ and $x$ is other path between $y$ and $h_i$, set
    \[
    \T_{i+1}(x,y) = \T_{i}(x,y) \sqrt{1-\alpha_{\T_{i}}(x,h_i)^2}  
    \]
    \State For two adjacent nodes $x,y \in \T$ that are not in the active set. If $\T_i(x,y)$ is on the path between a node in the active set and $h_i$, where $x$ is closer to $h_i$, set
    \[
    \T_{i+1}(x,y) = \T_{i}(x,y)\frac{\sqrt{1-\alpha_{\T_{i}}(x,h_i)^2}}{\sqrt{1-\alpha_{\T_{i}}(y,h_i)^2}}
    \]
	\State \Return $\T_{i+1}$
	\end{algorithmic}
\end{algorithm}

\begin{proof}[Proof of Lemma \ref{lem:graph_construction}]

We initialize the updating process with a tree $\T$ and its similarity matrix $G=G_0$. By definition, $G_0$ is a multiplicative subgraph of $\T$, and therefore satisfies the condition for Lemma \ref{lem:update_G}.
The lemma guarantees that after the first update, we obtain a graph $G_1$ with a Laplacian that satisfies,
\[
L_{G_{0}} = L_{G_{1}/h_0},
\]
where $h_0$ is the node added to $G_0$ at the first iteration. Lemma \ref{lem:update_T} guarantees that $G_1$ is a multiplicative subgraph of $\T_1$. Thus, we can re-apply Algorithm \ref{alg:update_Gi} with the pair $G_1,\T_1$. 
Thus, at each iteration $i$, we obtain a graph $G_{i+1}$ that satisfies,
\begin{equation}\label{schur_complement_i}
 L_{G_{i}} = L_{G_{i+1}/h_i}.
\end{equation}
Repeating the updating process for all $l$ non-terminal nodes of $\T$ yields the graph $G_l$, which by construction has the same topology as $\T$. In addition, due to the transitivity of the Schur's complement operation, Eq. \eqref{schur_complement_i} implies that
\[
 L_{\T_l/R}= L_{G_l/R} = L_{G_l/\{h_0,\ldots h_{l-1}\}} = L_{G_{l-1}/\{h_0,\ldots h_{l-2}\}} = \ldots = L_{G_1/h_0} = L_{ G_0} = L_G.
\]
Thus, $\T_l$ is a tree with the same topology as $\T$, but with different weights such that $L_{\T_l/R}=L_G$, which proves the lemma. 
\end{proof}


\begin{figure}[t]
  \centering
  
  \begin{subfigure}[b]{0.45\textwidth}
        \begin{tikzpicture}[-latex ,auto ,node distance =4 cm and 5cm ,on grid ,
            semithick ,scale=0.90,
            state/.style ={ circle ,top color =white , bottom color = blue!20 ,
            	draw,blue , text=blue , minimum width = 0.75 cm, inner sep=2pt, outer sep=0pt},
            	every edge quotes/.style = {auto=left, sloped,  inner sep=3pt}]	
            \node[state] (x1b) at (0,0) {$x_1$};
            \node[state] (x2b) at (0,3) {$x_2$};
            \node[state] (x3b) at (6,0) {$x_3$};
            \node[state] (x4b) at (6,3) {$x_4$};
            
        	\node[state, opacity =0.2] (hA) at (2,1.5) {$h_0$};
    	    \node[state, opacity =0.2] (hB) at (4,1.5) {$h_1$};
            
            \path[-] (x1b) edge ["$1/4$"] (x2b);	
            \path[-] (x1b) edge ["$1/8$"] (x3b);
            \path[-] (x1b) edge [pos=.7, "$1/8$"] (x4b);
            \path[-] (x2b) edge [pos=.3,"$1/8$"] (x3b);
            \path[-] (x2b) edge ["$1/8$"] (x4b);
            \path[-] (x3b) edge ["$1/4$"] (x4b);        
	    \end{tikzpicture}
        \caption{A graph $G_0$ that is a multiplicative sub-graph with respect to $\T$  and the the terminal nodes of $\T$.}
        \label{fig:proof_a}
    \end{subfigure}
    ~~~~~
  \begin{subfigure}[b]{0.45\textwidth}
    \begin{tikzpicture}[-latex ,auto ,node distance =4 cm and 5cm ,on grid ,
	semithick ,scale=0.90,
	state/.style ={ circle ,top color =white , bottom color = blue!20 ,
		draw,blue , text=blue , minimum width = 0.75 cm, inner sep=2pt, outer sep=0pt},
		every edge quotes/.style = {auto=left, sloped,  inner sep=3pt}]	
    	
    	
    	\node[state] (hA) at (2,1.5) {$h_0$};
    	\node[state] (hB) at (4,1.5) {$h_1$};
    	
    	\node[state] (x1) at (0,0) {$x_1$};
    	\node[state] (x2) at (0,3) {$x_2$};
    	\node[state] (x3) at (6,0) {$x_3$};
    	\node[state] (x4) at (6,3) {$x_4$};
    	
    	\path[-] (hA) edge ["$1/2$"] (x1);
    	\path[-] (hA) edge ["$1/2$"] (x2);
    	\path[-] (hB) edge ["$1/2$"] (x3);
    	\path[-] (hB) edge ["$1/2$"] (x4);
    	\path[-] (hA) edge ["$1/2$"] (hB);
        \end{tikzpicture}   
    \caption{A tree $\T_0$ with $4$ terminal nodes and $2$ internal nodes. The weight over all edges is equal to $1/2$.}
    \label{fig:proof_b}
    \end{subfigure}

    \vspace{1em}
  \begin{subfigure}[b]{0.45\textwidth}
        \begin{tikzpicture}[-latex ,auto ,node distance =4 cm and 5cm ,on grid ,
            semithick ,scale=0.90,
            state/.style ={ circle ,top color =white , bottom color = blue!20 ,
            	draw,blue , text=blue , minimum width = 0.75 cm, inner sep=2pt, outer sep=0pt},
            	every edge quotes/.style = {auto=left, sloped,  inner sep=3pt}]	
            \node[state] (x1) at (0,0) {$x_1$};
            \node[state] (x2) at (0,3) {$x_2$};
            \node[state] (x3) at (6,0) {$x_3$};
            \node[state] (x4) at (6,3) {$x_4$};
            \node[state] (h1) at (2,1.5) {$h_0$};
            \node[state, opacity =0.2] (hB) at (4,1.5) {$h_1$};
            
            \path[-] (x1) edge ["$3/4$"] (h1);	
            \path[-] (x2) edge [pos=.3, "$3/4$"] (h1);
            \path[-] (h1) edge [pos=.7,"$ 3/8$"] (x3);
            \path[-] (h1) edge ["$3/8$"] (x4);
            \path[-] (x3) edge ["$3/16$"] (x4);        
	    \end{tikzpicture}
        \caption{A graph $G_1$, created by adding the node $h_0$ to $G_0$. The weights of the graph are set such that $L_{G_0} = L_{G_1/h_0} $.}
        \label{fig:proof_c}
    \end{subfigure}
    ~~~~~
  \begin{subfigure}[b]{0.45\textwidth}
    \begin{tikzpicture}[-latex ,auto ,node distance =4 cm and 5cm ,on grid ,
	semithick ,scale=0.90,
	state/.style ={ circle ,top color =white , bottom color = blue!20 ,
		draw,blue , text=blue , minimum width = 0.75 cm, inner sep=2pt, outer sep=0pt},
		every edge quotes/.style = {auto=left, sloped,  inner sep=3pt}]	
    	
    	
    	\node[state] (hA) at (2,1.5) {$h_0$};
    	\node[state] (hB) at (4,1.5) {$h_1$};
    	
    	\node[state] (x1) at (0,0) {$x_1$};
    	\node[state] (x2) at (0,3) {$x_2$};
    	\node[state] (x3) at (6,0) {$x_3$};
    	\node[state] (x4) at (6,3) {$x_4$};
    	
    	\path[-] (hA) edge ["$3/4$"] (x1);
    	\path[-] (hA) edge ["$3/4$"] (x2);
    	\path[-] (hB) edge ["$\sqrt{3}/4$"] (x3);
    	\path[-] (hB) edge ["$\sqrt{3}/4$"] (x4);
    	\path[-] (hA) edge ["$\sqrt{3}/2$"] (hB);
        \end{tikzpicture}   
    \caption{A tree $\T_1$ with weights set such that $G_1$ is a multiplicative subgraph of $\T_1$ with respect to $\{x_1,x_2,x_3,x_4,h_1\}$}
    \label{fig:proof_d}
    \end{subfigure}
    
    \vspace{1em}
  \begin{subfigure}[b]{0.45\textwidth}
    \begin{tikzpicture}[-latex ,auto ,node distance =4 cm and 5cm ,on grid ,
	semithick ,scale=0.90,
	state/.style ={ circle ,top color =white , bottom color = blue!20 ,
		draw,blue , text=blue , minimum width = 0.75 cm, inner sep=2pt, outer sep=0pt},
		every edge quotes/.style = {auto=left, sloped,  inner sep=3pt}]	
    	
    	
    	\node[state] (hA) at (2,1.5) {$h_0$};
    	\node[state] (hB) at (4,1.5) {$h_1$};
    	
    	\node[state] (x1) at (0,0) {$x_1$};
    	\node[state] (x2) at (0,3) {$x_2$};
    	\node[state] (x3) at (6,0) {$x_3$};
    	\node[state] (x4) at (6,3) {$x_4$};
    	
    	\path[-] (hA) edge ["$3/4$"] (x1);
    	\path[-] (hA) edge ["$3/4$"] (x2);
    	\path[-] (hB) edge ["$3/4$"] (x3);
    	\path[-] (hB) edge ["$3/4$"] (x4);
    	\path[-] (hA) edge ["$3/2$"] (hB);
        \end{tikzpicture}   
    \caption{A tree $\T_2$ with weights set such that $L_{\T_2/h_2} = L_{G_1}$.}
    \label{fig:proof_e}
    \end{subfigure}
   
	\caption{Constructing a tree $\T_2$ such that the Schur complement of its Laplacian with respect to the internal nodes is equal to $L_G$.}
	\label{fig:proof}
\end{figure}
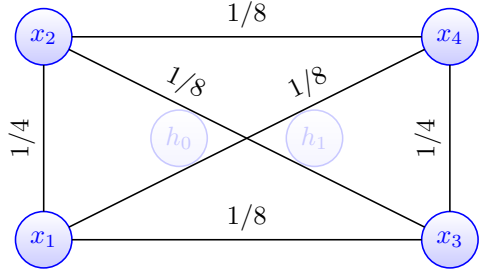
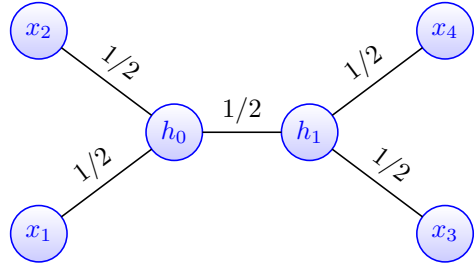
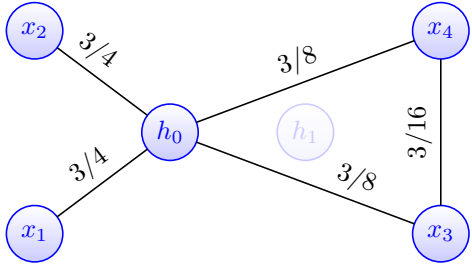
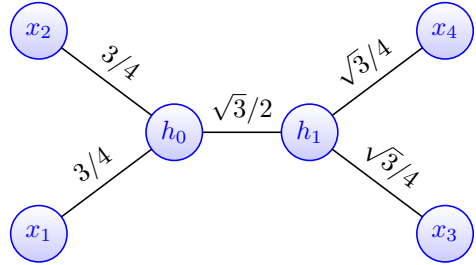
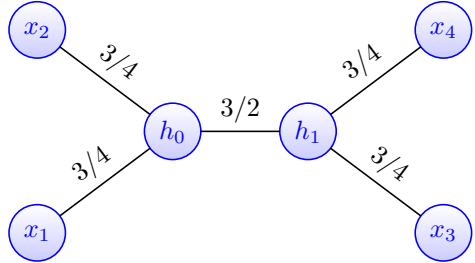

\begin{proof}[Proof of Lemma \ref{lem:update_G}]
Assume, for simplicity of notation, that the $j$th row/column of $L_{G}$ is the row/column that correspond to $h_j$ for any $j$ such that
\[
L_G(i,j) = -G(h_i,h_j) \qquad \forall h_i,h_j \in G \mbox{ with } i\neq j .
\]
We denote by $m_j$ the $j$-th column of $L_{G_{i+1}}$ after removing the $i$-th entry, and by $\bm 1$ the all one vector.
Since $h_i$ is a single node, the Schur complement $L_{G_{i+1}/h_i}$ defined in \eqref{def:schur} can be simplified to 
 
\begin{equation}\label{eq:Schur_L_i}
L_{G_{i+1}/h_i}(j,k) 
= L_{G_{i+1}}(j,k) - \frac{ (\bm 1^T m_j)(\bm 1^T m_k) }{\sum_{l\neq i} \bm 1^T m_l }.
\end{equation}
For a Laplacian matrix, the sum over any row is equal to zero. Since $m_j$ is equal to the row of $L_{G_{i+1}}$ after removing the $i$-th entry we have that $\bm 1^T m_j = -L_{G_{i+1}}(i,j)$. 
We rewrite Eq. \eqref{eq:Schur_L_i} as,
\begin{equation}\label{eq:schur_single_node}
L_{G_{i+1}/h_i}(j,k) = L_{G_{i+1}}(j,k) +  \frac{L_{G_{i+1}}(j,i)L_{G_{i+1}}(k,i)}{\sum_{l\neq i} L_{G_{i+1}}(i,l)  }.
\end{equation}
The only edges changed between $G_i$ and $G_{i+1}$  are edges between nodes in the active set $A_i$. Thus, if either $h_k$ or $h_j$ are not in the active set then
$L_{G_{i+1}}(j,k)=L_{G_{i}}(j,k)$. In addition, by step 4 of Algorithm \ref{alg:update_Gi}, the added node $h_i$ is only connected to nodes in the active set $A_{i}$.
Thus, if either node $h_k$ or $h_j$ are not part of $A_{i}$ we have
$L_{G_{i+1}}(j,i)L_{G_{i+1}}(k,i)=0$.
It follows that in this case
$L_{G_{i+1}/h_i}(j,k) = L_{G_{i}}(j,k)$ as required.
 
 Next, we assume that both $h_j$ and $h_k$ are part of the active set $A_{i}$. Eqs. \eqref{eq:G_ik_update} and \eqref{eq:G_jk_update} give
\begin{equation}\label{eq:j_k_active}
L_{G_{i+1}}(j,k) = L_{G_i}(j,k)+\alpha_{\T_{i}}(h_i,h_j)\alpha_{\T_{i}}(h_i,h_k), \qquad L_{G_{i+1}}(k,i) = -d\alpha_{\T_i}(h_i,h_k).
\end{equation}
By step 4 of Algorithm \ref{alg:update_Gi}, $h_i$ is only connected to nodes in the active set $A_i$. Inserting Eq. \eqref{eq:j_k_active} to Eq. \eqref{eq:schur_single_node} gives
\begin{equation}\label{eq:jk_in_active_set}
 L_{G_{i+1}/h_i}(j,k) = L_{G_i}(j,k)+\alpha_{\T_{i}}(h_i,h_j)\alpha_{\T_{i}}(h_i,h_k) -  \frac{d^2 \alpha_{\T_{i}}(h_i,h_j)\alpha_{\T_{i}}(h_i,h_k)}{\sum_{x \in A_{i}} d\alpha_{\T_{i}}(h_i,x)  },
 \end{equation}
 The denominator in the last term on the r.h.s of Eq. \eqref{eq:jk_in_active_set} is 
 equal to $d^2$ and hence,
 \[
  L_{G_{i+1}/h_i}(j,k) = L_{G_i}(j,k)+\alpha_{\T_{i}}(h_i,h_j)\alpha_{\T_{i}}(h_i,h_k) - d^2 \alpha_{\T_{i}}(h_i,h_j)\alpha_{\T_{i}}(h_i,h_k) \frac{1}{d^2} = L_{G_i}(j,k).
 \]
 We conclude that for any element $j,k$ we have 
 $L_{G_{i+1}/h_i}(j,k) = L_{G_i}(j,k)$.
 \end{proof}


\begin{proof}[Proof of Lemma \ref{lem:update_T}]
Here, our task is to prove that the weight assigned to any edge  $G_{i+1}(x,y)$ is equal to the multiplicative path $\alpha_{\T_{i+1}}(x,y)$. We address three cases: (i) the node $x$ is in the active set $A_{i+1}$ and $y$ is equal to the node $h_i$ added to the graph in iteration $i$. (ii) Both $x$ and $y$ are in $A_{i+1}$, and are not equal to $h_i$, and (iii) $x=h_i$ and $y$ is either $v_{i,1}$ or $v_{i,2}$.
For a pair of nodes $(x,y)$ that is not in $(i)-(iii)$ the edges in $G_i$ and $\T_i$ were not changed in the updating steps.

For case (i) we assume that $x$ is in $A_{i+1}$ and $y=h_i$ and hence by Eq. \eqref{eq:G_ik_update} in Algorithm \ref{alg:update_Gi}
\[
G_{i+1}(x,h_i) = d\alpha_{\T_i}(x,h_i).
\]
We denote the nodes on the path between $x$ and $h_i$ in $\T_i$ by
\[
path(h_i,x) = \{z_1 = h_i, z_2, \ldots, z_K = x\}.
\]
The edge between $h_i$ and $z_2$ is updated according to step 2 of Algorithm \ref{alg:update_Ti}. The edge between $z_{K-1}$ and $z_K$ is updated by step 3. The remaining edges are updated by step 4. 
The multiplicative weight $\alpha_{\T_{i+1}}(x,h_i)$ in the updated tree $\T_{i+1}$ according to Algorithm \ref{alg:update_Ti} is equal to
\begin{align}
\alpha_{\T_{i+1}}(x,h_i) &= 
\prod_{j=1}^{K-1} \T_{i+1}(z_j,z_{j+1}) \notag  \\&=
\frac{d\T_i(h_i,z_2)}{\sqrt{1-\alpha_{\T_{i}}(z_2,h_i)^2}} \times
\notag \\
&~~~~\prod_{j=2}^{K-2} \T_i(z_j,z_{j+1}) \frac{\sqrt{1-\alpha_{\T_{i}}(z_j,h_i)^2}}{\sqrt{1-\alpha_{\T_{i}}(z_{j+1},h_{i})^2}}    \sqrt{1-\alpha_{\T_{i}}(z_{K-1},h_i)^2}\T_i(z_{K-1},x)  \notag \\    
&=d \prod_{j=1}^{K-1} \T_i(z_j,z_{j+1})= d\alpha_{\T_i}(x,h_i).
\end{align}
Thus, the weight $G_{i+1}(x,h_i)=\alpha_{\T_{i+1}}(x,h_i)$ for any $x$ in the active set. 

In case (ii) $x,y$ are two nodes in the active set not equal to $h_i$. According to Eq. \eqref{eq:G_jk_update}  in Algorithm \ref{alg:update_Gi}
\[
G_{i+1}(x,y) = G_{i}(x,y) -  \alpha_{\T_i}(h_i,x)
    \alpha_{\T_i}(h_i,y).
\]
Denote by $u$ the unique node that connects between the nodes $x,y$ and $h_i$. Then,
\begin{equation}\label{eq:path_x_y}
\alpha_{\T_i}(h_i,x)
\alpha_{\T_i}(h_i,y) = \alpha_{\T_i}(h_i,u)^2   \alpha_{\T_i}(u,x)\alpha_{\T_i}(u,y) =\alpha_{\T_i}(h_i,u)^2   \alpha_{\T_i}(x,y).
\end{equation}
By assumption on the input to Alg. \ref{alg:update_Gi} of the previous iteration, the graph $G_i$ is a multiplicative subgraph of $\T_i$ and hence $G_i(x,y) = \alpha_{\T_i}(x,y)$. Thus, Eqs. \eqref{eq:G_jk_update} and \eqref{eq:path_x_y} imply
\begin{equation*}
    G_{i+1}(x,y) = G_i(x,y) - \alpha_{\T_i}(h_i,u)^2   \alpha_{\T_i}(x,y) 
    = G_i(x,y) - \alpha_{\T_i}(h_i,u)^2   G_i(x,y) = G_i(x,y)(1-\alpha_{\T_i}(h_i,u)^2).
\end{equation*}
Next, we show that $G_{i+1}(x,y)$ is equal to the multiplicative weight $\alpha_{\T_{i+1}}(x,y)$.
Let $z_1=x,\ldots,z_\kappa=u ,\ldots,z_K=y$ be the nodes on the path between $x$ and $y$.
By steps 2 and 3 in Algorithm \ref{alg:update_Ti}, the multiplicative weight $\alpha_{\T_{i+1}}(x,y)$ is equal to 
\begin{align}
    \alpha_{\T_{i+1}}(x,y) &=\prod_{j=1}^{\kappa-1} \T_{i+1}(z_j,z_{j+1}) \prod_{j=\kappa}^{K-1} \T_{i+1}(z_j,z_{j+1})  \notag \\
    &=
    \T_{i}(x,z_2)\sqrt{1-\alpha_{\T_{i}}(z_2,h_i)^2}
    \prod_{j=2}^{\kappa-1}
    \T_{i}(z_j,z_{z+1})
    \frac{\sqrt{1-\alpha_{\T_{i}}(z_{j+1},h_i)^2}}{\sqrt{1-\alpha_{\T_{i}}(z_j,h_i)^2}} \notag \\
    & \times 
    \T_{i}(y,z_{K-1})\sqrt{1-\alpha_{\T_{i}}(z_{K-1},h_i)^2}
    \prod_{j=\kappa}^{K-2}
    \T_{i}(z_j,z_{z+1})
    \frac{\sqrt{1-\alpha_{\T_{i}}(z_{j},h_i)^2}}{\sqrt{1-\alpha_{\T_{i}}(z_{j+1},h_i)^2}}.
\end{align}
Note that 
\begin{multline*}
\T_{i}(x,z_2)\sqrt{1-\alpha_{\T_{i}}(z_2,h_i)^2}
    \prod_{j=2}^{\kappa-1}
    \T_{i}(z_j,z_{z+1})
    \frac{\sqrt{1-\alpha_{\T_{i}}(z_{j+1},h_i)^2}}{\sqrt{1-\alpha_{\T_{i}}(z_j,h_i)^2}} = \sqrt{1-\alpha_{\T_{i}}(z_{\kappa},h_i)^2} \prod_{j=1}^{\kappa-1} \T_{i}(z_j,z_{z+1})     
\end{multline*}
and 
\begin{multline*}
\T_{i}(y,z_{K-1})\sqrt{1-\alpha_{\T_{i}}(z_{K-1},h_i)^2} \prod_{j=\kappa}^{K-2}
    \T_{i}(z_j,z_{z+1})
    \frac{\sqrt{1-\alpha_{\T_{i}}(z_{j},h_i)^2}}{\sqrt{1-\alpha_{\T_{i}}(z_{j+1},h_i)^2}}
    \\= \sqrt{1-\alpha_{\T_{i}}(z_{\kappa},h_i)^2}\prod_{j=\kappa}^{K-1} \T_{i}(z_j,z_{z+1}) 
\end{multline*}
and thus, 
\[
\alpha_{\T_{i+1}}(x,y) = \prod_{j=1}^{K-1} \T_{i}(z_j,z_{z+1}) (1-\alpha_{\T_{i}}(z_{\kappa},h_i)^2) = G_{i+1}(x,y).
\]

Lastly, we consider case (iii),  where 
$x=h_i$ and $y=v_{i,1}$ or $y=v_{i,2}$. Recall that $v_{i,1}, v_{i,2}$ are adjacent to $h_i$ in $\T$ and were removed from the active set. By step~4 of Algorithm \ref{alg:update_Gi} and step~1 of Algorithm \ref{alg:update_Ti} the edge $G_{i}(x,y)$ and its corresponding edge $\T_i(x,y)$ have both been updated such that $\T_{i+1}(x,y) = G_{i+1}(x,y) = d \T_i(x,y)$.
\end{proof}

\section{Auxiliary Lemmas for Section \ref{sec:finite_sample}}\label{appendix:proof_of_incorrect_merge_lemma}

\begin{proof}[Proof of Lemma \ref{lem:symmetric_spectrum}]
We begin by characterizing all the eigenvectors of $L\in \RR^{m\times m}$.
For any non-terminal node $h$ in the binary symmetric tree $\T$, we denote the set of descendent terminal nodes to the ``left" of $h$ by $A$, the set of descendant terminal nodes to the ``right" of $h$ by $B$, and the rest of the terminal nodes by $C$.
Let $v_h \in \RR^m$ be a vector with 
\[
(v_h)_i = \begin{cases}
1 & i \in A\\
-1 & i \in B\\
0 & i \in C.
\end{cases}
\]
We show that for any choice of non-terminal node $h$, $v_h$ is an eigenvector of $L$. Since there are $m-1$ non-terminal nodes, this set of eigenvectors, together with the vector of all-ones, forms the full set of all eigenvectors of $L$. 


First, we show that $v_h$ is an eigenvector of the similarity matrix $S$, and compute the corresponding eigenvalue.
For $i \in A$, 
\[
(Sv_h)_i = \sum_{j \in A} S(i,j) - \sum_{k \in B} S(i,k).
\]
Due to the symmetry of the tree $\T$, every terminal node has a similarity of $\delta^2$ to one other terminal node, $\delta^4$ to two other terminal nodes, etc. Thus,
\[
\sum_{j \in A} S(i,j) = 1+\delta^2+2\delta^4 +\ldots,+ \ldots,|A| \delta^{2\log_2 |A|}= \delta^2 \left(\frac{1-(2\delta^2)^{\log_2 |A|}}{1-2\delta^2}\right) + 1.
\]
The similarity between a node $i \in A$ and all nodes $k \in B$ is equal to $\delta^{2(\log|A|+1)}$. Thus,
\begin{align}\label{eq:eig_S}
\sum_{j \in A} S(i,j) - \sum_{k \in B} S(i,k) &= \delta^2 \left(\frac{1-(2\delta^2)^{\log_2 |A|}}{1-2\delta^2}\right) + 1 - |A|\delta^{2(\log|A|+1)} 
\notag\\ &= 
1 + \delta^2\left(\frac{1-(2\delta^2)^{\log |A|}(2-2\delta^2)}{1-2\delta^2}\right).
\end{align}
The same result with a negative sign holds for $i \in B$. If $i \in C$ 
then by symmetry $(Sv_h)_i=0$. Thus $v_h$ is an eigenvector of $S$ with eigenvalue equal to the right side of \eqref{eq:eig_S}.
The sum of every row in $S$ is equal to,
\begin{equation}\label{eq:D_entries}
d_i = \sum_j S(i,j) = 1 + \delta^2 + 2\delta^4 + \ldots + 2^{\log_2 m} \delta^{2\log_2 m} = \delta^2 \left(\frac{1-(2\delta^2)^{\log_2 m}}{1-2\delta^2}\right) + 1.    
\end{equation}
Let $D$ be the scalar matrix with diagonal elements equal to Eq. \eqref{eq:D_entries}.
Combining Eq. \eqref{eq:D_entries} and Eq. \eqref{eq:eig_S}, we get that $v_h$ is an eigenvector of $L = D-S$ with eigenvalue:
\begin{equation}\label{eq:symmetric_eigenvalue}
\lambda(h) = \delta^2 \left(\frac{(2\delta^2)^{\log_2 |A|} (2-2\delta^2) - (2\delta^2)^{\log_2 m}}{1-2\delta^2}\right).
\end{equation}
For any Laplacian matrix $0$ is an eigenvalue that correspond to the vector of all-ones.
Since the eigenvalue $e(h)$
decreases as $|A|$ grows, the two smallest non-zero eigenvalues correspond to  $|A|=m/2$ and $|A|=m/4$. 
  The three smallest eigenvalues are thus equal to,
\[
\lambda_1 = 0, 
\qquad 
\lambda_2 = m^{2 \log_2(\delta) + 1}, \qquad
\lambda_3 = m^{2 \log_2(\delta) + 1} \left(\frac12 + \frac1{2\delta^2} \right).
\]
\end{proof}

In the following proof, we use similar notations as in the proof of Lemma \ref{thm:merge_incorrect_lower_bound}.

\begin{proof}[Proof of Lemma \ref{lem:merge_lower_bound_single_pair} ]
For simplicity, let $x = \|S(A_i,B)\|^2_F$ and $y = \|S(A_{i+k},B)\|^2_F$. To compute the numerator of Eq.  \eqref{eq:merge_lemma_two_terms}, we set the partial derivative w.r.t. $\beta$ to 0, which gives 
\[
    \beta^\ast = \argmin_{\beta} \Big( (1-\beta R_i)^2 x +(1-\beta R_{i+k})^2 y\Big) = \frac{R_ix + R_{i+k}y}{R_i^2x + R_{i+k}^2y}.
\]
Plugging $\beta^\ast$ back into the numerator of Eq.  \eqref{eq:merge_lemma_two_terms} gives
\[
    \min_\beta \Big( (1-\beta R_i)^2 x + (1-\beta R_{i+k})^2 y\Big) = \frac{xy(R_{i}-R_{i+k})^2}{R_i^2x+R_{i+k}^2y}.
\]
Observe that $R_{i+k} = R_i S(h_i,h_{i+k})^2$. Thus, the above expression further simplifies to
\[
\frac{xy(R_{i}-R_{i+k})^2}{R_i^2x+R_{i+k}^2y} = 
\frac{xy R_i^2(1-S(h_i,h_{i+k})^2)^2}{R_i^2(x+S(h_i,h_{i+k})^4y)} = \frac{xy (1-S(h_i,h_{i+k})^2)^2}{x+S(h_i,h_{i+k})^4y}.
\]
Since $\|S(A_{i},B)\|^2_F + \|S(A_{i+k},B)\|^2_F = x+y$, the LHS of \eqref{eq:merge_lemma_two_terms} is equal to
\begin{equation}
\label{eq:merge_equality}
 \frac{xy (1-S(h_i,h_{i+k})^2)^2}{(x+y)(x+S(h_i,h_{i+k})^4y)}.
\end{equation}
Recall from Eqs. $\eqref{eq:assumption_1}$ and $\eqref{eq:adjacent_similarity}$ that for any $1 \leq i \leq N-1$, $S(h_i,h_{i+k}) < \xi < 1$. It follows that
\begin{equation}\label{eq:merge_max_lower_bound}
    \frac{xy (1-S(h_i,h_{i+k})^2)^2}{(x+y)(x+S(h_i,h_{i+k})^4y)} \geq 
    \frac{xy (1-\xi^2)^2}{(x+y)^2} \geq 
    \frac{xy (1-\xi^2)^2}{(2 \max (x,y))^2} 
    = \frac{(1-\xi^2)^2\min(x,y)}{4\max(x,y)}.
\end{equation}
Next, we simplify the term $\frac{\min(x,y)}{\max(x,y)}$ in Eq.  \eqref{eq:merge_max_lower_bound}. Note that $h_{i+k}$ separates $A_i$ and $A_{i+k}$ from $B$, see ilustration in Figure \ref{fig:incorrect_placeholder_edge}. Thus, we can rewrite $\min(x,y)$ as
\begin{align*}
    \min(x,y) &= \min(\|S(A_i,B)\|^2_F, \|S(A_{i+k},B)\|^2_F)\\
    &= \min(\|S(A_i, h_{i+k}) S(h_{i+k}, B)\|^2_F, \|S(A_{i+k}, h_{i+k}) S(h_{i+k}, B)\|^2_F)\\
    &= \min(\|S(A_i, h_{i+k})\|^2 \|S(h_{i+k}, B)\|^2, \|S(A_{i+k}, h_{i+k})\|^2 \|S(h_{i+k}, B)\|_F^2)\\
    &= \min(\|S(A_i, h_{i+k})\|^2, \|S(A_{i+k}, h_{i+k})\|^2) \cdot \|S(h_{i+k}, B)\|^2.
\end{align*}
Similarly, $
\max(x,y) = \max(\|S(A_i, h_{i+k})\|^2, \|S(A_{i+k}, h_{i+k})\|^2) \cdot \|S(h_{i+k}, B)\|^2$.
Thus,
\[
\frac{\min(x,y)}{\max(x,y)} = \frac{\min(\|S(A_i, h_{i+k})\|^2, \|S(A_{i+k}, h_{i+k})\|^2)}{\max(\|S(A_i, h_{i+k})\|^2, \|S(A_{k+1}, h_{i+k})\|^2)}.
\]
Next, we provide lower and upper bounds on the terms $\|S(A_i, h_{i+k})\|^2$ and $\|S(A_{i+1}, h_{i+k})\|^2$.
By Eq. \eqref{eq:assumption_1}, the similarity between the nodes in $A_i,A_{i+k}$ and $h_{i+k}$ is bounded by $\xi$. It follows that
\begin{equation}\label{eq:bound_denominator}
    \max(\|S(A_i,h_{i+k})\|^2,\|S(A_{i+k},h_{i+k})\|^2) \leq \max(|A_i|, |A_{i+k}|) \xi^2 \leq m \xi^2.
\end{equation}
For a lower bound, we apply  \cite[Lemma 4.5]{jaffe2020spectral}. Given the terminal nodes of a clan $A$, and the root of a clan $h$, the lemma bounds the norm of $S(A,h)$ by,
\begin{equation*}
    \|S(A,h)\|_F^2 \geq 
    \begin{cases}
    (2\delta^2)^{\log |A|} & \delta^2 \leq 0.5 \\
    2\delta^2 & \delta^2 > 0.5
    \end{cases}
    \geq
    \begin{cases}
    (2\delta^2)^{\log m} & \delta^2 \leq 0.5 \\
    2\delta^2 & \delta^2 > 0.5.
    \end{cases}
\end{equation*}
There are $k+1$ edges between the root of $A_i$ and $h_{i+k}$, and one edge between the root of $A_{i+k}$ and $h_{i+k}$. Thus,
\begin{equation}\label{eq:bound_numerator}
\begin{aligned}
    \min(\|S(A_i,h_{i+k})\|^2,\|S(A_{i+k},h_{i+k})\|^2) &\geq 
    \begin{cases}
    (2\delta^2)^{\log m}\delta^{2(k+1)} & \delta^2 \leq 0.5 \\
    2\delta^{(2k+2)} & \delta^2 > 0.5.
    \end{cases}
\end{aligned}
\end{equation}
Plugging Eqs. \eqref{eq:bound_denominator}, \eqref{eq:bound_numerator} into \eqref{eq:merge_max_lower_bound} concludes the proof.   
\end{proof}

\begin{proof}[Proof of Lemma \ref{lem:ratio_sum_inequality} ]
The lemma is a small variation over the known lower bound for ratio of sums, $\frac{\sum_i a_i}{\sum_i b_i}\geq \min_i \frac{a_i}{b_i}$. 
For an even number of elements, we can merge non overlapping pairs of consecutive elements such that $\tilde a_i = a_{2i}+a_{2i+1}$ and 
$\tilde b_i = b_{2i}+b_{2i+1}$. Applying the standard bound for ratio of sums for $\tilde a_i$ and $\tilde b_i$ gives,
\[
\frac{\sum_i \tilde a_i}{\sum_i \tilde b_i}\geq \min_i \frac{\tilde a_i}{\tilde b_i}=\min_i \frac{a_{2i}+ a_{2i+1}}{b_{2i}+b_{2i+1}}\geq \min_{i\neq j;|i-j|\leq 2}\frac{a_i+a_j}{b_i+b_j}.
\]
For an odd number of elements, we can merge the first three elements $i=0,1,2$. The rest will be merged into consecutive pairs. 
\[
\frac{\sum_i a_i}{\sum_i b_i} \geq \min \left\{\frac{a_0+a_1+a_2}{b_0+b_1+b_2},\frac{\sum_{i\geq 2}(a_{2i}+a_{2i+1})}{\sum_{i\geq 2}(b_{2i}+b_{2i+1})}\right\} 
\]
The ratio for elements $i=0,1,2$ can be bounded by the minimum ratio over all pairs $i,j \in \{0,1,2\}$. Thus,
\[
\frac{\sum_i a_i}{\sum_i b_i} \geq \min \left\{\min_{i \neq j \in \{0,1,2\}} \frac{a_i+a_j}{b_i+b_j},\frac{\sum_{i\geq 2}(a_{2i}+a_{2i+1})}{\sum_{i\geq 2}(b_{2i}+b_{2i+1})}\right\} \geq \min_{i\neq j;|i-j|\leq 2}\frac{a_i+a_j}{b_i+b_j}
\]

\end{proof}

\begin{lemma}\label{lem:aux_ratio_bound}
Let $X,X'\in \RR^{m\times n}$
and let $y,y' >0$. Assume that
$\|X'\|_F \leq y'$, then
\begin{equation}
    \Big \|\frac{X}{y} - \frac{X'}{y'} \Big \|_F
    \leq\frac{1}{y}(\|X - \hat X\|_F+|y-\hat y|).
    \end{equation}
\end{lemma}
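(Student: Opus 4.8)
The statement mixes two notations: the hypothesis is phrased in terms of $X'$ and $y'$, while the conclusion is written with $\hat X$ and $\hat y$. I read these as the same objects, i.e. $\hat X = X'$ and $\hat y = y'$, so the claim is: if $y,y'>0$ and $\|X'\|_F \le y'$, then $\left\|\frac{X}{y}-\frac{X'}{y'}\right\|_F \le \frac1y\bigl(\|X-X'\|_F + |y-y'|\bigr)$. The plan is to reduce everything to the triangle inequality after splitting the difference $\frac{X}{y}-\frac{X'}{y'}$ into one piece governed by $\|X-X'\|_F$ and one piece governed by $|y-y'|$.

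First I would write the algebraic identity
\[
\frac{X}{y}-\frac{X'}{y'} \;=\; \frac{X-X'}{y} \;+\; X'\Bigl(\tfrac1y-\tfrac1{y'}\Bigr) \;=\; \frac{X-X'}{y} \;+\; \frac{X'(y'-y)}{y\,y'},
\]
which is valid because $y,y'>0$. Then, applying the triangle inequality for the Frobenius norm together with the homogeneity $\|cM\|_F = |c|\,\|M\|_F$ for scalars $c$, I get
\[
\Bigl\|\frac{X}{y}-\frac{X'}{y'}\Bigr\|_F \;\le\; \frac{\|X-X'\|_F}{y} \;+\; \frac{\|X'\|_F\,|y-y'|}{y\,y'}.
\]
Finally I would invoke the hypothesis $\|X'\|_F \le y'$ to conclude $\|X'\|_F/y' \le 1$, so the second term is at most $|y-y'|/y$; collecting the two terms over the common denominator $y$ yields the stated bound.

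The only point that needs care — what I would call the "main obstacle," although it is really just bookkeeping — is to use the hypothesis in the right place, namely to cancel $\|X'\|_F$ against the factor $y'$ in the denominator (not against $y$), and to keep track of $|y'-y| = |y-y'|$. There is no substantive analytic difficulty; the lemma is an elementary perturbation estimate for a normalized matrix.
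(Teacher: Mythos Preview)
Your proof is correct and follows exactly the same approach as the paper: the paper also writes the difference as $\frac{y'(X-X')}{yy'}+\frac{X'(y'-y)}{yy'}$, applies the triangle inequality, and then uses $\|X'\|_F\le y'$ to bound the second term. Your observation about the $X'/\hat X$ and $y'/\hat y$ notation clash is also accurate; the paper uses both interchangeably in the statement and proof.
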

\begin{proof}
By definition, 
\[
    \Big\|\frac{X}{y} - \frac{\hat X}{\hat y}\Big\|_F = \Big \| \frac{Xy' - X'y}{yy'}  \Big \|_F
    =\Big \|\frac{y'(X-X')}{y y'}  + \frac{X'(y'-y)}{yy'} \Big \|_F
    \]
    By the triangle inequality
    \[
\Big \|\frac{X}{y} - \frac{\hat X}{\hat y}\Big \|_F
\leq \frac1y \|X-X'\|_F + \frac{|y'-y|}y\cdot \frac{\|X'\|_F}{y'}
    \]
    Since $\|X'\|_F \leq y'$ the lemma follows.
\end{proof}


\begin{lemma}\label{lem:aux_matrix_diff_bound}
Let $X$ and $Y$ be two matrices and let $\hat X$ and $\hat Y$ be their corresponding noisy estimates. Then,
\[
\big| \|X-Y\|_F -\|\hat X-\hat Y\|_F \big| \leq \|X-\hat X\|_F + \|Y-\hat Y\|_F.
\]
\end{lemma}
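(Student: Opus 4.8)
\textbf{Proof proposal for Lemma \ref{lem:aux_matrix_diff_bound}.}
The plan is to reduce the statement to two standard facts about the Frobenius norm: the triangle inequality and its consequence, the reverse triangle inequality. First I would recall that the Frobenius norm is a genuine norm on the space of $m \times n$ matrices, so for any matrices $P$ and $Q$ of the same size one has $\|P + Q\|_F \le \|P\|_F + \|Q\|_F$, and hence the reverse triangle inequality $\big|\,\|P\|_F - \|Q\|_F\,\big| \le \|P - Q\|_F$.

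Next I would apply the reverse triangle inequality with the choice $P = X - Y$ and $Q = \hat X - \hat Y$. This immediately gives
\[
\big|\,\|X - Y\|_F - \|\hat X - \hat Y\|_F\,\big| \le \big\|(X - Y) - (\hat X - \hat Y)\big\|_F.
\]
Then I would rewrite the argument on the right-hand side as $(X - Y) - (\hat X - \hat Y) = (X - \hat X) - (Y - \hat Y)$ and apply the ordinary triangle inequality once more:
\[
\big\|(X - \hat X) - (Y - \hat Y)\big\|_F \le \|X - \hat X\|_F + \|Y - \hat Y\|_F.
\]
Chaining the two displayed inequalities yields the claim.

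There is essentially no obstacle here: the only thing to be slightly careful about is that $X, Y, \hat X, \hat Y$ all be matrices of the same dimensions so that the norms and differences are well defined, which is implicit in the statement (and holds in every place the lemma is invoked, where $\hat X, \hat Y$ are the empirical counterparts of $X, Y$). No properties of the estimates beyond being real matrices are used, so the argument is purely an instance of the (reverse) triangle inequality for a norm.
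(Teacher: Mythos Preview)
Your proof is correct and is essentially the same as the paper's: the paper also bounds $\big|\|X-Y\|_F-\|\hat X-\hat Y\|_F\big|$ by $\|(X-Y)-(\hat X-\hat Y)\|_F$ and then applies the triangle inequality, except that it writes out the reverse triangle inequality as two separate cases depending on which of $\|X-Y\|_F$ and $\|\hat X-\hat Y\|_F$ is larger. Your version is simply a more compact presentation of the identical argument.
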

\begin{proof}
Assume that $\|X-Y\|_F\geq \|\hat X-\hat Y\|_F$. In this case
\begin{multline*}
\big| \|X-Y\|_F -\|\hat X-\hat Y\|_F \big| 
=
\|X-Y\|_F -\|\hat X-\hat Y\|_F
\leq 
\|X-Y-\hat X+\hat Y\|_F
\leq
\|X-\hat X\|_F +\|Y-\hat Y\|_F.
\end{multline*}
Alternatively, if $\|X-Y\|_F \leq \|\hat X-\hat Y\|_F$ we have
\begin{multline*}
\big| \|X-Y\|_F -\|\hat X-\hat Y\|_F \big| 
=
 \|\hat X-\hat Y\|_F-\|X-Y\|_F
\leq 
\|\hat X-\hat Y-X+Y\|_F
\leq
\|\hat X-X\|_F +\|\hat Y-Y\|_F.
\end{multline*}
\end{proof}
\begin{lemma}\label{lem:aux_product_bound}
Let $X \in \RR^{n_1 \times n_2},Y\in \RR^{n_2 \times n_3},Z \in \RR^{n_3 \times n_4}$ be three matrices and let $\hat X,\hat Y, \hat Z$ be there corresponding estimates. Then
\[
\|XYZ - \hat X\hat Y\hat Z\|_F \leq \|X\|_F \|Y\|_F\|Z-\hat Z\|_F
+
\|\hat Z\|_F\|Y\|_F\|X-\hat X\|_F + \|\hat Z\|_F\|\hat X\|_F\|Y-\hat Y\|_F
\]
\end{lemma}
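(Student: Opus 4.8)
The plan is to prove the bound by a telescoping decomposition of $XYZ - \hat X\hat Y\hat Z$ into three terms, each isolating the perturbation of exactly one factor, and then to estimate each term via the triangle inequality together with submultiplicativity of the Frobenius norm, $\|AB\|_F \le \|A\|_F\|B\|_F$ (which itself follows from $\|AB\|_F \le \|A\|_{\mathrm{op}}\|B\|_F \le \|A\|_F\|B\|_F$).

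Concretely, I would first peel off the last factor,
\[
XYZ - \hat X\hat Y\hat Z = XY(Z-\hat Z) + (XY - \hat X\hat Y)\hat Z,
\]
and then split the remaining difference of products, keeping $Y$ exact on the $X$-term and $\hat X$ on the $Y$-term:
\[
XY - \hat X\hat Y = (X-\hat X)Y + \hat X(Y-\hat Y).
\]
Substituting yields the identity
\[
XYZ - \hat X\hat Y\hat Z = XY(Z-\hat Z) + (X-\hat X)Y\hat Z + \hat X(Y-\hat Y)\hat Z.
\]

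Applying the triangle inequality and then submultiplicativity to each of the three summands gives $\|XY(Z-\hat Z)\|_F \le \|X\|_F\|Y\|_F\|Z-\hat Z\|_F$, $\|(X-\hat X)Y\hat Z\|_F \le \|X-\hat X\|_F\|Y\|_F\|\hat Z\|_F$, and $\|\hat X(Y-\hat Y)\hat Z\|_F \le \|\hat X\|_F\|Y-\hat Y\|_F\|\hat Z\|_F$, whose sum is precisely the asserted bound. There is no substantive obstacle: the only care needed is to choose this particular telescoping — peeling off $Z$ first, then splitting $XY - \hat X\hat Y$ in the order above — so that the norm factors appearing in each product match the exact form stated in the lemma, rather than an equivalent rearrangement with, say, $\|\hat Y\|_F$ or $\|Z\|_F$ in place of $\|Y\|_F$ or $\|\hat Z\|_F$.
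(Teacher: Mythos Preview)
Your proof is correct and matches the paper's own argument essentially line for line: the paper also peels off $Z$ first via $XYZ - XY\hat Z + XY\hat Z - \hat X\hat Y\hat Z$, then splits $XY-\hat X\hat Y$ as $(X-\hat X)Y + \hat X(Y-\hat Y)$, and bounds each term by submultiplicativity. Your observation about choosing the telescoping order so the factors come out as $\|X\|_F,\|Y\|_F,\|\hat Z\|_F,\|\hat X\|_F$ rather than some equivalent variant is exactly the only point of care here.
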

\begin{proof}
\begin{equation}
\|XYZ - \hat X\hat Y\hat Z\|_F = 
\|XYZ -  X Y \hat Z +  X Y \hat Z - \hat X\hat Y\hat Z\|_F 
\leq 
\|X\|_F\|Y\|_F\|Z - \hat Z\|_F + \|\hat Z\|_F\|XY - \hat X\hat Y\|_F    
\end{equation}
Focusing on $\|XY - \hat X\hat Y\|_F$ we have that
\[
\|XY - \hat X\hat Y\|_F = 
\|XY - \hat X Y  + \hat X Y - \hat X\hat Y\|_F
\leq 
\|X-\hat X\|_F\|Y\|_F + \|\hat X\|\|Y-\hat Y\|_F
\]
Combining the two bounds gives,
\[
\|XYZ - \hat X\hat Y\hat Z\|_F 
\leq 
\|X\|_F \|Y\|_F\|Z-\hat Z\|_F
+
\|\hat Z\|_F\|Y\|_F\|X-\hat X\|_F + \|\hat Z\|_F\|\hat X\|_F\|Y-\hat Y\|_F
\]
\end{proof}

\begin{lemma}\label{lem:bound_eps_A}
Let $S$ denote a rank one matrix and $\hat S$ its noisy estimate. We denote by $u,\hat u$ their respective leading left singular vectors. If $\|S - \hat S\|_F \leq 0.5\|S\|_F$
 then
\[
\|u u^T - \hat u\hat u^T\|_F^2 \leq \frac{50\|S-\hat S\|_F^2}{\|S\|_F^2}.
\]
\end{lemma}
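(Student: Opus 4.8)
\emph{Proof proposal.} The plan is to reduce the claim to a $\sin\Theta$ perturbation estimate for the leading eigenvector of $SS^T$, exploiting that $S$ has rank one so that its operator and Frobenius norms coincide. First I would observe that both sides of the asserted inequality, as well as the hypothesis $\|S-\hat S\|_F \le \tfrac12\|S\|_F$, are invariant under the simultaneous rescaling $S \mapsto tS$, $\hat S \mapsto t\hat S$ with $t>0$ (the singular vectors, and hence $uu^T,\hat u\hat u^T$, are unchanged); so I may assume $\|S\|_F = 1$ and set $\varepsilon := \|S-\hat S\|_F \le \tfrac12$. Since $S$ is rank one, $\sigma_1(S) = \|S\|_F = 1$ and $\sigma_2(S) = 0$, so Weyl's inequality for singular values together with $\|S-\hat S\| \le \|S-\hat S\|_F = \varepsilon$ gives $\hat\sigma_1 := \sigma_1(\hat S) \ge 1-\varepsilon \ge \tfrac12$ and $\hat\sigma_2 := \sigma_2(\hat S) \le \varepsilon \le \tfrac12$.

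Next I would pass to the positive semidefinite matrices $M := SS^T$ and $\hat M := \hat S\hat S^T$, whose leading eigenvectors are $u$ and $\hat u$ (so that $uu^T$ and $\hat u\hat u^T$ are the corresponding spectral projections, sign-independent). Here $M$ has spectrum $\{1,0,\dots,0\}$, while $\hat M$ has leading eigenvalue $\hat\sigma_1^2$ and all remaining eigenvalues at most $\hat\sigma_2^2 \le \tfrac14$. The perturbation is bounded by writing $M-\hat M = S(S-\hat S)^T + (S-\hat S)\hat S^T$, so that
\[
\|M-\hat M\| \;\le\; \|S\|\,\|S-\hat S\| + \|S-\hat S\|\,\|\hat S\| \;\le\; \varepsilon + \tfrac32\varepsilon \;=\; \tfrac52\varepsilon,
\]
where I use $\|S\| = \|S\|_F = 1$ (rank one) and $\|\hat S\| \le \|S\| + \|S-\hat S\| \le \tfrac32$.

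I would then establish the $\sin\Theta$ bound directly rather than quoting a specific form of Davis--Kahan. Let $\theta$ be the angle between $u$ and $\hat u$ and $P^\perp := I - \hat u\hat u^T$; then $\|P^\perp u\| = \sin\theta$, and a short computation gives the identity $\|uu^T - \hat u\hat u^T\|_F^2 = 2\sin^2\theta$. Since $Mu = u$ and $P^\perp\hat M = P^\perp\hat M P^\perp$ (because $\hat u$ is an eigenvector of $\hat M$), one obtains
\[
\sin\theta \;=\; \|P^\perp M u\| \;\le\; \|P^\perp(M-\hat M)u\| + \|P^\perp \hat M P^\perp u\| \;\le\; \tfrac52\varepsilon + \hat\sigma_2^2\sin\theta,
\]
using that $\hat M$ restricted to $\hat u^{\perp}$ has operator norm $\hat\sigma_2^2$. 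Since $1-\hat\sigma_2^2 \ge \tfrac12$, this yields $\sin\theta \le 5\varepsilon$, hence $\|uu^T-\hat u\hat u^T\|_F^2 = 2\sin^2\theta \le 50\varepsilon^2$; undoing the normalization gives $\|uu^T-\hat u\hat u^T\|_F^2 \le 50\,\|S-\hat S\|_F^2/\|S\|_F^2$, as claimed.

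I do not expect a serious obstacle: this is a routine matrix-perturbation computation. The one point that needs care is the identity $P^\perp\hat M = P^\perp\hat M P^\perp$ and the bound $\|P^\perp\hat M P^\perp\| \le \hat\sigma_2^2$ (the restriction of $\hat M$ to the orthogonal complement of its top eigenvector has operator norm equal to the second eigenvalue), and keeping track that rank-oneness of $S$ is exactly what makes $\|S\| = \|S\|_F$, which is essential for the constant to come out as small as $50$. If $\|S-\hat S\|_F = \tfrac12\|S\|_F$ exactly, $\hat M$ may have a repeated top eigenvalue, but then the right-hand side is at least $12.5$ while the left-hand side never exceeds $2$, so the inequality holds trivially and one may assume $\hat\sigma_1 > \hat\sigma_2$ throughout.
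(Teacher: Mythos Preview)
Your proof is correct. Both you and the paper reduce to the identity $\|uu^T-\hat u\hat u^T\|_F^2 = 2\sin^2\theta$ and then establish $\sin\theta \le 5\|S-\hat S\|_F/\|S\|_F$, but the routes to this sine bound differ. The paper invokes the singular-vector Davis--Kahan variant of Yu, Wang and Samworth (their Theorem~3), which directly gives
\[
\sin\theta \;\le\; \frac{2\bigl(2\sigma_1(S)+\|S-\hat S\|\bigr)\|S-\hat S\|}{\sigma_1^2(S)-\sigma_2^2(S)},
\]
and then plugs in $\sigma_1(S)=\|S\|_F$, $\sigma_2(S)=0$ (rank one) and the hypothesis $\|S-\hat S\|_F\le\tfrac12\|S\|_F$. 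You instead square to the Gram matrices $M=SS^T$, $\hat M=\hat S\hat S^T$, bound $\|M-\hat M\|\le\tfrac52\varepsilon$, and derive the eigenvector perturbation bound from scratch via the projection identity $P^\perp\hat M=P^\perp\hat M P^\perp$ together with $\|P^\perp\hat M P^\perp\|\le\hat\sigma_2^2$. Your argument is self-contained and avoids the external citation, at the cost of the extra step through the Gram matrix; the paper's is shorter but relies on quoting a fairly specific Davis--Kahan formulation. Both land on the same constant~$50$ (indeed, since $\hat\sigma_2^2\le\tfrac14$ you could have used $1-\hat\sigma_2^2\ge\tfrac34$ and obtained a slightly smaller constant, but that is not needed).
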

\begin{proof}
\begin{align}\label{eq:bound_outer_product_diff_b}
\|uu^T - \hat u\hat u^T\|_F^2 
&= 
\sum_{ij} (uu^T-\hat u\hat u^T)_{ij}^2 = 
\sum_{ij} (uu^T)_{ij}^2 + \sum_{ij}(\hat u\hat u^T)_{ij}^2 - 
2\sum_{ij} (uu^T)_{ij}(\hat u\hat u^T)_{ij}
\notag \\
&=\|u\|^4+\|\hat u\|^4 - 2\sum_{i} u_i \hat u_{i}
\sum_j
u_j \hat u_{j} =  
2(1-(u^T\hat u)^2) = 2\sin^2(u,\hat u).
\end{align}
We apply a variant of the Davis-Kahan theorem for non square matrices \cite[Theorem 3]{yu2015useful}. The perturbation of the leading singular vector is bounded by
\[
\sin(u,\hat u) \leq \frac{2(2\sigma_1(S) + \|S-\hat S\|)\|S-\hat S\|}{\sigma_1^2(S)-\sigma_2^2(S)},
\]
where $\sigma_1(S)$ and $\sigma_2(S)$ are the two leading singular values of $S$. Since 
$S$ is rank one, $\sigma_1(S) = \|S\| = \|S\|_F$ and $\sigma_2(S)=0$. In addition, we assumed that $\|S-\hat S\|_F \leq 0.5 \|S\|_F$ and hence
\begin{equation}\label{eq:sin_u_bound}
\sin(u, \hat u) \leq \frac{5\|S\|_F \|S - \hat S\|_F}{\|S\|_F^2} = \frac{5\|S - \hat S\|_F}{\|S\|_F}.
\end{equation}
Combining Eqs. \eqref{eq:bound_outer_product_diff_b}, \eqref{eq:sin_u_bound} concludes the proof.
\end{proof}

\begin{proof}[Proof of Lemma \ref{lem:d_err_bound} ]
Let $d(e)$ denote the score of the edge $e$ computed by the exact similarity matrix $S$ as defined in \eqref{eq:edge_score}. We denote by $\hat d(e)$ the score computed by the noisy estimate of the similarity $\hat S$.
The difference between $d(e)$ and $\hat d(e)$ is equal to
\begin{equation}\label{eq:d_diff}
|d(e) - \hat d(e)| = \left|\frac{\|S(A,B)-\alpha^\ast u_Au_B^T\|_F}{\|S(A,B)\|_F}-\frac{\|\hat S(A,B)-\beta^\ast \hat u_A\hat u_B^T\|_F}{\|\hat S(A,B)\|_F} \right|,    
\end{equation}
where,
\[
\alpha^\ast = \argmin_\alpha \|S(A,B)-\alpha u_Au_B^T\|_F \qquad 
\beta^\ast = \argmin_\beta \|\hat S(A,B)-\beta \hat  u_A\hat u_B^T\|_F.
\]
We apply Lemma \ref{lem:aux_ratio_bound} with 
\begin{align*}
X &= \|S(A,B)-\alpha^\ast u_Au_B^T\|_F, \quad y = \|S(A,B)\|_F, \quad 
\hat X = \|\hat S(A,B)-\beta^\ast \hat u_A\hat u_B^T\|_F, \quad \hat y = \|\hat S(A,B)\|_F,
\end{align*} 
where we note that here $X$ and $\hat X$ are scalars.    
Lemma \ref{lem:aux_ratio_bound} requires that $0<|\hat X| \leq \hat y$, which holds trivially.
Applying Lemma \ref{lem:aux_ratio_bound} to  \eqref{eq:d_diff}  yields,
\begin{align}
|d(e) - \hat d(e)| \leq \frac{1}{\|S(A,B)\|_F} 
\left( \Big| \|S(A,B)-\alpha^\ast u_Au_B^T\|_F \right.&-
\|\hat S(A,B)-\beta^\ast \hat u_A\hat u_B^T\|_F
\Big|
\notag \\
& 
\left.+ 
\big|\|S(A,B)\|_F - \|\hat S(A,B)\|_F \big|
\right).
\end{align}
Next, setting $X = S(A,B), \hat X = \hat S(A,B), Y = \alpha^\ast u_Au_B^T$ and $\hat Y = \beta^\ast \hat u_A \hat u_B^T$,  by Lemma \ref{lem:aux_matrix_diff_bound}, 
\begin{align}
|d(e) - \hat d(e)| 
&\leq 
\frac{1}{\|S(A,B)\|_F} \left(\| S(A,B)-\hat S(A,B)\|_F + \|\alpha^\ast u_Au_B^T -\beta^\ast \hat u_A\hat u_B^T\|_F\right. \notag \\
& \qquad \qquad \qquad \quad\left.+ \big|\|S(A,B)\|_F - \|\hat S(A,B)\|_F \big|\right) 
\notag \\
&\leq \frac{1}{\D} \left(2\| S(A,B)-\hat S(A,B)\|_F + \|\alpha^\ast u_Au_B^T -\beta^\ast \hat u_A\hat u_B^T\|_F\right).
\label{eq:d_est_diff}
\end{align}
where the second inequality is due to the reverse triangle inequality and the definition of $\D$.

We focus on the term $\|\alpha^\ast u_Au_B^T -\beta^\ast \hat u_A \hat u_B^T\|_F$ in Eq. \eqref{eq:d_est_diff}.
The values of $\alpha^\ast,\beta^\ast$ are obtained via least square between the elements of $S(A,B),\hat S(A,B)$ and $u_A u_B^T,\hat u_A \hat u_B^T$, respectively. For $\alpha^\ast$, the least squares solution is
\begin{equation}\label{eq:alpha_ast}
\alpha^\ast = \frac{1}{\|S(A,B)\|_F^2}\sum_{i,j} S(A,B)_{ij} (u_Au_B^T)_{ij} = \frac{1}{\|S(A,B)\|_F^2} u_A^T S(A,B)u_B,    
\end{equation}
where a similar expression holds for $\beta^\ast$.
Multiplying $\alpha^\ast$ and $\beta^\ast$ by $u_Au_B^T$ and $\hat u_A \hat u_B^T$ gives,
\begin{equation}\label{eq:alpha_uAuB-beta_uAuB}
    \alpha^\ast u_A u_B - \beta^\ast \hat u_A \hat u_B = \frac{1}{\|S(A,B)\|_F^2} u_A u_A^T S(A,B)u_B u_B^T - \frac{1}{\|\hat S(A,B)\|_F^2} \hat u_A \hat u_A^T \hat S(A,B)\hat u_B \hat u_B^T.    
\end{equation}
Next, we apply Lemma \ref{lem:aux_ratio_bound} with $X= u_A u_A^T S(A,B)u_B u_B^T$, $y = \| S(A,B)\|_F^2$, $\hat X= \hat u_A \hat u_A^T \hat S(A,B)\hat u_B \hat u_B^T$ and  $\hat y = \|\hat S(A,B)\|_F^2$. The condition for Lemma \ref{lem:aux_ratio_bound} is that $\|\hat X\|_F \leq \hat y$, which holds since
\[
\|\hat X\|_F = \|\hat u_A \hat u_A^T \hat S(A,B)\hat u_B \hat u_B^T\|_F \leq \|\hat u_A \hat u_A^T\|_F \|\hat S(A,B)\|_F \| \hat u_B \hat u_B^T\|_F \leq \|\hat S(A,B)\|_F = \hat y.
\]
Applying Lemma \ref{lem:aux_ratio_bound} to \eqref{eq:alpha_uAuB-beta_uAuB} gives
\begin{align} \label{eq:alpha_ua_ub-beta_ua_ub}
    \|\alpha^\ast u_A u_B - \beta^\ast \hat u_A \hat u_B\|_F \leq 
\frac{1}{\|S(A,B)\|^2_F}&
\left( \|u_A u_A^T S(A,B)u_B u_B^T  - \hat u_A \hat u_A^T \hat S(A,B)\hat u_B \hat u_B^T\|_F. \right.
\notag \\
&~~\left.+ \big|\|S(A,B)\|_F^2-\|\hat S(A,B)\|_F^2 \big|\right).
\end{align}
Denote 
\begin{align*}
\eps(A,B) = &\hat S(A,B) - S(A,B) && \eps(C_1,C_2) = \hat S(C_1,C_2) - S(C_1,C_2) \\
\eps_A = &\hat u_A \hat u_A^T - u_Au_A^T 
&&
\eps_B = \hat u_B \hat u_B^T - u_Bu_B^T.
\end{align*}
Equipped with the above notations, we bound the first term in the numerator of Eq. \eqref{eq:alpha_ua_ub-beta_ua_ub} using Lemma \ref{lem:aux_product_bound} where $X = u_A u_A^T$, $Y = S(A,B)$, and $Z = u_B u_B^T$,
\begin{align*}
\|u_A u_A^T &S(A,B)u_B u_B^T  - \hat u_A \hat u_A^T \hat S(A,B)\hat u_B \hat u_B^T\|_F 
\notag \\
&\leq \|u_A u_A^T\|_F \|S(A,B)\|_F\|\eps_B\|_F
+
\|\hat u_B \hat u_B^T\|_F\|S(A,B)\|_F\|\eps_A\|_F + \|\hat u_B\hat u_B^T\|_F\|\hat u_A \hat u_A^T\|_F\|\eps(A,B)\|_F
\end{align*}
Since $\|u_A u_A^T\|_F, \|\hat u_B \hat u_B^T\|_F \leq 1$ we get,
\begin{equation}\label{eq:numerator_expression}
\|u_A u_A^T S(A,B)u_B u_B^T  - \hat u_A \hat u_A^T \hat S(A,B)\hat u_B \hat u_B^T\|_F 
\leq 
\|S(A,B)\|_F(\|\eps_A\|_F+\|\eps_B\|_F)+\|\eps(A,B)\|_F.
\end{equation}

The matrices $\eps_A,\eps_B$ are submatrices of $\hat u \hat u^T-uu^T$ and hence $\|\eps_A\|_F,\|\eps_B\|_F \leq \|\hat u \hat u^T-uu^T\|_F$.
Applying Lemma \ref{lem:bound_eps_A} gives 
\begin{equation}\label{eq:lemma_c_2}
\|\eps_A\|_F + \|\eps_B\|_F \leq 2\|uu^T-\hat u \hat u^T\|_F \leq \frac{10\sqrt{2} \|\eps(C_1,C_2)\|_F}{\|S(C_1,C_2)\|_F} \leq 
\frac{10\sqrt{2} \|\eps(C_1,C_2)\|_F}{\D}.
\end{equation}
Combining Eqs. \eqref{eq:d_est_diff},   \eqref{eq:alpha_ua_ub-beta_ua_ub},\eqref{eq:numerator_expression} and \eqref{eq:lemma_c_2} yields 
\begin{multline}
    \label{eq:bound_diff_score}  
    |d(e) - \hat d(e)| \leq 
    \frac{1}{\D} \Bigg(2\|\eps(A,B)\|_F 
    + \frac{1}{\|S(A,B)\|_F^2} \\
    \left.\left(\big|\|S(A,B)\|_F^2-\|\hat S(A,B)\|_F^2 \big| +  \|\eps(A,B)\|_F + \frac{10\sqrt{2} \|S(A,B)\|_F \|\eps(C_1,C_2)\|_F}{\D}
\right)\right).
\end{multline} 
We have that
\begin{align}\label{eq:square_diff}
\big| \|S(A,B)\|_F^2-\|\hat S(A,B)\|_F^2 \big| &= \big| \|S(A,B)\|_F-\|\hat S(A,B)\|_F\big|(\|S(A,B)\|_F+\|\hat S(A,B)\|_F) 
\notag \\
&\leq 2.5 \|\eps(A,B)\|_F \|S(A,B)\|_F,
\end{align}
where the inequality is due to the reverse triangle inequality and our assumption $\|\eps(A,B)\|_F\leq 0.5\|S(A,B)\|_F$ which implies $\|\hat S(A,B)\|_F \leq 1.5 \|S(A,B)\|_F$.
Combining \eqref{eq:bound_diff_score} and \eqref{eq:square_diff}, we get
\begin{equation*}
\begin{aligned}
 |d(e)-\hat d(e)| &\leq 
 \frac{1}{\D} \left( 2\|\eps(A,B)\|_F 
 +\frac{1}{\|S(A,B)\|_F^2} \right. \times 
 \\
 & \qquad 
\left.\left(  \|\eps(A,B)\|_F\left(2.5 \|S(A,B)\|_F +1\right)  + \frac{10\sqrt{2}  \|\eps(C_1,C_2)\|_F\|S(A,B)\|_F}{\D} \right)\right) 
\\
&\leq  \|\eps(A,B)\|_F \left(\frac{2}{\D} + \frac{2.5}{\D^2} + \frac{1}{\D^3} \right) + \|\eps(C_1,C_2)\|_F\frac{10\sqrt{2}}{\D^3}
\\
& \leq  \|S-\hat S\|_F \left(\frac{2}{\D}+\frac{2.5}{\D^2} +  \frac{1+10\sqrt{2}}{\D^3}  \right) ,
\end{aligned}
\end{equation*}
which concludes the proof.
\end{proof}

\section{Additional Simulation Results}\label{appendix:experimental_results}
\subsection{Caterpillar tree}\label{subsec:experiment_caterpiller}

We generated a caterpillar tree with $m=512$ terminal nodes, where the non-terminal nodes form a path graph. 
The similarity between each pair of adjacent nodes was set to $\delta = 0.81$. As in Section \ref{sec:experiments}, we compare NJ, SNJ and RAxML, with STDR where the aforementioned methods are used as subroutines. The STDR threshold is set to $\tau = 64$ for all three STDR variants.
Figure \ref{fig:caterpillar_performance} shows the normalized RF distance (left) and runtime (right) of the different methods as functions of the sequence length $n$. Here, all three methods are significantly improved when combined with STDR in both runtime and accuracy. 

\begin{figure}[ht] 
    \centering
    \includegraphics[width = \textwidth]{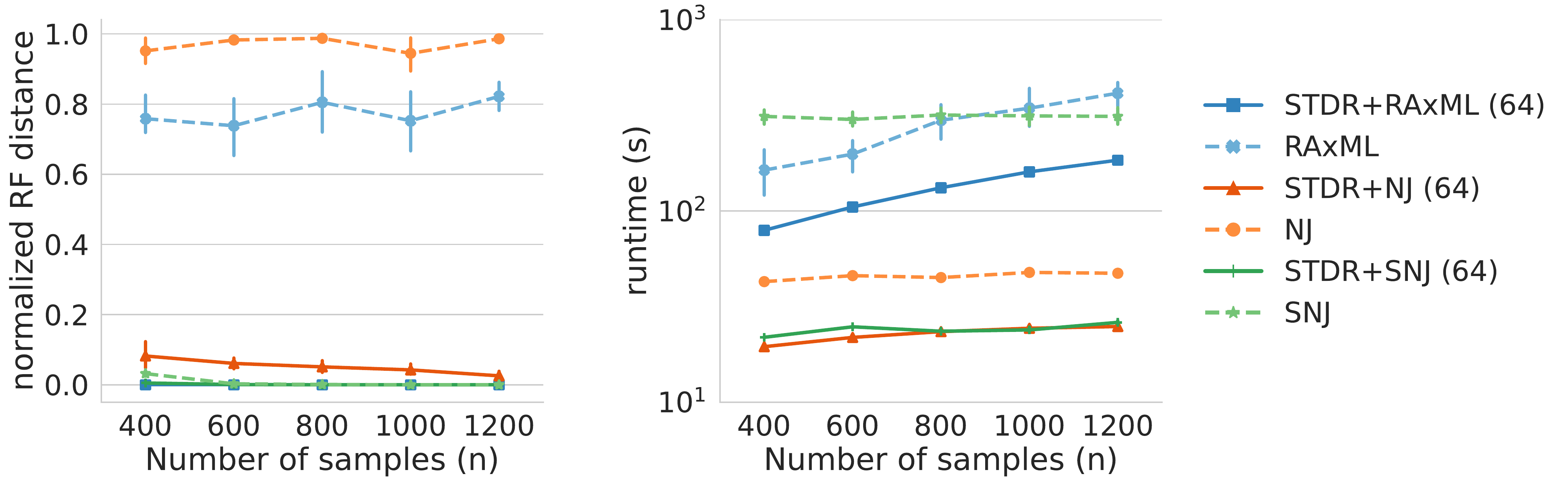}
    \caption{A caterpillar tree with $m=512$  terminal nodes. The mean and standard deviation of the runtime (right) and RF distance between the reconstructed tree and the input tree (left) are shown for each method over 5 independent runs. }
    \label{fig:caterpillar_performance}
\end{figure}

\subsection{Comparison to TreeMerge}
We generated random trees with $2000$ terminal nodes according to the coalescent model. The trees were recursively partitioned by STDR with a threshold of $\tau=128$. The structure of the different partitions was recovered by RAxML. We compared STDR's merging criteria with TreeMerge \cite{molloy2019treemerge} for various sequence lengths. The results are shown in Figure \ref{fig:coalesent_treemerge_performance}. The merging process of STDR achieved better accuracy than TreeMerge, with a significantly reduced runtime.

\begin{figure}[ht]
    \centering
    \includegraphics[width =\textwidth]{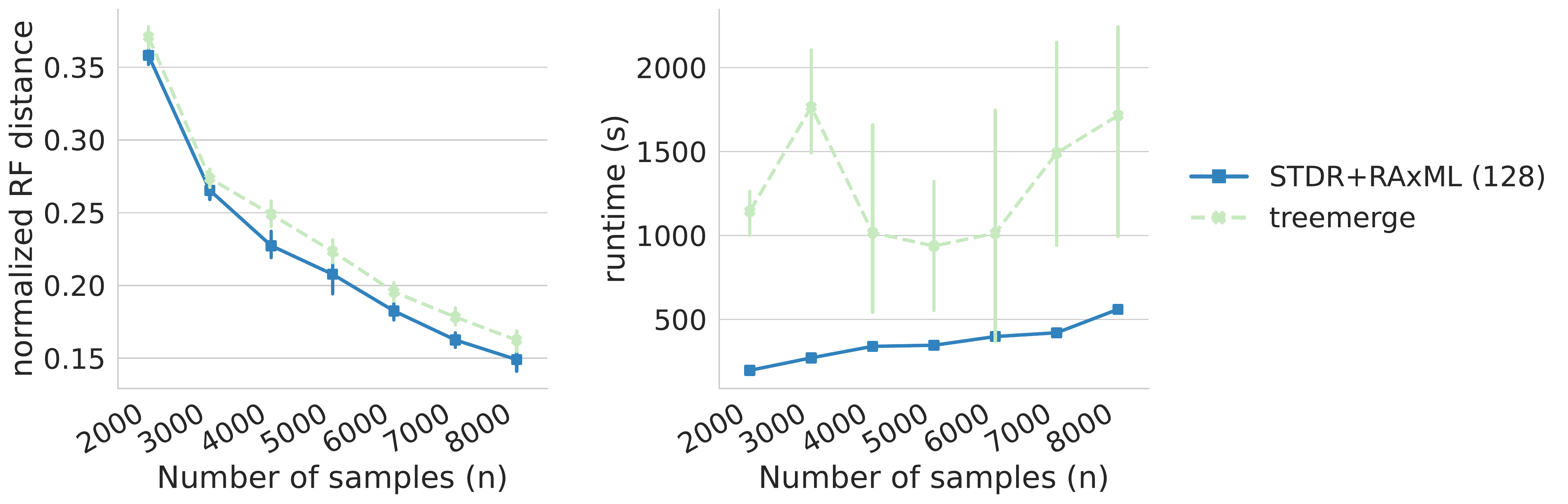}
    \caption{A coalesent tree with $m=2000$ terminal nodes. The mean and standard deviation of the normalized RF distance (left) between the reconstructed tree and the input tree and of the runtime (right) are shown for each method over 5 independent runs. }
    \label{fig:coalesent_treemerge_performance}
\end{figure}




\end{appendices}

\bibliography{references.bib}{}
\bibliographystyle{plain}

\end{document}